\documentclass{article}

\usepackage[utf8]{inputenc}
\usepackage[T1]{fontenc}
\usepackage{authblk}
\usepackage{amsmath,amssymb,amsthm}
\usepackage{graphicx}           
\usepackage{array}
\usepackage{comment}
\usepackage{xcolor}              
\usepackage{psfrag}
\usepackage{enumerate}
\usepackage{caption}

\usepackage{float}                
\usepackage{algorithm}            
\usepackage{algorithmic}          
\usepackage{tikz}                 
\usetikzlibrary{arrows.meta,positioning,shadows,calc,shapes.geometric}

\usepackage[hidelinks]{hyperref}

\usepackage{natbib}               
\newenvironment{keywords}{\par\medskip\noindent\textbf{Keywords:}\hspace{0.5em}}{\par\medskip}

\providecommand{\acks}[1]{\section*{Acknowledgments}#1}

\IfFileExists{grffile.sty}{\usepackage{grffile}}{}
\newcommand{\safeincludegraphics}[2][]{%
  \IfFileExists{#2}{\includegraphics[#1]{#2}}%
  {\fbox{\parbox{0.6\linewidth}{\centering\itshape Image manquante:\\\texttt{\detokenize{#2}}}}}%
}

\begin{document}

\title{\bf Human-Centered Learning Mechanics: A Dynamical Framework for Entropy-Regulated Representation Learning}

\author[1,2]{Kim Phuc Tran}
\affil[1]{Univ. Lille, ENSAIT, ULR 2461 -- GEMTEX -- G\'enie et Mat\'eriaux Textiles, F-59000 Lille, France}
\affil[2]{International Chair in DS \& XAI, International Research Institute for Artificial Intelligence and Data Science, Dong A University, Danang, Vietnam}

\renewcommand{\thefootnote}{\fnsymbol{footnote}}
\footnotetext[2]{Corresponding author: Kim Phuc Tran, Email: kim-phuc.tran@ensait.fr}
\renewcommand{\thefootnote}{\arabic{footnote}}
\setcounter{footnote}{0}

\maketitle

\begin{abstract}
While deep learning is increasingly understood as a dynamical process in parameter space, many existing perspectives primarily model training as a closed optimization system. This closed-system view is insufficient for real-world artificial intelligence, where models operate under uncertainty, resource constraints, distribution shift, downstream decision risk, and continuous human feedback. To address this gap, we propose \emph{Human-Centered Learning Mechanics} (HCLM), a dynamical and information-theoretic framework for studying open and controlled learning systems. The central premise of HCLM is that entropy regularization is not beneficial by mere inclusion in the objective. It becomes meaningful only when the chosen entropy surrogate induces a non-degenerate \emph{information force} along the optimization trajectory. Naively defined entropy penalties may produce negligible, unstable, or poorly aligned gradients, in which case entropy-regularized dynamics effectively collapse to ordinary loss minimization. We therefore introduce the notion of \emph{effective entropy} and study tractable geometric entropy surrogates, including variance-based and log-determinant covariance proxies. The paper makes three concrete contributions. First, we formalize entropy regularization through effective information force and characterize degenerate entropy regimes. Second, we derive convergence, entropy-flow, Wasserstein-gradient-flow, and noisy-representation generalization results under explicit assumptions. Third, we provide a conditional dynamical interpretation of scaling-law-like behavior through the balance between information injection, entropy dissipation, and residual risk. This interpretation is not intended as an unconditional derivation of empirical neural scaling laws. Empirical analyses on controlled representation-learning tasks support the hypothesis that geometric entropy surrogates, especially log-determinant covariance entropy, induce stronger and more stable information forces than softmax-normalized entropy.
\end{abstract}

\begin{keywords}
Human-Centered Learning Mechanics, entropy-regulated learning, representation learning, information force, noisy representation compression, Wasserstein gradient flow, controlled learning dynamics
\end{keywords}

\clearpage

\section{Introduction}

Deep learning has significantly advanced science, engineering, and industry. Despite this progress, the theoretical understanding of how these systems learn remains incomplete. In practice, neural networks are predominantly trained using empirical heuristics, large-scale hyperparameter tuning, and extensive trial-and-error engineering. A growing body of theoretical work argues that deep learning is a physical phenomenon that admits a rigorous scientific theory, moving beyond empirical heuristics. This emerging paradigm is increasingly referred to as the \emph{mechanics of learning} \citep{mei2018mean, chizat2018global}. Within this mechanics viewpoint, the training of a neural network is modeled as a continuous dynamical system in a high-dimensional parameter space. A model evolves along a trajectory driven by gradient vectors, much like a physical particle moving through an energy landscape under the influence of conservative forces. The network's architecture, the statistical structure of the data distribution, the task objective, and the specific learning rule collectively determine the effective forces acting on the model parameters. This closed-system perspective has successfully explained several universal phenomena, such as the edge-of-stability in gradient descent \citet{cohen2021gradient} and the infinite-width mean-field limits \citep{jacot2018neural}. This closed-system mechanics, while useful, is incomplete when applied to modern real-world Artificial Intelligence (AI) systems. Real-world AI does not operate in an isolated vacuum. These systems are deployed in open environments characterized by deep uncertainty, partial observability, and constant distributional shifts. They are subject to strict latency constraints, privacy restrictions, energy budgets, safety guardrails, and continuous human decision processes. In safety-critical, industrial, and human-centered settings---such as autonomous driving, medical diagnostics, or Large Language Models (LLMs) aligned via human feedback---learning evolves from minimizing a static prediction error to a controlled information process embedded within a broader decision loop.\\

To bridge this gap, this paper develops \emph{Human-Centered Learning Mechanics} (HCLM), a unified theoretical framework wherein learning is explicitly treated as entropy-constrained dynamics under practical constraints and active human control. By reformulating deep learning as an open, entropy-regulated thermodynamic system, we establish the central principle of HCLM: \emph{learning is controlled entropy shaping under uncertainty and constraints}. A critical departure of HCLM from classical information-theoretic learning \citep{tishby2000information} is the operationalization of entropy. In traditional literature, exact representation entropy is often treated as an abstract, intractable quantity used primarily for post-hoc bounding. In contrast, HCLM introduces tractable entropy surrogates that actively participate in the optimization process. The goal of this work is not to replace stochastic gradient descent (SGD) or adaptive optimizers like Adam, but to provide a cohesive framework that explains, quantifies, and dynamically regulates the flow of information during learning. We find that simply appending an entropy regularization term to the objective function is insufficient. Standard entropy regularizers are often dynamically inactive, leading to degenerate optimization. To resolve this, we introduce the concept of \emph{effective entropy}, demonstrating that an entropy surrogate is dynamically useful only if it induces a non-degenerate, measurable \emph{information force} that actively sculpts the geometry of the representation space. Additionally, HCLM provides a dynamical interpretation of human or reward-based feedback---including signals used in Reinforcement Learning from Human Feedback (RLHF) \citep{ouyang2022training}---as a thermodynamic control mechanism that can regulate the rate of entropy dissipation. By synthesizing task-driven information injection with controlled entropy dissipation, this framework provides a mechanistic and mathematically explicit perspective on generalization, representation compression, and human-aligned adaptation. Rather than claiming that scaling laws follow unconditionally from entropy dynamics, we show that power-law behavior can be recovered under explicit balance assumptions linking information injection, dissipation, and excess risk. In this sense, HCLM should be understood as a controlled-dynamics framework that clarifies when entropy regulation can support stable learning, rather than as a replacement for existing optimization or statistical learning theory. This paper should be read as a mechanistic study rather than as a claim of universal superiority over existing learning algorithms. Our goal is to isolate a basic dynamical phenomenon: entropy affects learning only when it generates an effective force. Controlled synthetic experiments are therefore used deliberately, because they make information-forced degeneracy, collapse, and stabilization observable without the confounding effects of large-scale architectures, data augmentation, or optimizer engineering. In particular, HCLM does not claim that entropy regularization alone
explains generalization, alignment, or neural scaling laws. Rather, it isolates a specific dynamical mechanism: learning is affected by entropy only when entropy induces a non-degenerate force along the optimization trajectory.\\

The paper is organized as follows. Section~2 reviews related literature. Section~3 introduces the HCLM conceptual architecture. Section~4 defines the practical energy formulation and stochastic representation surrogates. Section~5 defines effective entropy and the information force. Section~6 establishes the theoretical foundations of HCLM, including convergence guarantees, entropy-flow identities, representation-compression generalization, and Wasserstein gradient-flow formulations. Section~7 provides a conditional mechanistic interpretation of neural scaling laws. Section~8 presents the empirical analysis. Detailed mathematical proofs are deferred to the Appendix.

\section{Related Work}

The HCLM framework lies at the intersection of learning mechanics, information-theoretic learning, PAC-Bayes generalization, mean-field dynamics, sharpness-based analysis, and human-in-the-loop learning. The mechanics view studies neural network training through dynamical systems, limiting regimes, and statistical-physics-inspired principles. Neural Tangent Kernel theory explains the training dynamics of infinitely wide networks by reducing nonlinear optimization to an approximately linear evolution in function space \citep{jacot2018neural}, while studies of optimization geometry reveal phenomena such as the edge of stability, where gradient-based learning operates near the boundary of stable curvature \citep{cohen2021gradient}. These approaches provide powerful tools for understanding training as a dynamical process, but they primarily describe closed learning systems governed by fixed losses, fixed data distributions, and fixed optimization rules. HCLM builds on this trajectory-based perspective while extending it to open and controlled systems by introducing entropy dissipation, representation geometry, decision constraints, and human or reward-based feedback as explicit components of the learning dynamics.

A closely related line of work studies generalization through robustness to perturbations, sharpness, and flatness of the loss landscape. PAC-Bayes analyses have been used to compute nonvacuous generalization bounds for deep stochastic neural networks and to derive spectrally normalized margin bounds for neural networks \citep{dziugaite2017computing,neyshabur2018pac}. Sharpness-Aware Minimization explicitly optimizes for solutions that remain robust within local neighborhoods of parameter space \citep{foret2021sharpness}. HCLM is complementary to these approaches: whereas sharpness and PAC-Bayes flatness mainly characterize stability in parameter space, HCLM focuses on entropy-induced information forces acting on representation geometry. In this sense, HCLM can be viewed as a representation-geometric counterpart to sharpness-based generalization analysis, with the important distinction that it studies whether a chosen entropy surrogate actively shapes the learning trajectory rather than merely measuring complexity after training.

Information-theoretic learning has classically been studied through the Information Bottleneck principle, which formulates learning as a trade-off between compressing the input and preserving information relevant to the prediction target \citep{tishby2000information}. Related approaches include variational information bottlenecks \citep{alemi2016deep} and empirical analyses of compression in deep networks \citep{shwartz2017opening}. However, exact mutual information is difficult to estimate reliably in high-dimensional deterministic neural networks, and an information quantity may be mathematically meaningful while remaining dynamically inactive if it does not generate a useful gradient during training. HCLM therefore departs from standard information-bottleneck formulations by embedding tractable geometric entropy surrogates directly into the learning dynamics. Instead of treating information only as a static complexity measure, HCLM analyzes its temporal injection, dissipation, and induced information force along the optimization trajectory.

PAC-Bayes theory relates generalization to the divergence between a learned posterior distribution over predictors and a prior distribution \citep{mcallester1999pac,catoni2007pac}. Recent work has shown that PAC-Bayes bounds can become nonvacuous for deep networks when the posterior and prior are carefully constructed \citep{dziugaite2017computing}. A complementary line of work derives generalization bounds using mutual information between the training data and the learned hypothesis \citep{russo2016controlling,xu2017information,bu2020tightening}. HCLM is aligned with this second perspective but avoids assuming a direct equivalence between parameter-space PAC-Bayes complexity and representation entropy. Instead, it uses noisy representation compression as a more explicit route from geometric entropy control to generalization. This distinction is central: HCLM does not claim that lower entropy automatically improves generalization. Rather, it asks whether the entropy surrogate induces a measurable information force and whether this force regulates representation geometry in a way that supports stable prediction.

Mean-field theories model wide neural networks as probability measures evolving over parameter space \citep{mei2018mean}. This connects neural network training to optimal transport and Wasserstein gradient flows, where learning can be viewed as the evolution of a probability distribution minimizing a free-energy functional \citep{chizat2018global}. The variational formulation of Fokker--Planck equations as Wasserstein gradient flows was established by Jordan, Kinderlehrer, and Otto \citep{jordan1998variational}. Stochastic optimization methods such as Stochastic Gradient Langevin Dynamics further connect learning dynamics with Fokker--Planck equations and non-equilibrium thermodynamics \citep{raginsky2017non}. HCLM builds on this line of work by introducing entropy-controlled and human-controlled terms into the free-energy formulation, allowing learning to be interpreted as a dissipative stochastic process in which prediction loss, entropy production, representation compression, and external control jointly shape the evolution of the system.

Finally, HCLM is related to decision-aware and human-in-the-loop learning. Standard empirical risk minimization assumes that prediction accuracy is the primary goal, whereas many real-world systems use predictions only as intermediate quantities for downstream decisions. Decision-focused learning addresses this issue by integrating decision objectives, such as cost, safety, or operational constraints, directly into training \citep{donti2017task}. Human feedback has also become central to the alignment of large-scale models. Reinforcement Learning from Human Feedback trains reward models from human preferences and uses them to guide policy optimization \citep{christiano2017rlhf,ouyang2022training}. While RLHF is practically successful, its interpretation as a dynamical process remains less developed. HCLM provides a complementary view: human or reward feedback can be interpreted as a control signal that regulates the information-force equilibrium of learning dynamics. Importantly, the thermostat mechanism proposed in HCLM should not be understood as a replacement for RLHF. Classical RLHF remains necessary for learning preference or reward models. HCLM instead clarifies how such reward signals may regulate the rate of entropy dissipation through an adaptive coefficient \(\beta_t\). In this view, human feedback does not need to directly perturb high-dimensional model parameters; it can act indirectly by modulating the thermodynamic balance between information injection and entropy dissipation. Overall, HCLM differs from prior information-theoretic, PAC-Bayes, sharpness-based, and human-feedback approaches by treating entropy not as an automatically useful regularizer, but as a dynamical object whose usefulness depends on the force it induces during optimization. This makes HCLM a diagnostic and mechanistic framework for controlled learning dynamics, rather than a universal complexity bound or a replacement for existing optimization and alignment methods.

\section{The Human-Centered Learning Mechanics (HCLM) Framework}
\label{sec:hclm_framework}

We begin by outlining the conceptual architecture of Human-Centered Learning Mechanics (HCLM).
Classical deep learning relies predominantly on empirical risk minimization (ERM), where the
objective is to find a set of parameters that minimizes a scalar prediction error over a training
dataset. This view is consistent with standard optimization-based interpretations of neural-network
training, including kernel, mean-field, and gradient-flow perspectives
\citep{jacot2018neural,mei2018mean,chizat2018global}. In contrast, HCLM reformulates learning as
the search for a dynamic equilibrium within an open, entropy-regulated learning system. This
perspective is related to information-theoretic learning and stochastic dynamical views of
optimization, but differs by treating entropy as a controlled trajectory-level force rather than only
as a static complexity or compression measure
\citep{tishby2000information,raginsky2017non,jordan1998variational}.\\

In our framework, the continuous learning process is governed by four interacting conceptual
forces:

\begin{enumerate}
    \item \textbf{Task-driven information injection.}
    The standard predictive loss acts as an information-injection mechanism. It forces the model to
    absorb task-relevant structure from the training data and drives the formation of hidden
    representations. Without additional regulation, this injection may also promote excessive
    representation expansion, memorization, and sensitivity to spurious sample-specific patterns.

    \item \textbf{Entropy-induced dissipation.}
    HCLM introduces an explicit entropy-based dissipative mechanism to regulate the geometry of
    the learned representation. The objective is not to add entropy for its own sake, but to identify
    entropy surrogates whose gradients generate a non-degenerate information force along the
    optimization trajectory. In this sense, entropy is useful only when it actively shapes
    representation dynamics.

    \item \textbf{Structural constraints.}
    Priors regarding geometry, sparsity, modularity, invariance, smoothness, or interpretability act
    as structural forces. These constraints shape the admissible learning trajectory and connect
    HCLM to explainable and constrained learning systems, where prediction accuracy alone is not
    sufficient for reliable deployment.

    \item \textbf{Human-in-the-loop control.}
    Human feedback---whether through explicit preference labels, safety guardrails, expert
    corrections, or reinforcement-learning rewards---is modeled as an external control input. This
    view is consistent with recent human-feedback paradigms in reinforcement learning and
    language-model alignment \citep{christiano2017rlhf,ouyang2022training}. In HCLM, such
    feedback does not need to perturb high-dimensional model parameters directly; it may instead
    regulate the balance between information injection and entropy dissipation.
\end{enumerate}

By viewing learning through this mechanistic lens, HCLM bridges the gap between microscopic
parameter updates and the macroscopic behavior of representation geometry. The framework does
not replace empirical risk minimization, PAC-Bayes theory, reinforcement learning, or RLHF
\citep{mcallester1999pac,catoni2007pac,christiano2017rlhf,ouyang2022training}. Rather, it provides
a dynamical layer describing how prediction, entropy dissipation, structural constraints, and feedback
interact during optimization.\\

\begin{figure}[t]
\centering
\begin{tikzpicture}[
    >=Stealth,
    node distance=2.5cm and 3cm,
    font=\small\sffamily,
    center_node/.style={circle, draw=black!70, thick, fill=blue!5, minimum size=3.8cm, align=center, drop shadow},
    external_box/.style={rectangle, draw=black!70, thick, fill=gray!5, rounded corners, minimum width=2.5cm, minimum height=1.5cm, align=center, drop shadow},
    constraint_box/.style={rectangle, draw=black!70, thick, fill=green!5, rounded corners, minimum width=6cm, minimum height=1cm, align=center, drop shadow},
    arrow_inj/.style={->, thick, draw=red!80!black, line width=1.5pt},
    arrow_dis/.style={->, thick, draw=blue!80!black, line width=1.5pt},
    arrow_ctrl/.style={->, thick, dashed, draw=orange!90!black, line width=1.5pt}
]
\node[center_node] (manifold) {Representation\\Manifold ($Z_\theta$)\\[0.3cm]\textbf{\textit{Dynamic Equilibrium}}};
\node[external_box, left=of manifold] (task) {Task \& Data\\Environment};
\node[external_box, right=of manifold] (entropy) {Information\\Sink};
\node[external_box, above=of manifold, yshift=0.5cm] (human) {Human Feedback\\(Alignment)};
\node[constraint_box, below=of manifold, yshift=-0.5cm] (constraints) {Structural Constraints $\Omega(\theta)$\\ \textit{(Priors, Architecture, Sparsity)}};

\draw[arrow_inj] (task) -- node[above, align=center, text=red!80!black] {\textbf{Information Injection}} node[below, text=red!80!black] {$\mathcal{L}_{\mathrm{pred}}$} (manifold);
\draw[arrow_dis] (manifold) -- node[above, align=center, text=blue!80!black] {\textbf{Entropy Dissipation}} node[below, text=blue!80!black] {$\beta \widetilde{\mathcal{H}}(Z_\theta)$} (entropy);
\draw[arrow_ctrl] (human) -- node[right, align=left, text=orange!90!black, xshift=0.1cm] {\textbf{Thermodynamic Control}\\Steering \& Calibration\\$\mathcal{R}_{\mathrm{human}}$} (manifold);

\draw[thick, black!50, dotted] (constraints.north west) -- (manifold.south west);
\draw[thick, black!50, dotted] (constraints.north east) -- (manifold.south east);
\draw[thick, black!50, dotted] (constraints.north) -- (manifold.south);
\end{tikzpicture}
\caption{The Human-Centered Learning Mechanics (HCLM) conceptual framework. Learning is modeled as an open controlled system governed by task-driven information injection, entropy-induced dissipation, structural constraints, and human-in-the-loop feedback.}
\label{fig:hclm_conceptual}
\end{figure}

The central premise of HCLM is that learning is not merely the minimization of a fixed prediction
loss, but a controlled process of entropy shaping under uncertainty and constraints. As illustrated
in Figure~\ref{fig:hclm_conceptual}, the predictive objective injects information into the
representation space, while entropy dissipation regulates the expansion or compression of this space.
Structural constraints guide the trajectory toward admissible and interpretable solutions, and human
or reward-based feedback provides an external control signal that can calibrate the strength of
entropy regulation.\\

This formulation is intentionally mechanism-oriented. Its objective is not to derive universal laws of
deep learning, nor to claim that entropy regularization automatically improves generalization or
alignment. Instead, HCLM identifies observable dynamical regimes: entropy-force degeneracy,
unstable representation expansion, controlled information-force collapse, and feedback-regulated
stabilization. These regimes motivate the practical energy formulation and entropy surrogates
introduced in the next section.

\section{Practical HCLM Energy and Stochastic Representation Surrogates}

Let \(f(x;\theta)\) be a neural network parameterized by \(\theta\in\mathbb{R}^d\), and let
\[
Z_\theta = h_\theta(X)
\]
denote a hidden representation induced by an input random variable \(X\sim P_X\). Since deterministic high-dimensional representations may have singular or ill-defined differential entropy, we introduce a noisy representation
\begin{equation}
\label{eq:noisy_representation}
\widetilde{Z}_\theta = h_\theta(X) + \xi,
\qquad
\xi\sim\mathcal{N}(0,\sigma_\xi^2 I),
\end{equation}
where \(\xi\) is independent of \(X\). This stochastic representation makes entropy and mutual-information quantities well-defined and prevents degenerate differential-entropy pathologies.

The practical HCLM energy is defined as
\begin{equation}
\label{eq:hclm_energy}
\mathcal{F}(\theta)
=
\mathcal{L}_{\mathrm{pred}}(\theta)
+
\beta \widetilde{\mathcal{H}}(\widetilde{Z}_\theta)
+
\gamma \Omega(\theta)
+
\lambda\mathcal{R}_{\mathrm{dec}}(\theta),
\end{equation}
where \(\mathcal{L}_{\mathrm{pred}}\) is the predictive loss, \(\widetilde{\mathcal{H}}(\widetilde{Z}_\theta)\) is a tractable entropy surrogate, \(\Omega(\theta)\) encodes structural constraints, and \(\mathcal{R}_{\mathrm{dec}}\) captures decision-aware risk.

A central design requirement is that the entropy surrogate should reflect the geometry of the representation rather than merely the marginal dispersion of normalized activations. For a mini-batch representation matrix \(Z\in\mathbb{R}^{B\times p}\), with empirical covariance
\[
\widehat{\Sigma}_Z
=
\frac{1}{B-1}(Z-\bar{Z})^\top(Z-\bar{Z}),
\]
we consider the regularized log-determinant surrogate
\begin{equation}
\label{eq:logdet_entropy}
\widetilde{\mathcal{H}}_{\mathrm{logdet}}(Z)
=
\frac{1}{2}\log\det\left(\widehat{\Sigma}_Z+\epsilon I\right),
\end{equation}
where \(\epsilon>0\) ensures numerical stability. This quantity is motivated by the Gaussian maximum-entropy identity
\[
H(U)\leq \frac{1}{2}\log\det(2\pi e\,\Sigma_U),
\]
with equality when \(U\) is Gaussian. Hence, the log-determinant surrogate should be interpreted not as exact entropy, but as a geometry-sensitive upper proxy for representation volume. The log-determinant surrogate is particularly useful because it captures multivariate representation volume while remaining differentiable and computationally tractable for moderate representation dimensions. Unlike marginal variance penalties, it is sensitive to correlation structure; unlike softmax-normalized activation entropy, it does not erase geometric scale through normalization. For this reason, it is a natural candidate for studying whether entropy can generate a non-degenerate information force in representation space.

We also consider a simpler variance surrogate
\begin{equation}
\label{eq:variance_entropy}
\widetilde{\mathcal{H}}_{\mathrm{var}}(Z)
=
\frac{1}{p}\mathrm{tr}(\widehat{\Sigma}_Z),
\end{equation}
which controls marginal dispersion but does not capture correlation structure. By contrast, softmax entropy applied to activations may be dynamically weak because normalization can erase geometric scale and induce gradients that are poorly aligned with representation-volume control.

\newtheorem{remark}{Remark}
\newtheorem{definition}{Definition}
\newtheorem{proposition}{Proposition}
\newtheorem{theorem}{Theorem}
\newtheorem{assumption}{Assumption}
\newtheorem{corollary}{Corollary}

\begin{remark}[Surrogates versus true entropy]
Throughout this paper, \(\widetilde{\mathcal{H}}\) denotes a differentiable surrogate, not necessarily the exact differential entropy. The relevant question for HCLM is therefore not whether \(\widetilde{\mathcal{H}}\) is an unbiased entropy estimator, but whether its gradient induces a stable and nondegenerate information force that shapes representation geometry during training.
\end{remark}
To make the HCLM framework operational, we now introduce a generic entropy-regulated learning procedure. The objective of this algorithm is not to define a new optimizer replacing SGD or Adam, but to formalize how entropy dissipation, information force, and adaptive feedback can be integrated into the training dynamics. In particular, the adaptive thermostat coefficient allows external feedback or reward signals to regulate the balance between information injection and entropy dissipation during optimization.

\begin{algorithm}[H]
\caption{Entropy-Regulated Human-Centered Learning Dynamics (ER-HCLM)}
\label{alg:hclm_thermostat}
\begin{algorithmic}[1]

\STATE Initialize parameters $\theta_0$
\STATE Initialize thermostat coefficient $\beta_0 > 0$
\STATE Choose entropy surrogate $\widetilde{\mathcal{H}}$
\STATE Choose learning rate $\eta$

\FOR{$t=0,\dots,T-1$}

    \STATE Sample mini-batch $\mathcal{B}_t$

    \STATE Compute hidden representations
    \[
    Z_t = h_{\theta_t}(X_t)
    \]

    \STATE Construct noisy representations
    \[
    \widetilde{Z}_t = Z_t + \xi_t,
    \qquad
    \xi_t \sim \mathcal{N}(0,\sigma_\xi^2 I)
    \]

    \STATE Compute predictive loss
    \[
    \mathcal{L}_{\mathrm{pred}}(\theta_t)
    \]

    \STATE Compute entropy surrogate
    \[
    H_t
    =
    \widetilde{\mathcal{H}}(\widetilde{Z}_t)
    \]

    \STATE Compute information force
    \[
    G_t
    =
    \left\|
    \nabla_\theta H_t
    \right\|
    \]

    \STATE Compute structural and decision-aware penalties
    \[
    \Omega(\theta_t),
    \qquad
    \mathcal{R}_{\mathrm{dec}}(\theta_t)
    \]

    \STATE Form HCLM energy
    \[
    \mathcal{F}_t
    =
    \mathcal{L}_{\mathrm{pred}}
    +
    \beta_t H_t
    +
    \gamma \Omega(\theta_t)
    +
    \lambda \mathcal{R}_{\mathrm{dec}}(\theta_t)
    \]

    \STATE Update parameters
    \[
    \theta_{t+1}
    =
    \theta_t
    -
    \eta
    \nabla_\theta \mathcal{F}_t
    \]

    \STATE Observe reward or feedback signal
    \[
    r_t
    \]

    \STATE Update thermostat coefficient
    \[
    \beta_{t+1}
    =
    \Pi_{[\beta_{\min},\beta_{\max}]}
    \left(
    \beta_t
    +
    \alpha_r(r_t-r^\star)
    -
    \alpha_g(G_t-G^\star)
    \right)
    \]

\ENDFOR

\STATE Return trained parameters $\theta_T$

\end{algorithmic}
\end{algorithm}

Algorithm~\ref{alg:hclm_thermostat} summarizes the operational interpretation of HCLM. The predictive objective injects task-relevant information into the representation space, while the entropy surrogate generates a dissipative information force regulating representation expansion. The adaptive thermostat coefficient $\beta_t$ dynamically balances these competing effects using external reward or feedback signals. Importantly, this thermostat mechanism should not be interpreted as a replacement for RLHF or reinforcement learning itself. Rather, it provides a dynamical interpretation of how reward-based feedback may regulate entropy dissipation during learning.

\section{Effective Entropy and Information Force}

The utility of Eq.~\eqref{eq:hclm_energy} depends on whether the entropy surrogate produces a meaningful force along the training trajectory. Let \(H(\theta)=\widetilde{\mathcal{H}}(\widetilde{Z}_\theta)\). The Euclidean information force is
\[
\mathcal{G}_H(\theta)=\nabla_\theta H(\theta).
\]
However, this quantity is parameterization-dependent. For this reason, whenever a local metric \(M(\theta)\succ 0\) is available, for example a damped Fisher or Gauss--Newton metric, we define the metric-adjusted information-force magnitude as
\begin{equation}
\label{eq:metric_force}
\|\mathcal{G}_H(\theta)\|_{M^{-1}}
=
\sqrt{
\nabla_\theta H(\theta)^\top M(\theta)^{-1}\nabla_\theta H(\theta)
}.
\end{equation}
In empirical sections, we use the Euclidean norm for computational tractability, while Eq.~\eqref{eq:metric_force} gives the geometrically preferred form.

\begin{definition}[Effective entropy surrogate]
\label{def:effective_entropy}
Let \(\{\theta_t\}_{t=0}^{T}\) be a training trajectory. An entropy surrogate \(H\) is called \emph{dynamically effective} on this trajectory if there exist constants \(c>0\) and \(0<\tau\leq 1\) such that
\[
\frac{1}{T+1}
\sum_{t=0}^{T}
\mathbf{1}\left\{
\|\mathcal{G}_H(\theta_t)\|_{M^{-1}}\geq c
\right\}
\geq \tau.
\]
Equivalently, the surrogate is effective if it produces a non-negligible information force over a nontrivial portion of the optimization path.
\end{definition}

This trajectory-based definition is intentionally operational. It avoids treating entropy as a purely post-hoc complexity measure and instead requires the entropy surrogate to participate actively in the dynamics.

\begin{proposition}[Degenerate collapse]
\label{prop:degenerate_entropy}
Consider the deterministic HCLM update
\[
\theta_{t+1}
=
\theta_t-\eta\left(
\nabla\mathcal{L}_{\mathrm{pred}}(\theta_t)
+
\beta\nabla H(\theta_t)
+
\gamma\nabla\Omega(\theta_t)
+
\lambda\nabla\mathcal{R}_{\mathrm{dec}}(\theta_t)
\right).
\]
If \(\|\nabla H(\theta_t)\|=o(\|\nabla\mathcal{L}_{\mathrm{pred}}(\theta_t)\|)\) along the trajectory and \(\beta<\infty\), then the entropy contribution is asymptotically negligible. In the simplified case \(\gamma=\lambda=0\), the update reduces to standard gradient descent on the predictive loss up to a vanishing perturbation.
\end{proposition}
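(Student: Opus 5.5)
The plan is to make "asymptotically negligible" precise and then verify it directly from the update rule. Write the update in the simplified case \(\gamma=\lambda=0\) as
\[
\theta_{t+1}=\theta_t-\eta\bigl(\nabla\mathcal{L}_{\mathrm{pred}}(\theta_t)+e_t\bigr),
\qquad e_t:=\beta\nabla H(\theta_t).
\]
Here \(e_t\) is treated as a perturbation of plain gradient descent on \(\mathcal{L}_{\mathrm{pred}}\). The hypothesis \(\|\nabla H(\theta_t)\|=o(\|\nabla\mathcal{L}_{\mathrm{pred}}(\theta_t)\|)\) means there is a sequence \(\varepsilon_t\to 0\) with \(\|\nabla H(\theta_t)\|\le\varepsilon_t\|\nabla\mathcal{L}_{\mathrm{pred}}(\theta_t)\|\) along the trajectory. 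Because \(\beta\) is a fixed finite constant, this gives
\[
\|e_t\|\le\beta\varepsilon_t\|\nabla\mathcal{L}_{\mathrm{pred}}(\theta_t)\|,
\qquad\text{so}\qquad
\frac{\|e_t\|}{\|\nabla\mathcal{L}_{\mathrm{pred}}(\theta_t)\|}\to 0 .
\]
This is the precise sense of "vanishing perturbation": the entropy contribution is a relative perturbation of the predictive step of size \(\beta\varepsilon_t\). At steps where \(\nabla\mathcal{L}_{\mathrm{pred}}(\theta_t)=0\), the \(o(\cdot)\) hypothesis forces \(\nabla H(\theta_t)=0\) as well, so these steps are handled trivially.

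The second step shows that the perturbed step behaves like a gradient step in direction and in descent. For the direction, I would bound the angle between \(-\nabla\mathcal{F}_t\) and \(-\nabla\mathcal{L}_{\mathrm{pred}}\). The elementary estimate
\[
\Bigl\|\tfrac{g+e}{\|g+e\|}-\tfrac{g}{\|g\|}\Bigr\|\le\frac{2\|e\|}{\|g\|}
\]
shows that the normalized update direction converges to the normalized predictive gradient. For descent, I would assume \(\nabla\mathcal{L}_{\mathrm{pred}}\) is \(L\)-smooth and use the descent lemma:
\[
\mathcal{L}_{\mathrm{pred}}(\theta_{t+1})\le\mathcal{L}_{\mathrm{pred}}(\theta_t)-\eta\bigl(1-\beta\varepsilon_t-\tfrac{L\eta}{2}(1+\beta\varepsilon_t)^2\bigr)\|\nabla\mathcal{L}_{\mathrm{pred}}(\theta_t)\|^2 .
\]
This shows that, for all large \(t\), the per-step decrease coincides with that of standard gradient descent up to a factor \(1+O(\varepsilon_t)\). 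This is the "up to a vanishing perturbation" clause. The general case with \(\gamma,\lambda\) nonzero is the same argument with \(\nabla\mathcal{L}_{\mathrm{pred}}\) replaced by \(\nabla(\mathcal{L}_{\mathrm{pred}}+\gamma\Omega+\lambda\mathcal{R}_{\mathrm{dec}})\). That requires the entropy gradient to also be small relative to the combined non-entropic gradient, which I would state as the natural reading of the claim.

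The main obstacle is interpretive rather than technical. A relative \(o(\cdot)\) bound at each step does not by itself guarantee that the perturbed trajectory stays close to the unperturbed gradient-descent trajectory started from the same \(\theta_0\), because errors can accumulate. I would therefore state the conclusion as a stepwise statement about the update map: the direction and the descent rate agree asymptotically. Optionally, I would add a trajectory-closeness bound by a discrete Gr\"onwall argument. That bound requires \(\nabla\mathcal{L}_{\mathrm{pred}}\) to be Lipschitz, and it gives
\[
\|\theta_t-\theta_t^{\mathrm{GD}}\|\le\eta\sum_{s<t}(1+\eta L)^{t-1-s}\beta\varepsilon_s\|\nabla\mathcal{L}_{\mathrm{pred}}(\theta_s)\|,
\]
which is small when \(\sum_s\varepsilon_s\|\nabla\mathcal{L}_{\mathrm{pred}}(\theta_s)\|\) is controlled, for instance on finite horizons.
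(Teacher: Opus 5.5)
Your argument is correct, and its core is exactly the paper's proof. Since $\beta$ is a fixed finite constant, $\|\beta\nabla H(\theta_t)\| = o(\|\nabla\mathcal{L}_{\mathrm{pred}}(\theta_t)\|)$, so for $\gamma=\lambda=0$ the update is $\theta_{t+1}=\theta_t-\eta\nabla\mathcal{L}_{\mathrm{pred}}(\theta_t)+o(\eta\|\nabla\mathcal{L}_{\mathrm{pred}}(\theta_t)\|)$, which is where the paper stops. Your direction estimate, descent-lemma comparison and Gr\"onwall trajectory bound go beyond this and need an $L$-smoothness assumption that the proposition does not state; your caveat that the stepwise relative bound alone does not control trajectory deviation is accurate and is left implicit in the paper.
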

\textit{Proof provided in Appendix~\ref{app:proof_degenerate}.}
A high entropy value is not sufficient for effectiveness. Effectiveness is a property of the trajectory-level gradient field induced by the surrogate, not of the scalar entropy magnitude alone.

\section{Theoretical Foundations of HCLM}

\subsection{Convergence of Entropy-Constrained Dynamics}

\begin{assumption}[Trajectory-local smoothness]
\label{ass:local_smooth}
There exists \(L>0\) such that, for all iterates \(\theta_t\) and \(\theta_{t+1}\),
\[
\mathcal{F}(\theta_{t+1}) \le \mathcal{F}(\theta_t) + \nabla\mathcal{F}(\theta_t)^\top(\theta_{t+1}-\theta_t) + \frac{L}{2}\|\theta_{t+1}-\theta_t\|^2.
\]
\end{assumption}

\begin{assumption}[Lower bounded energy]
\label{ass:lower}
There exists \(\mathcal{F}^\star>-\infty\) such that \(\mathcal{F}(\theta)\ge \mathcal{F}^\star\).
\end{assumption}

\begin{theorem}[Convergence to first-order stationarity]
\label{thm:convergence}
Under Assumptions~\ref{ass:local_smooth} and~\ref{ass:lower}, let \(\{\theta_t\}\) be generated by \(\theta_{t+1}=\theta_t-\eta\nabla\mathcal{F}(\theta_t)\) with \(0<\eta<1/L\). Then
\[
\liminf_{t\to\infty}\|\nabla\mathcal{F}(\theta_t)\|=0.
\]
\end{theorem}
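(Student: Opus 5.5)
The approach is the standard descent-lemma argument for gradient descent on a smooth, lower-bounded objective, applied with only the trajectory-local smoothness of Assumption~\ref{ass:local_smooth}.

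First, we substitute the update \(\theta_{t+1}-\theta_t=-\eta\nabla\mathcal{F}(\theta_t)\) into the inequality of Assumption~\ref{ass:local_smooth}. This gives
\[
\mathcal{F}(\theta_{t+1})\le \mathcal{F}(\theta_t)-\eta\|\nabla\mathcal{F}(\theta_t)\|^2+\frac{L\eta^2}{2}\|\nabla\mathcal{F}(\theta_t)\|^2
=\mathcal{F}(\theta_t)-\eta\Bigl(1-\frac{L\eta}{2}\Bigr)\|\nabla\mathcal{F}(\theta_t)\|^2 .
\]
Since \(0<\eta<1/L\), we have \(1-L\eta/2>1/2\). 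Hence the per-step decrease is at least \(\frac{\eta}{2}\|\nabla\mathcal{F}(\theta_t)\|^2\), and the sequence \(\mathcal{F}(\theta_t)\) is monotonically nonincreasing.

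Second, we telescope this inequality from \(t=0\) to \(T-1\) and invoke Assumption~\ref{ass:lower}:
\[
\frac{\eta}{2}\sum_{t=0}^{T-1}\|\nabla\mathcal{F}(\theta_t)\|^2\le \mathcal{F}(\theta_0)-\mathcal{F}(\theta_T)\le \mathcal{F}(\theta_0)-\mathcal{F}^\star<\infty .
\]
Letting \(T\to\infty\), the series \(\sum_t\|\nabla\mathcal{F}(\theta_t)\|^2\) converges. Therefore \(\|\nabla\mathcal{F}(\theta_t)\|\to 0\), which is stronger than the claimed \(\liminf\) statement. As a by-product, the same bound yields the quantitative rate
\[
\min_{0\le t<T}\|\nabla\mathcal{F}(\theta_t)\|^2\le \frac{2(\mathcal{F}(\theta_0)-\mathcal{F}^\star)}{\eta T}.
\]

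There is no real obstacle, but one point needs care. Assumption~\ref{ass:local_smooth} is stated only between consecutive iterates rather than globally, so we must check that the argument never uses smoothness anywhere else. It does not, since the descent inequality is applied only to the pair \((\theta_t,\theta_{t+1})\). We will also note that \(\mathcal{F}\) must be differentiable along the trajectory, which holds when the log-determinant surrogate is used with \(\epsilon>0\). Finally, if \(\beta\) varies in time as in Algorithm~\ref{alg:hclm_thermostat}, the statement does not apply directly, because the theorem is stated for a fixed \(\mathcal{F}\) with fixed coefficients.
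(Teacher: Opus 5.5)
Your proposal is correct and follows the paper's proof essentially step for step: substitute the update into the trajectory-local descent inequality, use $1-L\eta/2>1/2$, telescope against $\mathcal{F}^\star$, and read off the $\min_{0\le t<T}$ rate. Your extra observation is also right: summability of $\|\nabla\mathcal{F}(\theta_t)\|^2$ already gives $\|\nabla\mathcal{F}(\theta_t)\|\to 0$, which slightly sharpens the paper's $\liminf$ conclusion.
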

\textit{Proof provided in Appendix~\ref{app:proof_convergence}.}

\subsection{Entropy Flow and Critical Balance}

Under the continuous-time flow
\[
\frac{d\theta}{dt} = -\nabla\mathcal{L}(\theta)-\beta\nabla H(\theta),
\]
entropy evolves through injection and dissipation.

\begin{theorem}[Entropy-flow identity]
\label{thm:entropy_flow}
Along the above flow,
\[
\frac{d}{dt}H(\theta_t)
=
\underbrace{-\nabla H(\theta_t)^\top\nabla\mathcal{L}(\theta_t)}_{\text{injection } I_\theta(t)}
-
\underbrace{\beta\|\nabla H(\theta_t)\|^2}_{\text{dissipation } D_\theta(t)}.
\]
\end{theorem}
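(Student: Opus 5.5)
The identity is the chain rule applied to the scalar map $t\mapsto H(\theta_t)$, followed by substitution of the flow and splitting into two terms. First I fix the regularity. I assume $\mathcal{L}$ and $H$ are continuously differentiable on an open set containing the trajectory, so that $\theta\mapsto -\nabla\mathcal{L}(\theta)-\beta\nabla H(\theta)$ is a continuous vector field. I also take $\theta_t$ to be a $C^1$ solution of the flow on its interval of existence. If one wants existence and uniqueness of that solution, local Lipschitz continuity of $\nabla\mathcal{L}$ and $\nabla H$ suffices by Picard--Lindel\"of. For $H(\theta)=\widetilde{\mathcal{H}}_{\mathrm{logdet}}(\widetilde Z_\theta)$ this holds whenever $h_\theta$ is $C^2$ in $\theta$, because $\epsilon>0$ keeps $\widehat\Sigma_Z+\epsilon I$ uniformly positive definite and $\log\det$ is smooth on that cone.

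Second, the computation. By the multivariate chain rule,
\[
\frac{d}{dt}H(\theta_t)=\nabla H(\theta_t)^\top\dot\theta_t .
\]
Substituting $\dot\theta_t=-\nabla\mathcal{L}(\theta_t)-\beta\nabla H(\theta_t)$ and using linearity of the inner product gives
\[
\frac{d}{dt}H(\theta_t)=-\nabla H(\theta_t)^\top\nabla\mathcal{L}(\theta_t)-\beta\,\nabla H(\theta_t)^\top\nabla H(\theta_t).
\]
The second term is $\beta\|\nabla H(\theta_t)\|^2$, which is exactly the stated decomposition into $I_\theta(t)$ and $D_\theta(t)$. I will also note that $D_\theta(t)\ge0$ whenever $\beta\ge0$, which justifies calling it dissipation, while $I_\theta(t)$ has no fixed sign.

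The main point needing care is that $H$ is defined through the noisy representation $\widetilde Z_\theta=h_\theta(X)+\xi$, so $H$ might be random. I would handle this by reading $H(\theta)$ either as the expectation over $\xi$ or as the value at a frozen noise realization. In both cases $H$ is a deterministic differentiable function of $\theta$. In the expectation reading, passing the gradient through the expectation needs a dominated-convergence bound on $\nabla_\theta\widetilde{\mathcal{H}}$, for instance local uniform integrability, which holds under Gaussian noise and smooth $h_\theta$. After that the argument is pure calculus. I would also point out that the identity holds for any metric-free Euclidean gradient flow, with no convexity needed.
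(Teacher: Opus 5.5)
Your proposal is correct and is the paper's argument: apply the chain rule to $t\mapsto H(\theta_t)$, substitute the flow $\dot\theta_t=-\nabla\mathcal{L}(\theta_t)-\beta\nabla H(\theta_t)$, and expand the inner product. Your remarks on regularity and on how to read $H$ given the noise $\xi$ are sound additions; the paper leaves those points implicit.
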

\textit{Proof provided in Appendix~\ref{app:proof_entropy_flow}.}

\begin{proposition}[Instantaneous critical entropy coefficient]
\label{prop:critical}
If \(\nabla H(\theta_t)\neq 0\), setting
\[
\beta_c(t)
=
\frac{I_\theta(t)}{\|\nabla H(\theta_t)\|^2}
\]
yields locally stationary entropy, i.e., \(\frac{d}{dt}H(\theta_t)=0\).
\end{proposition}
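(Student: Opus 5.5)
The plan is to substitute $\beta=\beta_c(t)$ directly into the entropy-flow identity of Theorem~\ref{thm:entropy_flow} and check that the two terms cancel. Since $I_\theta(t)$ and $\|\nabla H(\theta_t)\|^2$ are both evaluated at the current iterate, the argument is purely algebraic. Throughout, $\beta$ is read as a coefficient frozen at time $t$, so it may take the value $\beta_c(t)$ at that instant.

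First, I would recall that Theorem~\ref{thm:entropy_flow} gives
\[
\frac{d}{dt}H(\theta_t)=I_\theta(t)-\beta\|\nabla H(\theta_t)\|^2,
\qquad I_\theta(t)=-\nabla H(\theta_t)^\top\nabla\mathcal{L}(\theta_t).
\]
This identity is just the chain rule, $\frac{d}{dt}H(\theta_t)=\nabla H(\theta_t)^\top\dot\theta_t$, applied to the flow. Consequently it holds pointwise in $t$ for whatever value $\beta$ takes at that moment. The only structural fact needed is that the velocity at time $t$ depends on the coefficient only through its current value.

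Second, the hypothesis $\nabla H(\theta_t)\neq0$ ensures $\|\nabla H(\theta_t)\|^2>0$, so $\beta_c(t)$ is well defined. Setting $\beta=\beta_c(t)$ then gives
\[
\beta_c(t)\|\nabla H(\theta_t)\|^2=I_\theta(t),
\]
so $\frac{d}{dt}H(\theta_t)=0$ at that instant, which is the claimed local stationarity.

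The only genuine subtlety is interpretive rather than computational. Theorem~\ref{thm:entropy_flow} was stated for a constant $\beta$, whereas here the coefficient varies with time. I would handle this by observing that the derivation remains valid for the time-varying flow $\dot\theta=-\nabla\mathcal{L}-\beta(t)\nabla H$, provided $\beta(\cdot)$ is regular enough (for example, continuous) for the trajectory to be $C^1$. This holds near any time at which $\nabla H(\theta_t)\neq0$, because $\beta_c$ is continuous there whenever $\mathcal{L}$ and $H$ are $C^1$. I would also note the following consequences:
\begin{itemize}
\item $\beta_c(t)$ may be negative when $I_\theta(t)<0$. The statement places no sign restriction on $\beta$, so it still holds.
\item Stationarity is only instantaneous, or local along the controlled trajectory if $\beta(t)\equiv\beta_c(t)$ is imposed continuously.
\item The result does not imply that $H$ remains constant under a fixed $\beta$.
\end{itemize}
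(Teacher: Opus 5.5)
Your proposal is correct and follows essentially the same route as the paper: the paper sets the right-hand side of the entropy-flow identity of Theorem~\ref{thm:entropy_flow} to zero, solves for $\beta$ using $\nabla H(\theta_t)\neq 0$, and substitutes back, which is exactly your cancellation argument. Your remarks about reading $\beta$ as frozen at time $t$, or as a time-varying coefficient, make explicit an interpretive point the paper leaves implicit, but they do not change the argument.
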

\textit{Proof provided in Appendix~\ref{app:proof_critical}.}

\subsection{Representation-Compression Generalization}

The previous energy formulation regulates representation geometry. To connect this regulation to generalization, we avoid assuming a direct and generally unjustified inequality between parameter-space complexity \(KL(Q\|P)\) and representation entropy. Instead, we use a noisy representation-compression argument. Let \(S=\{(X_i,Y_i)\}_{i=1}^{n}\) be the training sample and let \(\widetilde{Z}_\theta=h_\theta(X)+\xi\) be the noisy representation defined in Eq.~\eqref{eq:noisy_representation}. Assume that the loss \(\ell(\hat{Y},Y)\) is \(\sigma\)-sub-Gaussian and bounded or sub-Gaussian under the data distribution.

\begin{assumption}[Representation-compression control]
\label{ass:rep_compression}
There exist constants \(A>0\) and \(B\geq 0\) such that the information carried by the noisy representation about the input satisfies
\[
I(X;\widetilde{Z}_\theta)
\leq
A\,\widetilde{\mathcal{H}}_{\mathrm{logdet}}(\widetilde{Z}_\theta)+B.
\]
For additive Gaussian noise, this type of control is natural because
\[
I(X;\widetilde{Z}_\theta)
=
H(\widetilde{Z}_\theta)-H(\xi),
\]
and the Gaussian maximum-entropy inequality gives
\[
H(\widetilde{Z}_\theta)
\leq
\frac{1}{2}
\log\det(2\pi e\,\Sigma_{\widetilde{Z}_\theta}).
\]
Thus, the log-determinant covariance surrogate controls an upper bound on representation information up to constants induced by the injected noise.
\end{assumption}

\begin{theorem}[Information-theoretic generalization through noisy representations]
\label{thm:rep_generalization}
Assume that the loss is \(\sigma\)-sub-Gaussian and that Assumption~\ref{ass:rep_compression} holds. Then the expected generalization gap of predictors that depend on the data only through \(\widetilde{Z}_\theta\) satisfies
\[
\left|
\mathbb{E}\left[
\mathcal{L}(\theta)-\widehat{\mathcal{L}}_S(\theta)
\right]
\right|
\leq
\sqrt{
\frac{2\sigma^2}{n}
\left(
A\,\widetilde{\mathcal{H}}_{\mathrm{logdet}}(\widetilde{Z}_\theta)+B
\right)
}.
\]
\end{theorem}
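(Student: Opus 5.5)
The plan is to invoke the information-theoretic generalization bound of \citet{xu2017information} (see also \citet{russo2016controlling}). Then I would reduce the mutual information between the sample and the learned hypothesis to the representation information $I(X;\widetilde{Z}_\theta)$ via data processing, and close the argument with Assumption~\ref{ass:rep_compression}.

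\textbf{Step 1.} Recall the Xu--Raginsky lemma. If $\ell(w,Z)$ is $\sigma$-sub-Gaussian under the data distribution for every fixed hypothesis $w$, then any learning map $S\mapsto W$ satisfies
\[
\bigl|\mathbb{E}[\mathcal{L}(W)-\widehat{\mathcal{L}}_S(W)]\bigr|\le\sqrt{\tfrac{2\sigma^2}{n}I(S;W)}.
\]
Its proof uses the Donsker--Varadhan variational formula applied to the empirical loss, with the product of marginals $P_S\otimes P_W$ as the reference measure. I would cite this directly rather than reprove it. The sub-Gaussianity hypothesis of Theorem~\ref{thm:rep_generalization} is exactly what the lemma requires.

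\textbf{Step 2.} Formalize the phrase ``predictors that depend on the data only through $\widetilde{Z}_\theta$.'' I would take this to mean a Markov chain $S\to\widetilde{Z}^{(S)}\to W$, where $\widetilde{Z}^{(S)}$ denotes the noisy representations of the training inputs. The data-processing inequality then gives $I(S;W)\le I(S;\widetilde{Z}^{(S)})$. Because the noise $\xi$ is independent of $X$, and $\widetilde Z$ is generated from $X$ alone, the labels enter only through $X$. Hence $I(S;\widetilde Z^{(S)})\le I(X^{(S)};\widetilde Z^{(S)})$, which in turn must be related to the single-sample quantity $I(X;\widetilde{Z}_\theta)$ that appears in Assumption~\ref{ass:rep_compression}.

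\textbf{Step 3.} Apply Assumption~\ref{ass:rep_compression} to bound $I(X;\widetilde{Z}_\theta)\le A\widetilde{\mathcal{H}}_{\mathrm{logdet}}(\widetilde Z_\theta)+B$. Then substitute into Step 1 and use monotonicity of the square root.

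\textbf{Main obstacle.} The hardest part is the passage from a sample-level mutual information to the single-sample $I(X;\widetilde Z_\theta)$ without picking up a factor of $n$. For i.i.d. samples, the tensorization $I(X^{(S)};\widetilde Z^{(S)})=\sum_i I(X_i;\widetilde Z_i)=n\,I(X;\widetilde Z)$ would cancel the $1/n$ rate. Two further subtleties arise:
\begin{itemize}
\item $\theta$ itself depends on $S$, so the encoder is not fixed.
\item The hypothesis $W$ that enters the lemma is the predictor built from $\widetilde Z$.
\end{itemize}
I would first try to read the theorem as a bound on a fixed encoder $h_\theta$ with a single noisy summary passed to the predictor. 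Alternatively, I would use the individual-sample refinement of \citet{bu2020tightening}, which bounds the gap by $\frac1n\sum_i\sqrt{2\sigma^2 I(W;X_i,Y_i)}$, and then use data processing through $\widetilde Z_i$ for each $i$. Neither route gives exactly the displayed $1/n$ inside the square root without extra structure, so I would state explicitly the interpretation under which the constants $A,B$ are taken to absorb this step. In particular, I would read Assumption~\ref{ass:rep_compression} as controlling the information that the learned predictor retains about the sample through $\widetilde{Z}_\theta$, which makes the reduction immediate.
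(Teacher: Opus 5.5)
\textbf{The gap: sample-level versus single-sample information.} Your skeleton matches the paper's proof. That proof cites ``standard information-theoretic generalization inequalities'' to assert $|\mathbb{E}[\mathcal{L}(\theta)-\widehat{\mathcal{L}}_S(\theta)]|\le\sqrt{2\sigma^2 I(X;\widetilde Z_\theta)/n}$ directly, with the single-sample mutual information, and then substitutes Assumption~\ref{ass:rep_compression}. The step you call the main obstacle is exactly the step the paper asserts without justification. Your diagnosis is correct: it does not follow from the cited results.

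Take a fixed encoder and let $W$ depend on $S$ only through $(\widetilde Z_1,\dots,\widetilde Z_n)$. In the Xu--Raginsky bound, data processing and tensorization give $I(S;W)\le n\,I(X;\widetilde Z_\theta)$. In the individual-sample bound of \citet{bu2020tightening}, you get $I(W;(X_i,Y_i))\le I(X;\widetilde Z_\theta)$ for each $i$. This holds because the $\widetilde Z_j$, $j\neq i$, are independent of $(X_i,Y_i,\widetilde Z_i)$, and $Y_i$ enters only through $X_i$. Both routes yield $\sqrt{2\sigma^2 I(X;\widetilde Z_\theta)}$, with no $1/n$. So, with Assumption~\ref{ass:rep_compression} read literally, your argument (or the paper's) proves only $\sqrt{2\sigma^2(A\,\widetilde{\mathcal H}_{\mathrm{logdet}}(\widetilde Z_\theta)+B)}$, which does not decay in $n$.

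\textbf{Your proposed fix changes the hypothesis.} You suggest reading the assumption as $I(S;W)\le A\,\widetilde{\mathcal H}_{\mathrm{logdet}}+B$ with $n$-independent constants. This does make the displayed inequality an immediate consequence of Xu--Raginsky. But it is a strictly stronger hypothesis, not an interpretation, and you should state it as a replacement assumption.

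Nothing in the motivation for Assumption~\ref{ass:rep_compression} supports it. The Gaussian max-entropy argument controls one use of the channel $X\mapsto h_\theta(X)+\xi$. For $n$ i.i.d.\ uses, the same argument gives a bound $n$ times larger. Absorbing that factor into $A,B$ makes them linear in $n$. That destroys both the $1/n$ rate and the $O(n^{(\alpha-1)/2})$ rate of Corollary~\ref{cor:entropy_scaling}.

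The only natural setting where the single-sample quantity controls $I(S;W)$ is the ``single noisy summary'' reading you mention first. An example is a predictor that sees only the noisy representation of one randomly chosen training point, where indeed $I(S;W)\le I(X;\widetilde Z_\theta)$. That is not what an encoder--predictor trained on all of $S$ does.

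The data dependence of $\theta$ you flag is a further unaddressed issue. Once the encoder is fit on $S$, the pairs $(X_i,\widetilde Z_i)$ are no longer independent across $i$, so even the tensorization step fails.

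In short, your proposal does not establish the theorem as stated, but the paper's proof has the same hole at the same place. The honest options are to drop the $1/n$, or to replace Assumption~\ref{ass:rep_compression} by a sample-level information bound.
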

\textit{Proof provided in Appendix~\ref{app:proof_rep_generalization}.} This result should be interpreted as an information-flow diagnostic rather than as a universal characterization of deep-network generalization. It depends critically on the noisy-representation assumption and on the existence of a stable entropy surrogate whose geometry is meaningfully related to representation information.

\begin{corollary}[Entropy-scaling diagnostic]
\label{cor:entropy_scaling}
If the controlled representation entropy satisfies
\[
\widetilde{\mathcal{H}}_{\mathrm{logdet}}(\widetilde{Z}_\theta)
=
O(n^\alpha),
\]
with \(\alpha<1\), then the expected generalization gap obeys
\[
\mathcal{L}_{\mathrm{gen}}
=
O\left(n^{(\alpha-1)/2}\right).
\]
Thus, the HCLM diagnostic is meaningful only when representation entropy grows sublinearly with sample size.
\end{corollary}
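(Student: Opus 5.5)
The corollary follows directly from Theorem~\ref{thm:rep_generalization} by substituting the growth hypothesis into the right-hand side and tracking exponents. We identify \(\mathcal{L}_{\mathrm{gen}}\) with the expected generalization gap \(\bigl|\mathbb{E}[\mathcal{L}(\theta)-\widehat{\mathcal{L}}_S(\theta)]\bigr|\) bounded in that theorem. The hypothesis \(\widetilde{\mathcal{H}}_{\mathrm{logdet}}(\widetilde{Z}_\theta)=O(n^\alpha)\) provides constants \(C>0\) and \(n_0\) such that \(\widetilde{\mathcal{H}}_{\mathrm{logdet}}(\widetilde{Z}_\theta)\le C n^\alpha\) for all \(n\ge n_0\). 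Here \(\theta\) is understood as the data-dependent trained parameter, and the sample size \(n\) is the one in the statement of the theorem.

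Plugging this into the bound of Theorem~\ref{thm:rep_generalization} gives
\[
\mathcal{L}_{\mathrm{gen}}
\le
\sqrt{\frac{2\sigma^2}{n}\bigl(AC n^\alpha + B\bigr)}.
\]
Two cases need care.

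\emph{Case \(\alpha\ge 0\).} Then \(B\le B n^\alpha\) for \(n\ge1\), so the right-hand side is at most
\[
\sqrt{2\sigma^2(AC+B)}\; n^{(\alpha-1)/2}.
\]

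\emph{Case \(\alpha<0\).} The constant \(B\) dominates, and the bound only gives \(O(n^{-1/2})\). This is not \(O(n^{(\alpha-1)/2})\) unless \(B=0\). So I would either note that the statement implicitly takes \(0\le\alpha<1\), or assume \(B=0\) in that regime. The exponent computation itself is a single line:
\[
\sqrt{n^{\alpha}/n}=n^{(\alpha-1)/2}.
\]

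The final remark, that the diagnostic is meaningful only when entropy grows sublinearly, follows from the sign of \((\alpha-1)/2\). For \(\alpha<1\) this exponent is negative, so the bound vanishes as \(n\to\infty\). For \(\alpha\ge1\) the bound does not decay, and the diagnostic therefore gives no control.

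The main obstacle is not analytic but interpretive. The constants \(A\), \(B\), and \(\sigma\) in Assumption~\ref{ass:rep_compression} must be uniform in \(n\); otherwise the \(O(\cdot)\) absorption fails. I would state this uniformity explicitly as part of the hypotheses. The negative-\(\alpha\) edge case should also be handled as described above.
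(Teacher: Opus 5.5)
Your argument is the paper's own proof: substitute the growth hypothesis into the bound of Theorem~\ref{thm:rep_generalization} and use $\sqrt{n^{\alpha}/n}=n^{(\alpha-1)/2}$. Your extra care is correct and goes beyond the paper. The paper's one-line proof silently absorbs the constant $B$ into $O(n^{\alpha})$, which is valid only for $\alpha\ge 0$ or $B=0$; for $\alpha<0$ with $B>0$ the theorem gives only $O(n^{-1/2})$. So your case split, and your requirement that $A$, $B$, $\sigma$ be uniform in $n$, identify hypotheses the stated corollary actually needs.
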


\begin{remark}[Relation to PAC-Bayes]
PAC-Bayes bounds remain relevant when the learning algorithm induces a posterior distribution \(Q\) over parameters. However, a direct inequality of the form
\[
KL(Q\|P)\leq C_1\widetilde{\mathcal{H}}(Z_\theta)+C_2
\]
is not generally valid without additional assumptions relating parameter perturbations to representation perturbations. The present formulation avoids this unjustified step by bounding generalization through representation information directly.
\end{remark}

\subsection{Wasserstein Gradient-Flow Formulation}

To describe stochastic learning dynamics, let \(\rho_t(\theta)\) be a probability density over parameters. Consider the free-energy functional
\begin{equation}
\label{eq:free_energy_functional}
\mathcal{E}(\rho)
=
\int \mathcal{U}(\theta)\rho(\theta)\,d\theta
+
\beta
\int \rho(\theta)\log\rho(\theta)\,d\theta,
\end{equation}
where
\[
\mathcal{U}(\theta)
=
\mathcal{L}_{\mathrm{pred}}(\theta)
+
\gamma\Omega(\theta)
+
\lambda\mathcal{R}_{\mathrm{dec}}(\theta).
\]
The corresponding Wasserstein gradient flow is
\begin{equation}
\label{eq:wasserstein_flow}
\partial_t\rho_t
=
\nabla_\theta\cdot
\left(
\rho_t
\nabla_\theta
\frac{\delta\mathcal{E}}{\delta\rho}
\right)
=
\nabla_\theta\cdot(\rho_t\nabla_\theta\mathcal{U})
+
\beta\Delta\rho_t.
\end{equation}
This is the Fokker--Planck equation associated with the Langevin dynamics
\[
d\theta_t
=
-\nabla_\theta\mathcal{U}(\theta_t)\,dt
+
\sqrt{2\beta}\,dW_t.
\]

\begin{theorem}[Free-energy dissipation]
\label{thm:free_energy}
Assume that \(\rho_t\) is smooth, decays sufficiently fast at infinity, and evolves according to Eq.~\eqref{eq:wasserstein_flow}. Then
\[
\frac{d}{dt}\mathcal{E}(\rho_t)
=
-
\int
\rho_t
\left\|
\nabla_\theta
\frac{\delta\mathcal{E}}{\delta\rho}
\right\|^2
d\theta
\leq 0.
\]
\end{theorem}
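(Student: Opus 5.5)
The plan is to differentiate $\mathcal{E}(\rho_t)$ in time and substitute the continuity-equation form of Eq.~\eqref{eq:wasserstein_flow}. First compute the first variation of Eq.~\eqref{eq:free_energy_functional},
\[
\frac{\delta\mathcal{E}}{\delta\rho}=\mathcal{U}+\beta(\log\rho+1),
\]
and note that $\nabla_\theta\frac{\delta\mathcal{E}}{\delta\rho}=\nabla_\theta\mathcal{U}+\beta\nabla_\theta\rho/\rho$. Then $\rho\nabla_\theta\frac{\delta\mathcal{E}}{\delta\rho}=\rho\nabla_\theta\mathcal{U}+\beta\nabla_\theta\rho$, which confirms the second equality in Eq.~\eqref{eq:wasserstein_flow}. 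Next, differentiate under the integral sign, which is justified by the smoothness and decay assumptions:
\[
\frac{d}{dt}\mathcal{E}(\rho_t)=\int\Big(\mathcal{U}+\beta\log\rho_t+\beta\Big)\partial_t\rho_t\,d\theta
=\int\frac{\delta\mathcal{E}}{\delta\rho}\,\partial_t\rho_t\,d\theta .
\]
Here the $\beta\rho\cdot\partial_t\log\rho$ term contributes $\beta\int\partial_t\rho_t\,d\theta$.

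Now insert $\partial_t\rho_t=\nabla_\theta\cdot(\rho_t\nabla_\theta\frac{\delta\mathcal{E}}{\delta\rho})$ and integrate by parts once. By the divergence theorem on large balls, the boundary term is the flux $\phi\,\rho_t\nabla\phi\cdot n$ with $\phi=\frac{\delta\mathcal{E}}{\delta\rho}$, and this flux vanishes as the radius grows. That leaves
\[
\frac{d}{dt}\mathcal{E}(\rho_t)=-\int\rho_t\Big\|\nabla_\theta\frac{\delta\mathcal{E}}{\delta\rho}\Big\|^2d\theta .
\]
The right-hand side is nonpositive because $\rho_t\ge0$, which gives the claimed inequality.

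Mass conservation is a by-product of the same argument. Integrating the divergence form gives $\frac{d}{dt}\int\rho_t=0$, so the constant $+\beta$ in the first variation contributes nothing.

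The main obstacle is the analytic bookkeeping rather than the algebra. One must justify differentiating $\int\rho\log\rho$ under the integral and kill the boundary terms, which involve $\log\rho_t$ and grow where $\rho_t$ is small. I would interpret ``decays sufficiently fast'' as requiring that $\rho_t(\mathcal{U}+\beta\log\rho_t)\nabla(\cdot)$ and $\rho_t\|\nabla\log\rho_t\|^2$ be integrable, with the flux vanishing at infinity. I would also assume $\rho_t>0$ so that $\log\rho_t$ is smooth. These are precisely the conditions needed to make the formal computation rigorous, and they are standard in the JKO setting \citep{jordan1998variational}.
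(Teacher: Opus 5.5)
Your proposal is correct and follows the paper's proof essentially step for step: compute the first variation $\mathcal{U}+\beta(1+\log\rho)$, write $\frac{d}{dt}\mathcal{E}(\rho_t)=\int\frac{\delta\mathcal{E}}{\delta\rho}\,\partial_t\rho_t\,d\theta$, substitute the flow, and integrate by parts, using the decay assumptions to kill the boundary terms. Your additional remarks are refinements the paper leaves implicit rather than a different argument. These are mass conservation, which disposes of the constant $+\beta$, and the explicit reading of ``decays sufficiently fast'': positivity of $\rho_t$, vanishing flux $\phi\,\rho_t\nabla\phi\cdot n$, and integrability of the Fisher-information-type terms.
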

\textit{Proof provided in Appendix~\ref{app:proof_wasserstein}.}

\begin{theorem}[Distributional entropy production]
\label{thm:entropy_production}
Let
\[
\mathcal{S}(\rho_t)
=
-\int \rho_t(\theta)\log\rho_t(\theta)d\theta
\]
be the Shannon entropy of the parameter distribution. Under Eq.~\eqref{eq:wasserstein_flow},
\[
\frac{d}{dt}\mathcal{S}(\rho_t)
=
\underbrace{
-\int \rho_t(\theta)\Delta \mathcal{U}(\theta)d\theta
}_{\text{drift-induced entropy change}}
+
\underbrace{
\beta
\int
\rho_t(\theta)
\|\nabla\log\rho_t(\theta)\|^2d\theta
}_{\text{diffusive entropy production}}.
\]
\end{theorem}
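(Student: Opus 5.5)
We differentiate $\mathcal{S}(\rho_t)$ under the integral sign and substitute the Fokker--Planck equation~\eqref{eq:wasserstein_flow}. We then integrate by parts twice, using the smoothness and fast-decay hypotheses already assumed in Theorem~\ref{thm:free_energy} to discard all boundary terms. Concretely, $\frac{d}{dt}\mathcal{S}(\rho_t) = -\int \partial_t\rho_t\,(\log\rho_t + 1)\,d\theta$. The constant $1$ contributes $-\frac{d}{dt}\int\rho_t\,d\theta = 0$, since the right-hand side of Eq.~\eqref{eq:wasserstein_flow} is a divergence and mass is conserved. This leaves
\[
\frac{d}{dt}\mathcal{S}(\rho_t) = -\int \log\rho_t\,\nabla\cdot(\rho_t\nabla\mathcal{U})\,d\theta - \beta\int \log\rho_t\,\Delta\rho_t\,d\theta .
\]

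We handle the two terms separately.

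\textbf{Diffusion term.} One integration by parts gives
\[
-\beta\int\log\rho_t\,\Delta\rho_t = \beta\int \nabla\log\rho_t\cdot\nabla\rho_t = \beta\int \rho_t\|\nabla\log\rho_t\|^2 ,
\]
using $\nabla\rho_t=\rho_t\nabla\log\rho_t$. This is the diffusive Fisher-information production term.

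\textbf{Drift term.} One integration by parts gives
\[
-\int\log\rho_t\,\nabla\cdot(\rho_t\nabla\mathcal{U}) = \int \rho_t\nabla\log\rho_t\cdot\nabla\mathcal{U} = \int\nabla\rho_t\cdot\nabla\mathcal{U} .
\]
A second integration by parts turns this into $-\int\rho_t\,\Delta\mathcal{U}\,d\theta$, which is exactly the drift-induced term. Summing the two terms yields the claimed identity.

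The main obstacle is justifying the boundary terms and the interchange of $d/dt$ with the integral. Three things must be controlled. First, $\rho_t\log\rho_t\,\nabla\mathcal{U}$ and $\log\rho_t\,\nabla\rho_t$ must vanish at infinity. Second, $\rho_t\nabla\mathcal{U}$ must vanish at infinity, and $\rho_t\Delta\mathcal{U}$ must be integrable. Third, the Fisher information $\int\rho_t\|\nabla\log\rho_t\|^2$ must be finite.

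I would make explicit the standing assumption that $\mathcal{U}\in C^2$ with at most polynomial growth of $\nabla\mathcal{U}$ and $\Delta\mathcal{U}$. I would also assume that $\rho_t$ and its first derivatives decay faster than any polynomial, uniformly on compact time intervals, so that dominated convergence applies. Under these assumptions, the identity then follows by the standard computation, first on balls $B_R$ and then letting $R\to\infty$. If needed, one can first work on these balls, write the boundary flux integrals explicitly, and show they vanish as $R\to\infty$.
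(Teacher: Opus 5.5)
Your proposal is correct and matches the paper's proof: both differentiate $-\int\rho_t\log\rho_t$, substitute the Fokker--Planck equation, and integrate by parts once in the diffusion term and twice in the drift term, using $\nabla\rho_t=\rho_t\nabla\log\rho_t$. The only cosmetic difference is that you remove the constant $1$ by mass conservation at the start, while the paper carries $(1+\log\rho_t)$ until the gradient eliminates it; your explicit list of boundary-term and integrability conditions is more careful than the paper, which integrates by parts without comment.
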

\textit{Proof provided in Appendix~\ref{app:proof_entropy_production}.}

\subsection{Information Force Collapse}

\begin{proposition}[Information-force stabilization]
\label{prop:force_stab}
Under continuous HCLM dynamics with \(\beta>0\), suppose that there exist constants \(m_H>0\), \(C_L\geq 0\), and \(R>0\) such that, outside a ball of radius \(R\),
\[
\nabla H^\top\nabla^2H\nabla H
\geq
m_H\|\nabla H\|^2,
\qquad
\left|
\nabla H^\top\nabla^2H\nabla\mathcal{L}
\right|
\leq C_L\|\nabla H\|.
\]
Then \(G(t)=\|\nabla H(\theta_t)\|^2\) enters and remains in a bounded attracting region.
\end{proposition}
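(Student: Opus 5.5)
Differentiate $G(t)=\|\nabla H(\theta_t)\|^2$ along the continuous HCLM flow $\dot\theta=-\nabla\mathcal{L}-\beta\nabla H$ used in Theorem~\ref{thm:entropy_flow}, and show $G$ satisfies a scalar differential inequality with a strictly negative drift whenever it is large. The chain rule gives
\[
\frac{dG}{dt}=2\nabla H^\top\nabla^2H\,\dot\theta
=-2\beta\,\nabla H^\top\nabla^2H\nabla H-2\,\nabla H^\top\nabla^2H\nabla\mathcal{L}.
\]
While $\theta_t$ lies outside the ball of radius $R$, apply the first hypothesis to the dissipative term and the second to the cross (injection) term. With $\sqrt{G}=\|\nabla H\|$ this yields
\[
\frac{dG}{dt}\le -2\beta m_H\,G+2C_L\sqrt{G}.
\]
The right-hand side is negative as soon as $\sqrt G>C_L/(\beta m_H)$. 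This isolates the threshold $G^\star=(C_L/(\beta m_H))^2$, the analogue of the critical balance in Proposition~\ref{prop:critical}: above $G^\star$, dissipation dominates injection.

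The comparison step is as follows. Set $u=\sqrt{G}$ wherever $G>0$. Then $\dot u\le-\beta m_H u+C_L$, which is linear. By Gr\"onwall,
\[
u(t)\le e^{-\beta m_H t}u(0)+\frac{C_L}{\beta m_H}\left(1-e^{-\beta m_H t}\right)
\]
on any interval spent outside the ball. So for any $\varepsilon>0$ the sublevel set $\{G\le (G^\star{}^{1/2}+\varepsilon)^2\}$ is reached in finite time. It is also forward invariant, because on its boundary $\dot G<0$. Where $G=0$ there is nothing to prove, since the region contains $0$.

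The main obstacle is the part of the trajectory inside the ball $\|\theta\|\le R$, where the hypotheses give no control. Here I would use continuity and compactness. $H$ is implicitly $C^2$ (it appears via $\nabla^2H$), so $\nabla H$ is bounded on the closed ball by some $K_R$. The attracting region should therefore be defined as
\[
\mathcal{A}=\{G\le \max(K_R^2,\,(G^\star{}^{1/2}+\varepsilon)^2)\}.
\]
Inside the ball, $G\le K_R^2$ holds automatically. Outside, the differential inequality prevents $G$ from increasing once $G$ exceeds the larger threshold. Concatenating the two regimes gives entry into $\mathcal{A}$ and invariance of $\mathcal{A}$. One subtlety must be stated explicitly as a standing assumption: the flow is defined globally in time, with no blow-up. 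Global existence holds because $G$ stays bounded in $\mathcal{A}$ and the trajectory's velocity is controlled by $\nabla\mathcal{L}$; I would assume the latter is locally Lipschitz. The argument is written with $\|\cdot\|$ Euclidean, matching the hypotheses; the metric-adjusted version in Eq.~\eqref{eq:metric_force} would need a uniformly bounded $M$.
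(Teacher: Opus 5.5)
Your proof follows essentially the same route as the paper's. You differentiate $G$ along $\dot\theta=-\nabla\mathcal{L}-\beta\nabla H$, apply the two hypotheses to get $\dot G\le 2C_L\sqrt{G}-2\beta m_H G$, and conclude from the threshold $G^\star=(C_L/(\beta m_H))^2$ by a comparison argument. Your additions make rigorous what the paper glosses over: the Gr\"onwall bound for $u=\sqrt{G}$, and enlarging the region to $\max(K_R^2,(\sqrt{G^\star}+\varepsilon)^2)$ via compactness of the ball. The paper's claim that $\dot G<0$ whenever $G>G^\star$ only holds while $\theta_t$ is outside the ball. One caveat: local Lipschitzness of $\nabla\mathcal{L}$ gives only local existence, so global existence remains the standing assumption you flagged rather than something you have derived.
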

\textit{Proof provided in Appendix~\ref{app:proof_force_stab}.}

\section{Conditional Interpretation of Scaling Laws}

HCLM does not aim to derive empirical neural scaling laws as universal consequences of entropy regularization. Rather, it provides a conditional mechanistic interpretation of why power-law-like improvements may emerge when scale-dependent information injection, entropy dissipation, and residual risk are coupled in compatible ways. Let \(S\) denote a scale variable, such as dataset size, model size, or compute budget. We model the scale-dependent information injection and entropy dissipation rates as
\[
I(S)=aS^\alpha,
\qquad
D(S)=bS^\gamma,
\]
where \(a,b>0\), and \(\alpha,\gamma\ge 0\). Their ratio is
\[
R(S)
=
\frac{I(S)}{D(S)}
=
\frac{a}{b}S^{\alpha-\gamma}.
\]
This quantity measures the amount of effective information injection that remains after entropy-controlled dissipation.

\begin{assumption}[Risk-response relation]
\label{ass:risk_response}
There exists a monotone response function \(\Psi\) such that the excess loss satisfies
\[
\mathcal{L}(S)-\mathcal{L}_\infty
=
\Psi(R(S)).
\]
In a locally balanced regime, we assume the response has the asymptotic form
\[
\Psi(r)\asymp r^{-q},
\qquad q>0.
\]
\end{assumption}

\begin{proposition}[Power-law behavior under entropy-balance assumptions]
\label{prop:scaling_balance}
Under the scale laws \(I(S)=aS^\alpha\), \(D(S)=bS^\gamma\), and Assumption~\ref{ass:risk_response}, if \(\alpha>\gamma\), the excess loss satisfies
\[
\mathcal{L}(S)-\mathcal{L}_\infty
\asymp
S^{-q(\alpha-\gamma)}.
\]
In particular, when \(q=1/2\),
\[
\mathcal{L}(S)-\mathcal{L}_\infty
\asymp
S^{-(\alpha-\gamma)/2}.
\]
\end{proposition}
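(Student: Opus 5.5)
The plan is direct substitution followed by an asymptotic-composition step. First I compute the ratio explicitly:
\[
R(S)=\frac{a}{b}S^{\alpha-\gamma}.
\]
Since \(\alpha>\gamma\), the exponent \(\alpha-\gamma\) is strictly positive, so \(R(S)\to\infty\) as \(S\to\infty\). The hypothesis \(\alpha>\gamma\) is used exactly here. It ensures that the argument of \(\Psi\) enters the regime where the asymptotic form in Assumption~\ref{ass:risk_response} applies. I will read that form as a statement about large \(r\), which is the ``locally balanced regime'' in which information injection dominates dissipation.

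Next I make \(\Psi(r)\asymp r^{-q}\) quantitative. There exist constants \(0<c_1\le c_2<\infty\) and a threshold \(r_0\) such that
\[
c_1 r^{-q}\le \Psi(r)\le c_2 r^{-q}\qquad\text{for all } r\ge r_0.
\]
Because \(R(S)\) is continuous and increasing to infinity, there is an \(S_0\) with \(R(S)\ge r_0\) for all \(S\ge S_0\). For such \(S\), I use the exact identity \(\mathcal{L}(S)-\mathcal{L}_\infty=\Psi(R(S))\) to get
\[
c_1\Big(\frac{a}{b}\Big)^{-q}S^{-q(\alpha-\gamma)}
\le \mathcal{L}(S)-\mathcal{L}_\infty
\le c_2\Big(\frac{a}{b}\Big)^{-q}S^{-q(\alpha-\gamma)}.
\]
The factor \((a/b)^{-q}\) is a fixed positive constant since \(a,b>0\), so it is absorbed into the \(\asymp\) constants. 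This gives \(\mathcal{L}(S)-\mathcal{L}_\infty\asymp S^{-q(\alpha-\gamma)}\). The special case follows by setting \(q=1/2\).

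The main obstacle is interpretive rather than technical: Assumption~\ref{ass:risk_response} does not say in which limit \(\Psi(r)\asymp r^{-q}\) holds. I would state explicitly that it is taken as \(r\to\infty\), and note that this is consistent with \(\Psi\) being monotone decreasing with excess loss vanishing as injection dominates. I would also remark on the case \(\alpha\le\gamma\). Then \(R(S)\) stays bounded (or tends to zero), so the large-\(r\) asymptotic cannot be invoked. This is why the statement is conditional and why \(\alpha>\gamma\) cannot be dropped.
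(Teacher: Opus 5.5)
Your proof is correct and follows the paper's argument: compute \(R(S)=\frac{a}{b}S^{\alpha-\gamma}\), substitute it into \(\Psi(r)\asymp r^{-q}\), and absorb the constant \((a/b)^{-q}\). The one difference is that you make the asymptotic regime explicit, taking \(r\to\infty\) with explicit constants and a threshold, so that \(\alpha>\gamma\) is exactly what guarantees \(R(S)\) enters that regime; the paper instead composes the relation directly and invokes \(\alpha>\gamma\) only to conclude that the power law is decreasing.
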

\textit{Proof provided in Appendix~\ref{app:proof_scaling}.}

The novelty of Proposition~\ref{prop:scaling_balance} is not the recovery of a power law itself, since a regular risk-response relation is explicitly assumed. Rather, the contribution is to identify the effective information ratio \(R(S)=I(S)/D(S)\) as the scale-dependent dynamical variable governing the balance between information injection and entropy dissipation. This makes the scaling interpretation conditional, falsifiable, and tied to measurable quantities in the HCLM framework. This proposition should be read as a mechanistic scaling model, not as a universal theorem for all architectures and datasets. It states that power-law behavior can arise when three conditions hold: information injection grows with scale, entropy dissipation grows in a compatible but slower way, and the residual risk responds regularly to the effective information ratio \(R(S)\).

The interpretation is therefore qualitative as well as mathematical. If dissipation is too weak, \(R(S)\) may grow too rapidly, producing uncontrolled representation expansion, instability, or overfitting. If dissipation is too strong, useful information is suppressed and the loss may plateau. Stable scaling corresponds to a controlled growth of \(I(S)/D(S)\), rather than to pure information accumulation.

\begin{figure}[H]
\centering
\safeincludegraphics[width=0.85\linewidth]{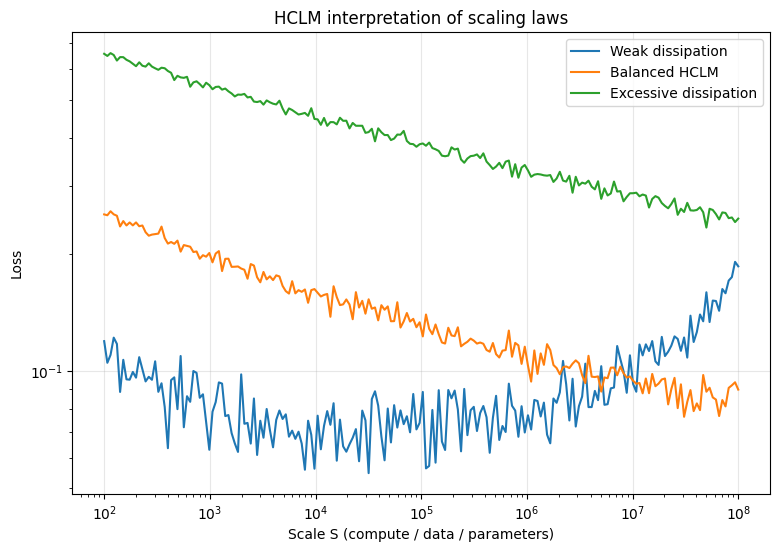}
\caption{HCLM interpretation of scaling laws. The balanced regime produces stable power-law-like improvement, whereas weak or excessive dissipation leads respectively to instability or slower improvement.}
\label{fig:hclm_scaling_loss}
\end{figure}

Figure~\ref{fig:hclm_injection_dissipation} illustrates the two competing scale-dependent processes. Increasing scale injects more usable information into the model, but entropy dissipation regulates how much of this information is retained in the representation geometry.

\begin{figure}[H]
\centering
\safeincludegraphics[width=0.85\linewidth]{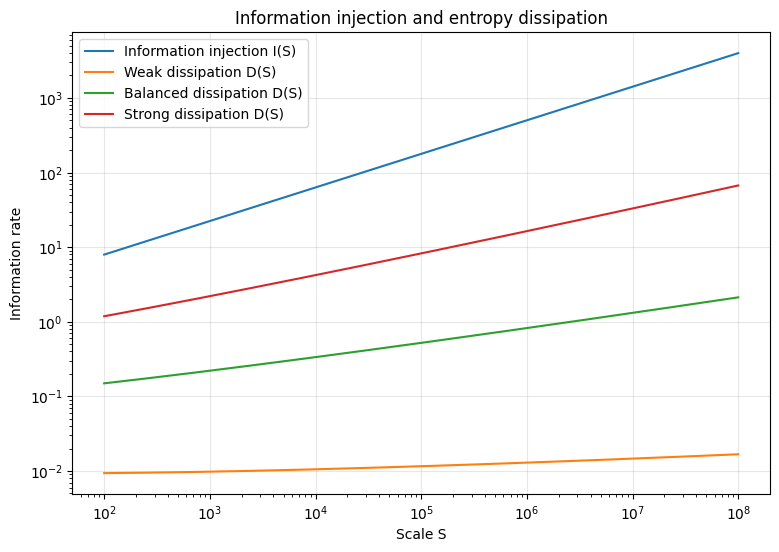}
\caption{Information injection and entropy dissipation as competing scale-dependent processes. Scaling improves performance when information growth is not overwhelmed by dissipation and not left completely uncontrolled.}
\label{fig:hclm_injection_dissipation}
\end{figure}

The effective information ratio \(R(S)=I(S)/D(S)\) summarizes this balance. As shown in Figure~\ref{fig:hclm_ratio}, weak dissipation yields an excessively increasing ratio, while excessive dissipation keeps the ratio too small. The balanced HCLM regime maintains controlled growth of \(R(S)\), which supports stable scaling behavior under Assumption~\ref{ass:risk_response}.

\begin{figure}[H]
\centering
\safeincludegraphics[width=0.85\linewidth]{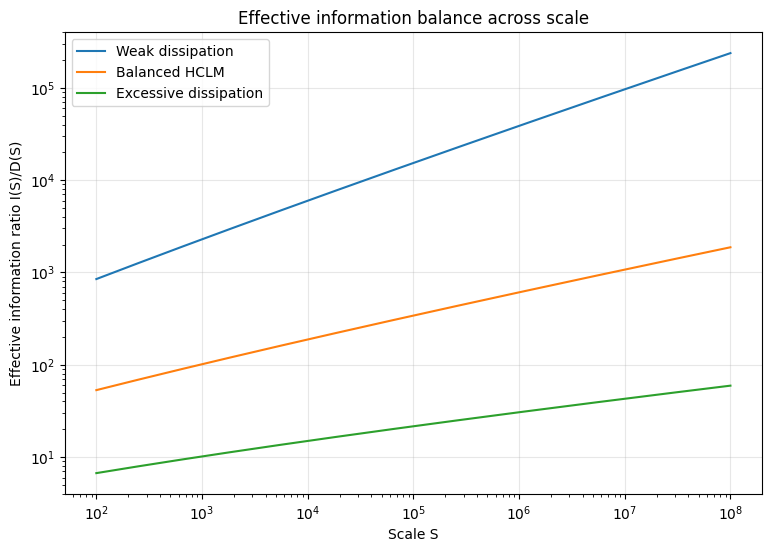}
\caption{Effective information ratio across scale. Stable scaling corresponds to controlled growth of \(I(S)/D(S)\), rather than maximal information accumulation.}
\label{fig:hclm_ratio}
\end{figure}

\section{Empirical Analysis: Evaluating Surrogate Effectiveness and Dynamics}
\label{sec:empirical_analysis}
All empirical experiments reported in Section~8 follow the generic learning structure summarized in Algorithm~\ref{alg:hclm_thermostat}.
The experiments are designed as mechanism-level probes. Their purpose is not to establish state-of-the-art benchmark performance, but to test whether entropy surrogates generate measurable information forces, whether such forces collapse under regularization, and whether this collapse correlates with generalization and reward behavior. Large-scale systems introduce confounding factors such as optimizer engineering, data augmentation, curriculum learning, batch normalization, and architecture-specific inductive biases. Controlled experiments are therefore used deliberately to isolate the dynamical mechanism predicted by HCLM. To validate the theoretical framework empirically, we isolate the effect of the entropy coefficient \(\beta\) on learning dynamics. We evaluate three families of entropy surrogates: softmax entropy, variance entropy, and log-determinant covariance entropy. Their behaviors are compared across Fixed Hybrid, Thermostat, and RL-Thermostat regimes.

\subsection{Softmax Entropy as a Weak Surrogate}

Our empirical findings indicate that softmax entropy behaves as a weak information surrogate. As depicted in Figure~\ref{fig:rl_softmax_test_loss_vs_beta}, increasing \(\beta\) under the softmax formulation does not produce a stable improvement in test loss. Furthermore, the generalization gap in Figure~\ref{fig:rl_softmax_gen_gap_vs_beta} fails to contract meaningfully and deteriorates at higher \(\beta\) values, suggesting that the entropy suppression is either excessive or poorly targeted.

\begin{figure}[H]
\centering
\safeincludegraphics[width=0.82\linewidth]{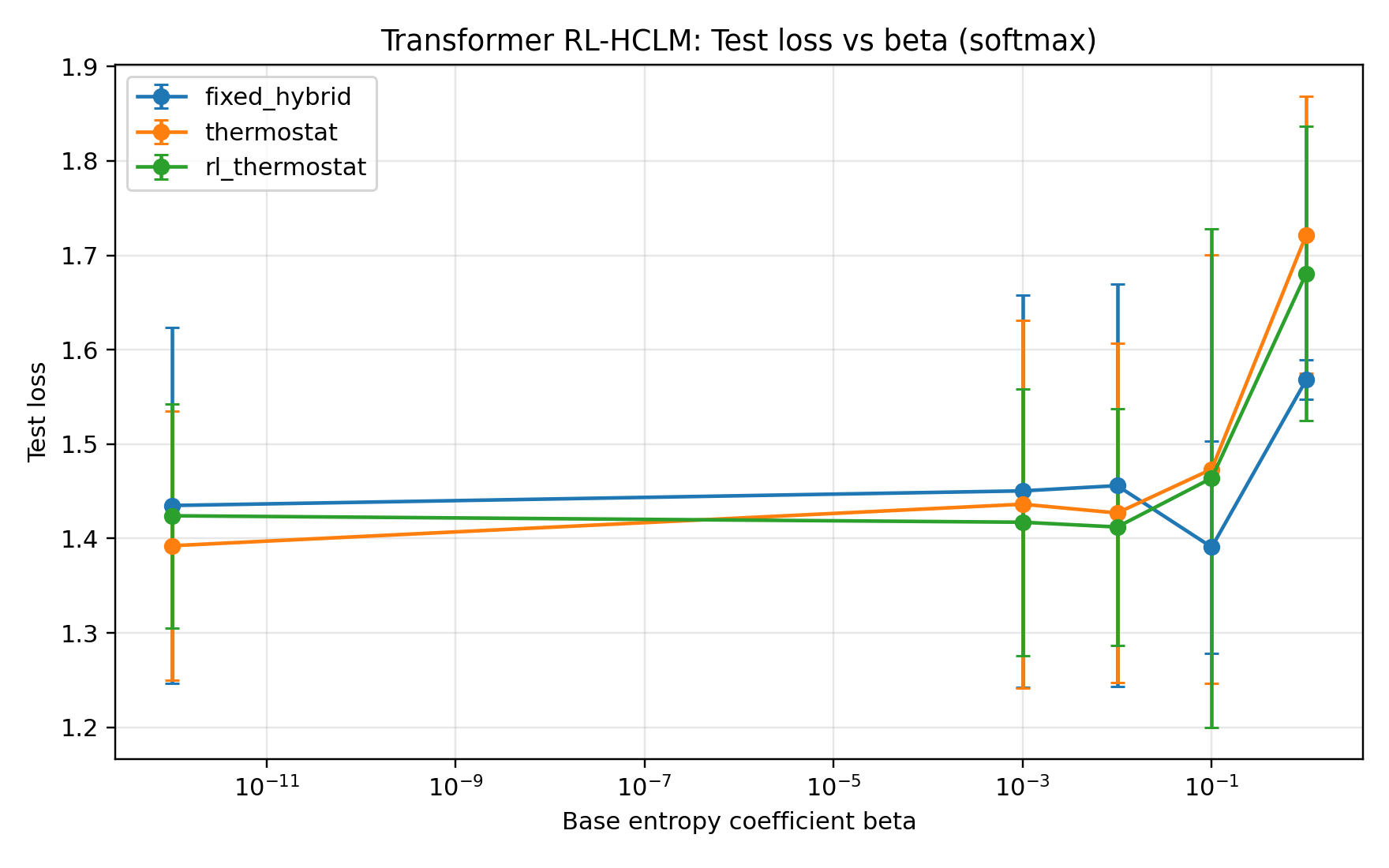}
\caption{Transformer RL-HCLM: test loss versus \(\beta\) using softmax entropy. Softmax entropy produces weak and unstable improvements, confirming its limited effectiveness as an information surrogate.}
\label{fig:rl_softmax_test_loss_vs_beta}
\end{figure}

\begin{figure}[H]
\centering
\safeincludegraphics[width=0.82\linewidth]{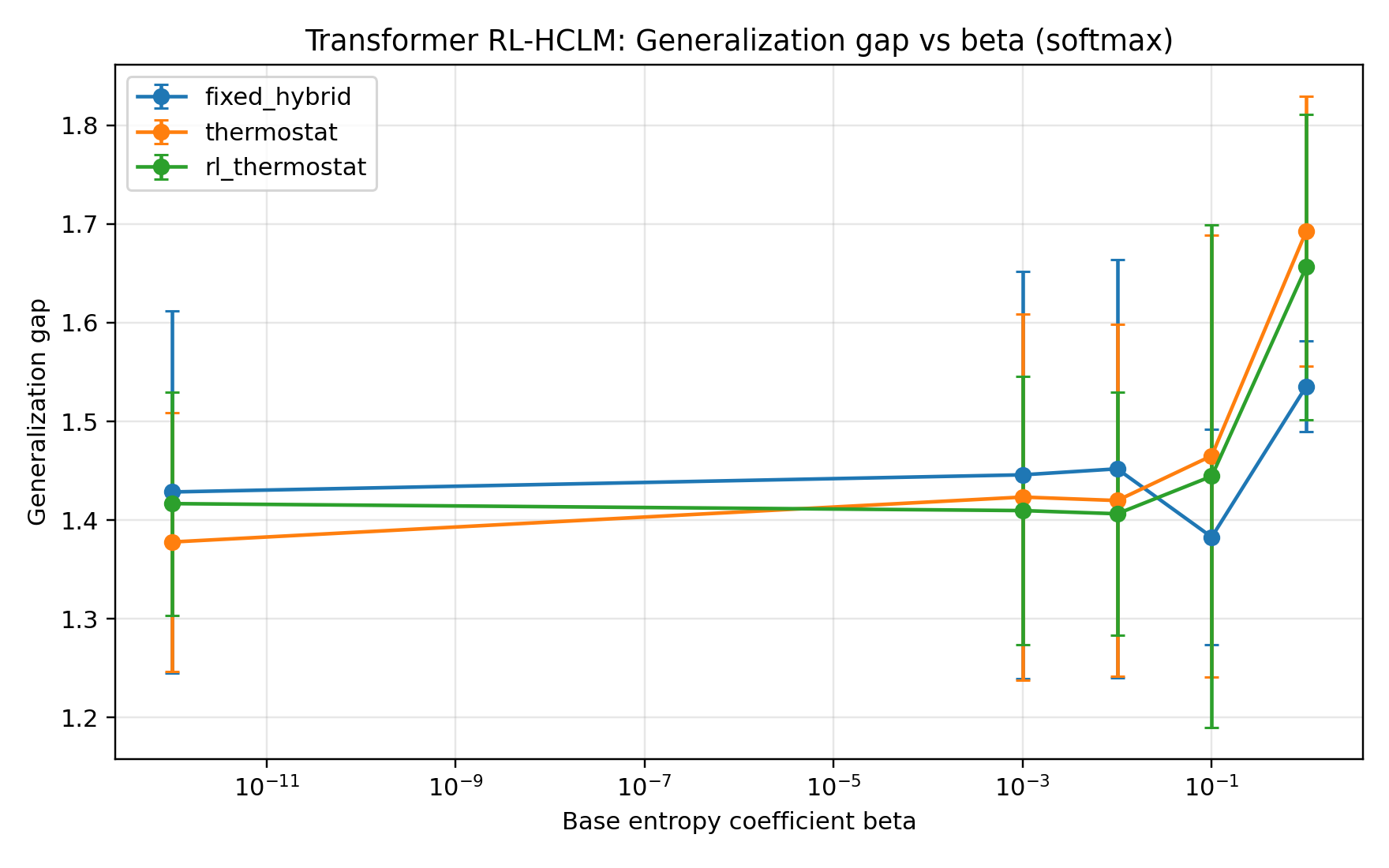}
\caption{Transformer RL-HCLM: generalization gap versus \(\beta\) using softmax entropy. The gap remains high and becomes worse for large \(\beta\), suggesting excessive or poorly targeted entropy suppression.}
\label{fig:rl_softmax_gen_gap_vs_beta}
\end{figure}

The underlying cause of this failure is illuminated in Figure~\ref{fig:rl_softmax_info_force_vs_beta}, which tracks the information force. The force induced by softmax entropy is relatively small and noisy, rendering it dynamically ineffective at steering the optimizer. Even when representation entropy quantitatively decreases, as shown in Figure~\ref{fig:rl_softmax_entropy_vs_beta}, this reduction mainly squashes activation distributions without capturing geometric structure, and therefore fails to translate into robust generalization.

\begin{figure}[H]
\centering
\safeincludegraphics[width=0.82\linewidth]{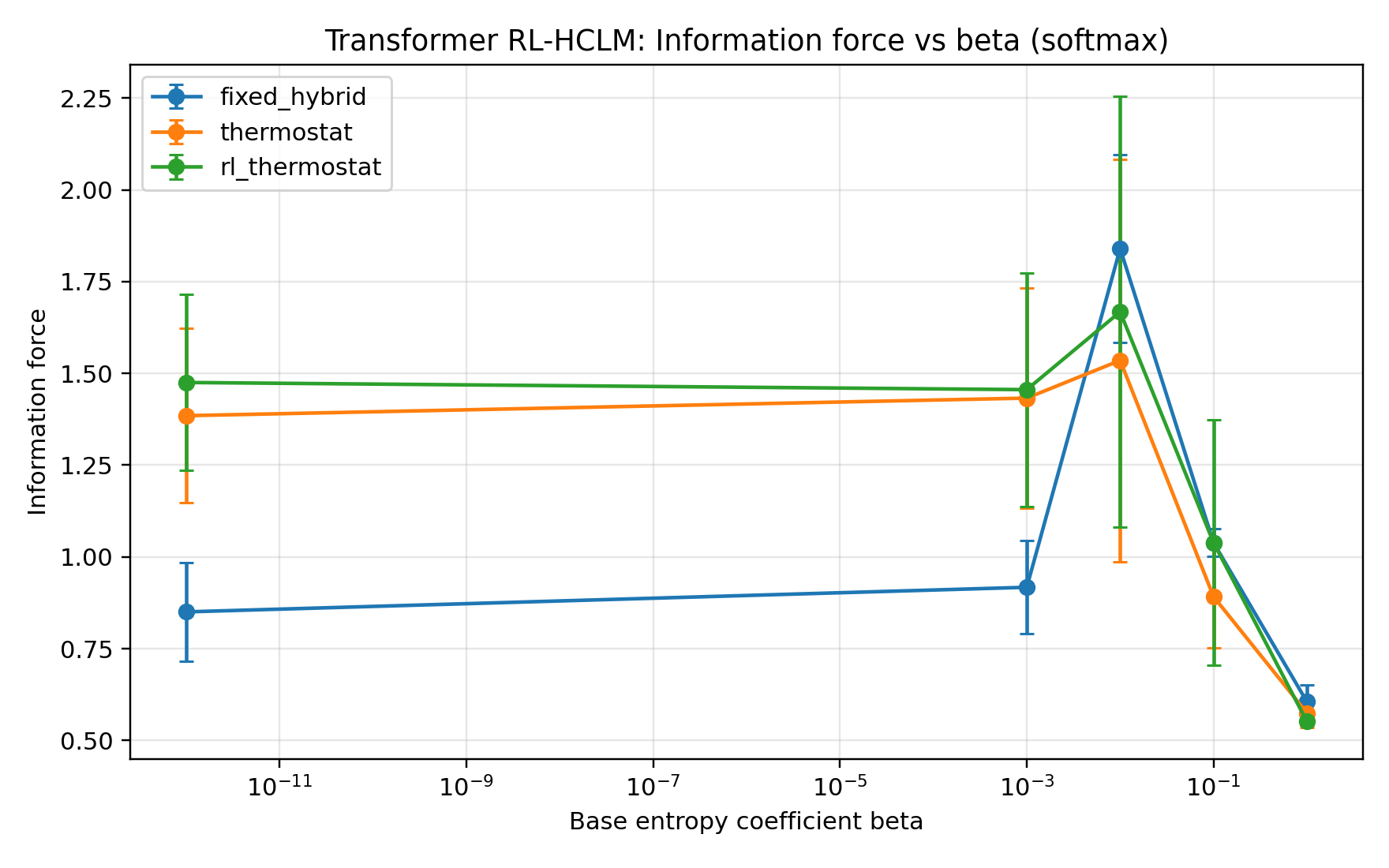}
\caption{Transformer RL-HCLM: information force versus \(\beta\) using softmax entropy. The force is relatively small and unstable, indicating weak dynamic effectiveness.}
\label{fig:rl_softmax_info_force_vs_beta}
\end{figure}

\begin{figure}[H]
\centering
\safeincludegraphics[width=0.82\linewidth]{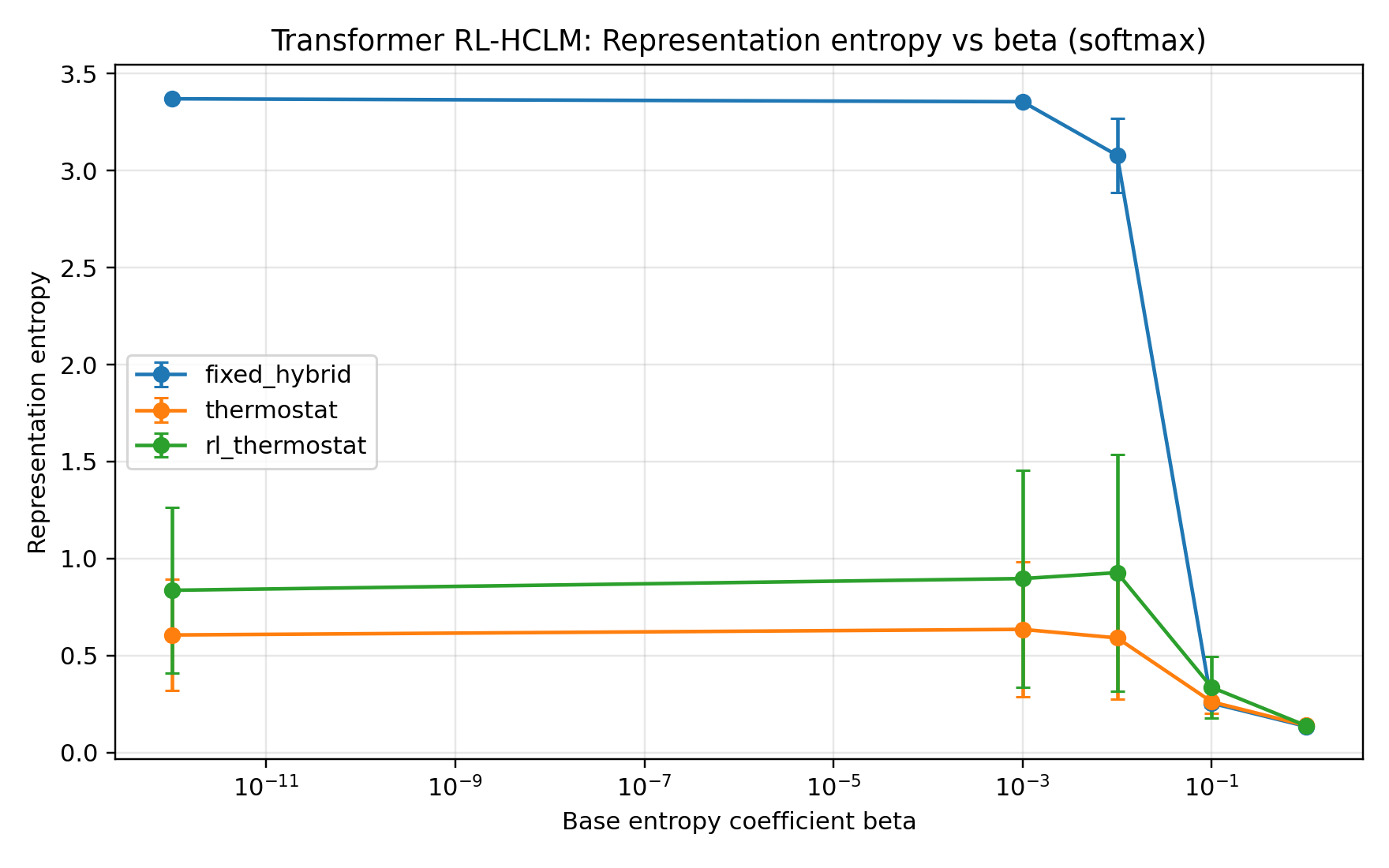}
\caption{Transformer RL-HCLM: representation entropy versus \(\beta\) using softmax entropy. Entropy decreases for large \(\beta\), but this reduction does not translate into robust generalization.}
\label{fig:rl_softmax_entropy_vs_beta}
\end{figure}

Although the adaptive thermostat controller attempts to regulate learning (Figure~\ref{fig:rl_softmax_thermostat_beta_vs_beta}), the fundamental weakness of the surrogate limits its utility. Consequently, while the RL-thermostat modestly improves human/RL reward relative to fixed hybrid control (Figure~\ref{fig:rl_softmax_reward_vs_beta}), the system remains unstable in high-\(\beta\) regimes.

\begin{figure}[H]
\centering
\safeincludegraphics[width=0.82\linewidth]{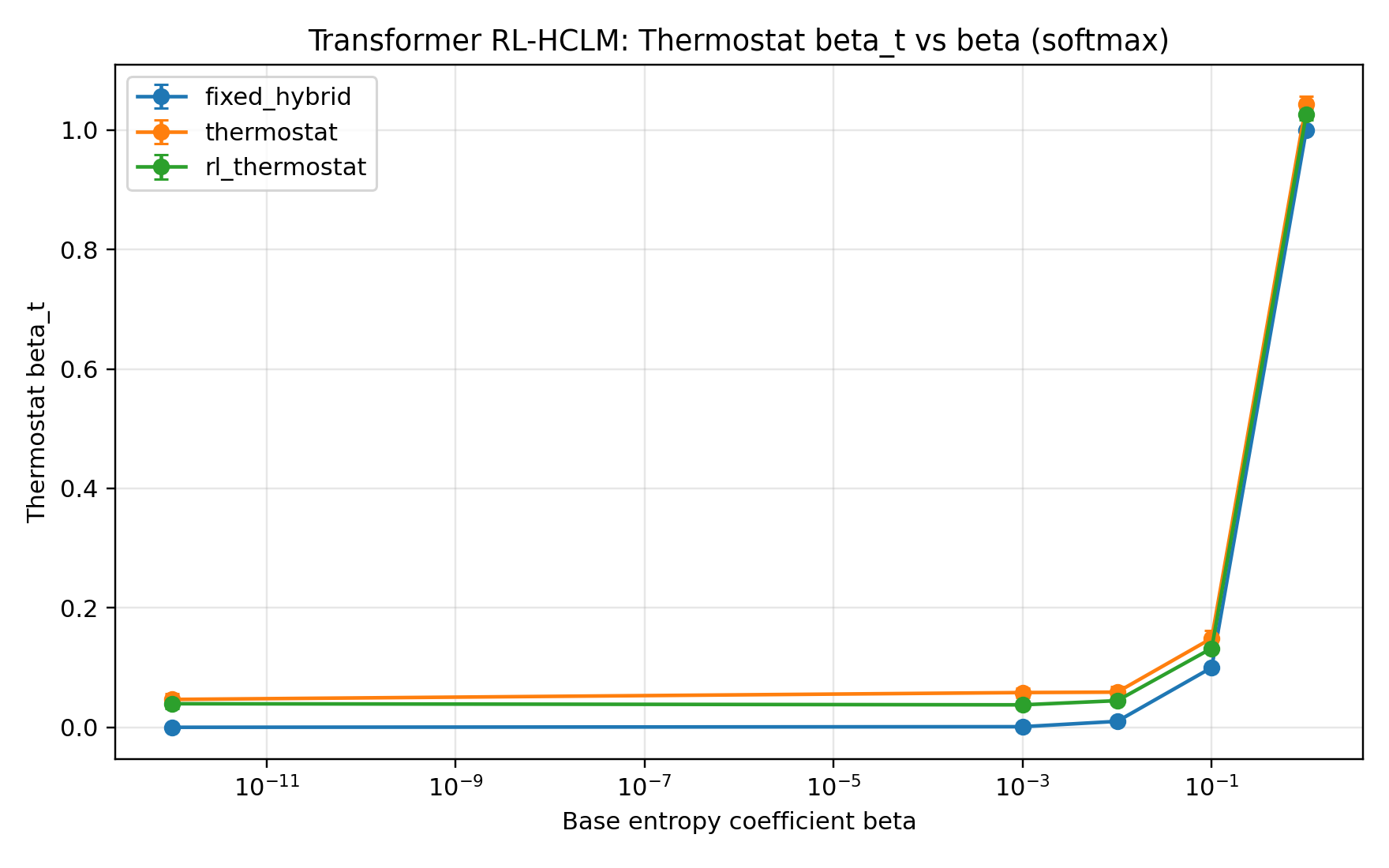}
\caption{Transformer RL-HCLM: thermostat coefficient \(\beta_t\) versus base \(\beta\) using softmax entropy. The adaptive controller activates, but the weak surrogate limits its usefulness.}
\label{fig:rl_softmax_thermostat_beta_vs_beta}
\end{figure}

\begin{figure}[H]
\centering
\safeincludegraphics[width=0.82\linewidth]{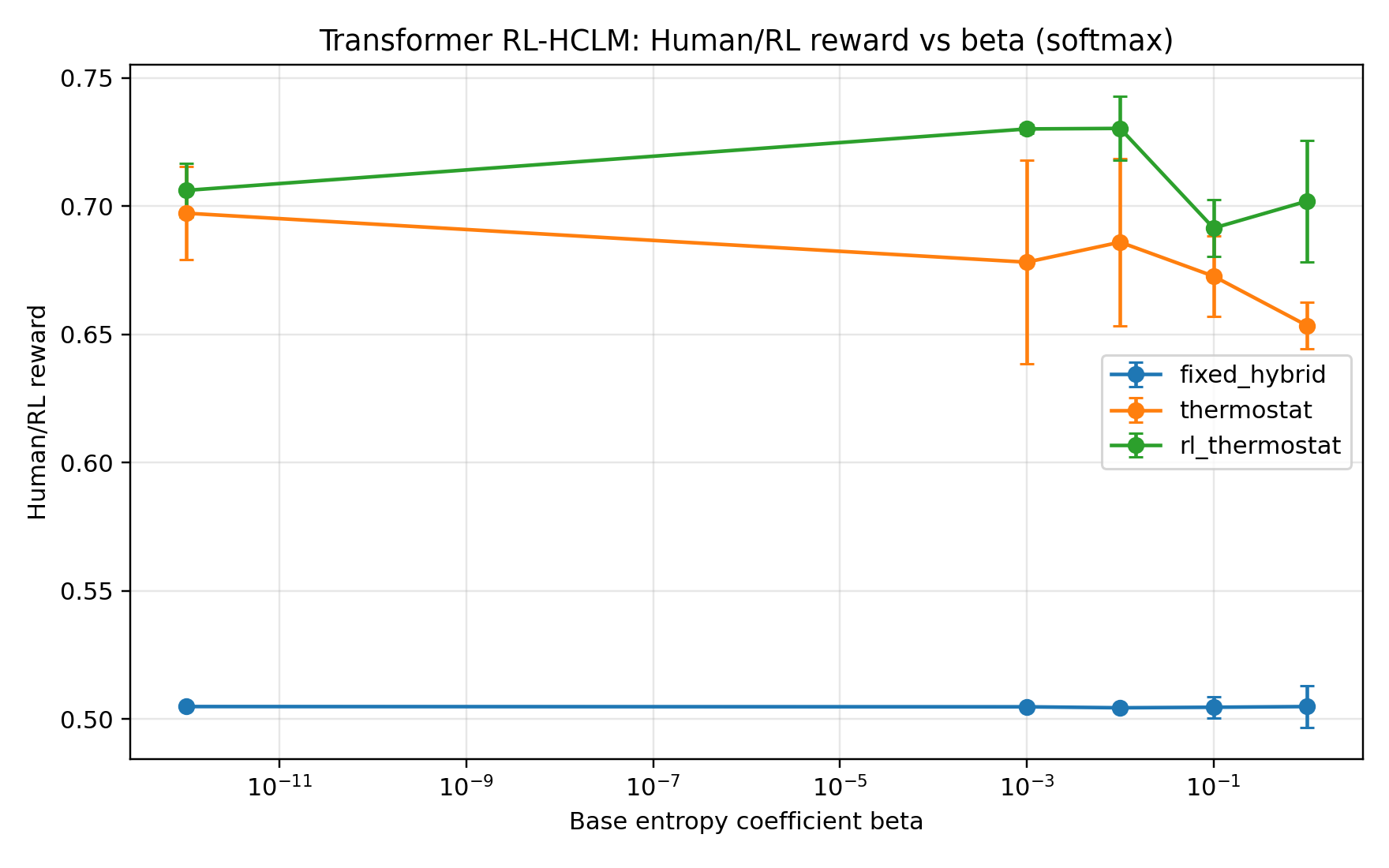}
\caption{Transformer RL-HCLM: human/RL reward versus \(\beta\) using softmax entropy. RL-thermostat improves reward relative to fixed hybrid control, but the softmax surrogate remains unstable for high \(\beta\).}
\label{fig:rl_softmax_reward_vs_beta}
\end{figure}

\subsection{Variance Entropy and Marginal Dispersion}

Moving beyond simple activations, variance entropy provides a stronger surrogate by directly measuring marginal representation spread. As observed in Figure~\ref{fig:rl_variance_test_loss_vs_beta}, both Thermostat and RL-thermostat methods stabilize and lower the test loss compared to fixed hybrid control. More importantly, Figure~\ref{fig:rl_variance_gen_gap_vs_beta} shows a contraction of the generalization gap across a broad range of \(\beta\), supporting the usefulness of marginal dispersion control.

\begin{figure}[H]
\centering
\safeincludegraphics[width=0.82\linewidth]{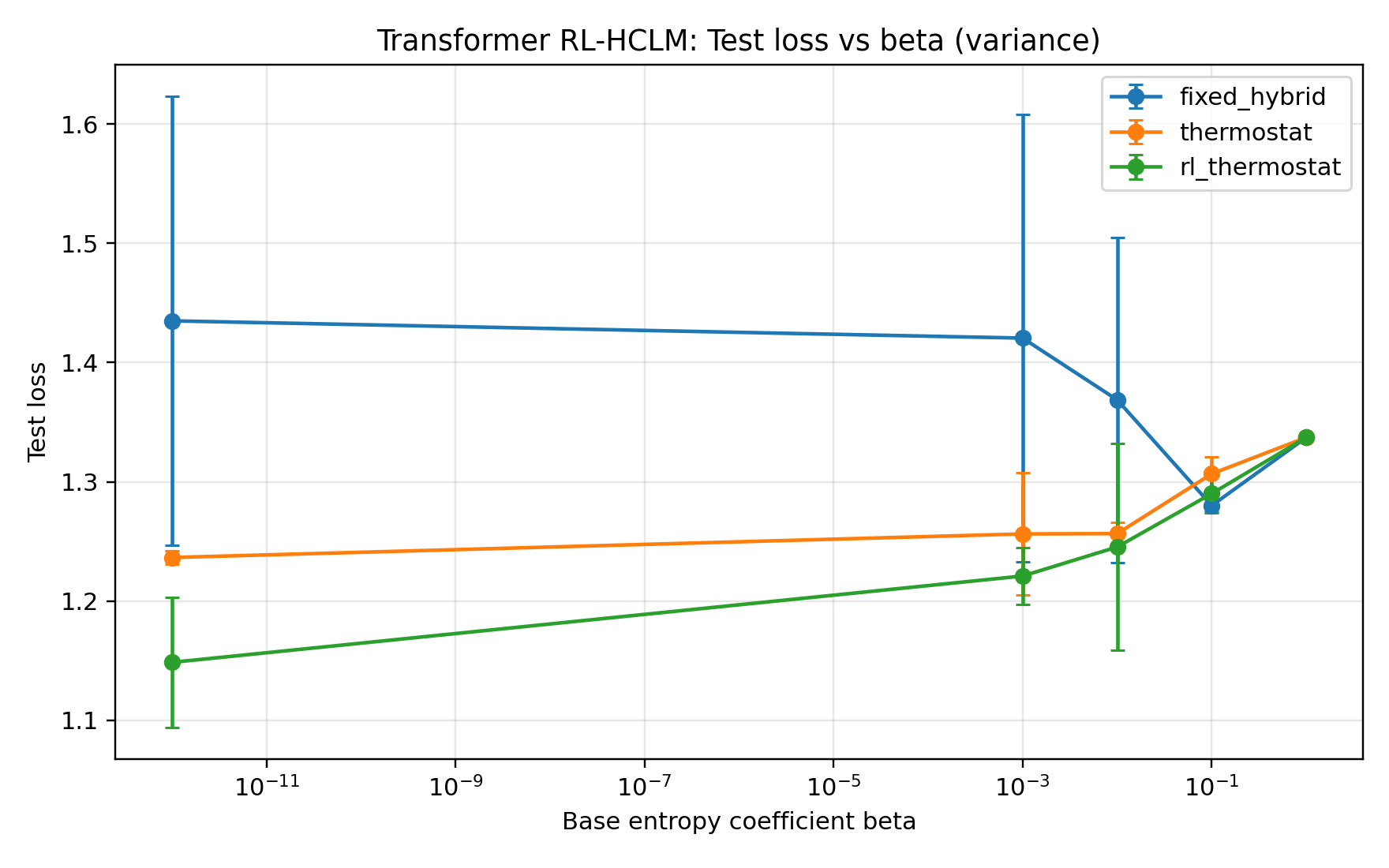}
\caption{Transformer RL-HCLM: test loss versus \(\beta\) using variance entropy. Thermostat and RL-thermostat regimes improve stability compared with fixed hybrid HCLM.}
\label{fig:rl_variance_test_loss_vs_beta}
\end{figure}

\begin{figure}[H]
\centering
\safeincludegraphics[width=0.82\linewidth]{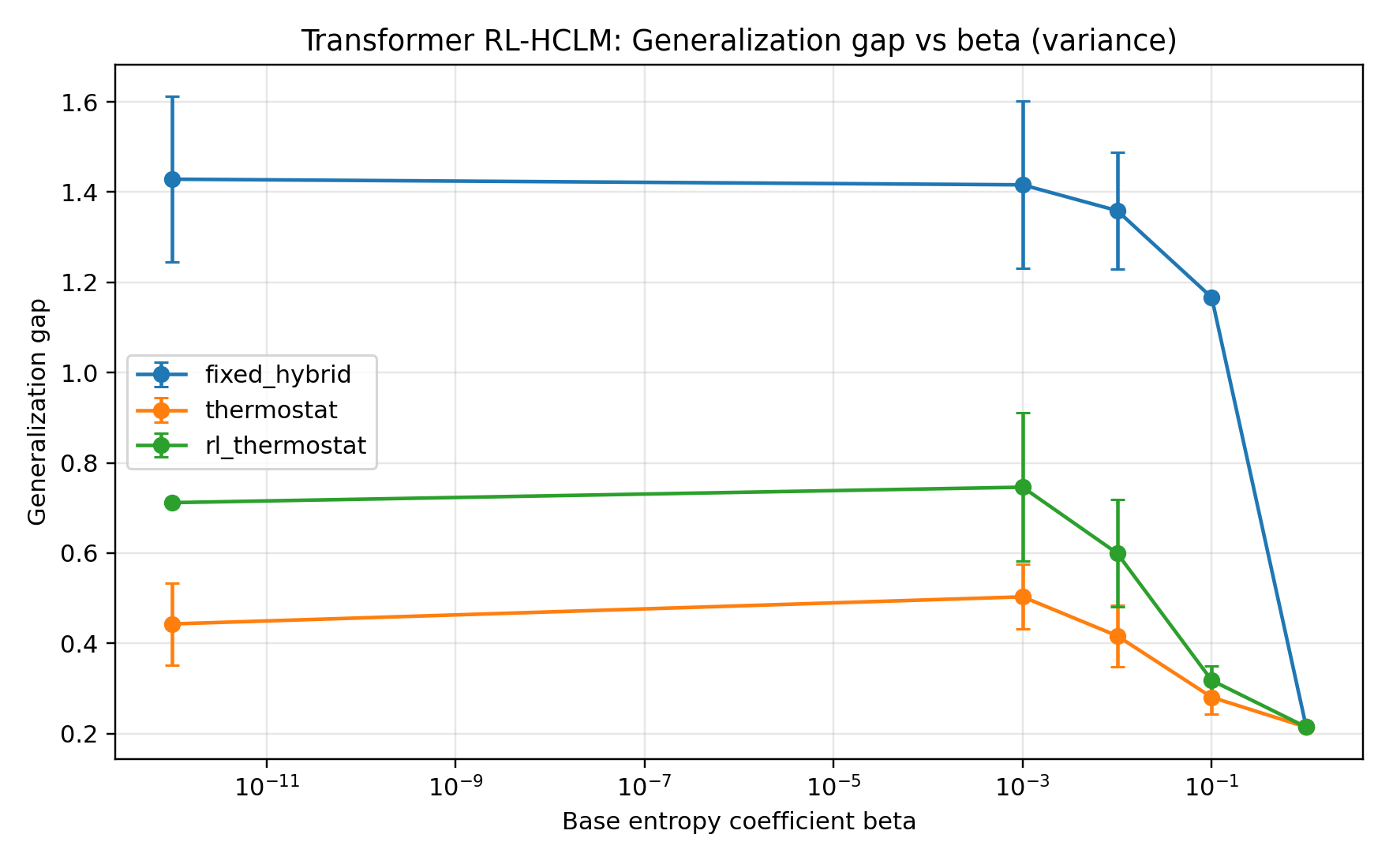}
\caption{Transformer RL-HCLM: generalization gap versus \(\beta\) using variance entropy. Adaptive thermostat control significantly reduces the gap compared with fixed hybrid HCLM.}
\label{fig:rl_variance_gen_gap_vs_beta}
\end{figure}

The mechanism of this stability is visible in Figure~\ref{fig:rl_variance_info_force_vs_beta}, where the RL-thermostat maintains a lower, controlled information force. The thermostat computes a nonzero adaptive dissipation coefficient \(\beta_t\) dynamically (Figure~\ref{fig:rl_variance_thermostat_beta_vs_beta}). This targeted control directly translates to higher and more stable human/RL reward (Figure~\ref{fig:rl_variance_reward_vs_beta}), supporting the hypothesis that human feedback can be interpreted as adaptive entropy control.

\begin{figure}[H]
\centering
\safeincludegraphics[width=0.82\linewidth]{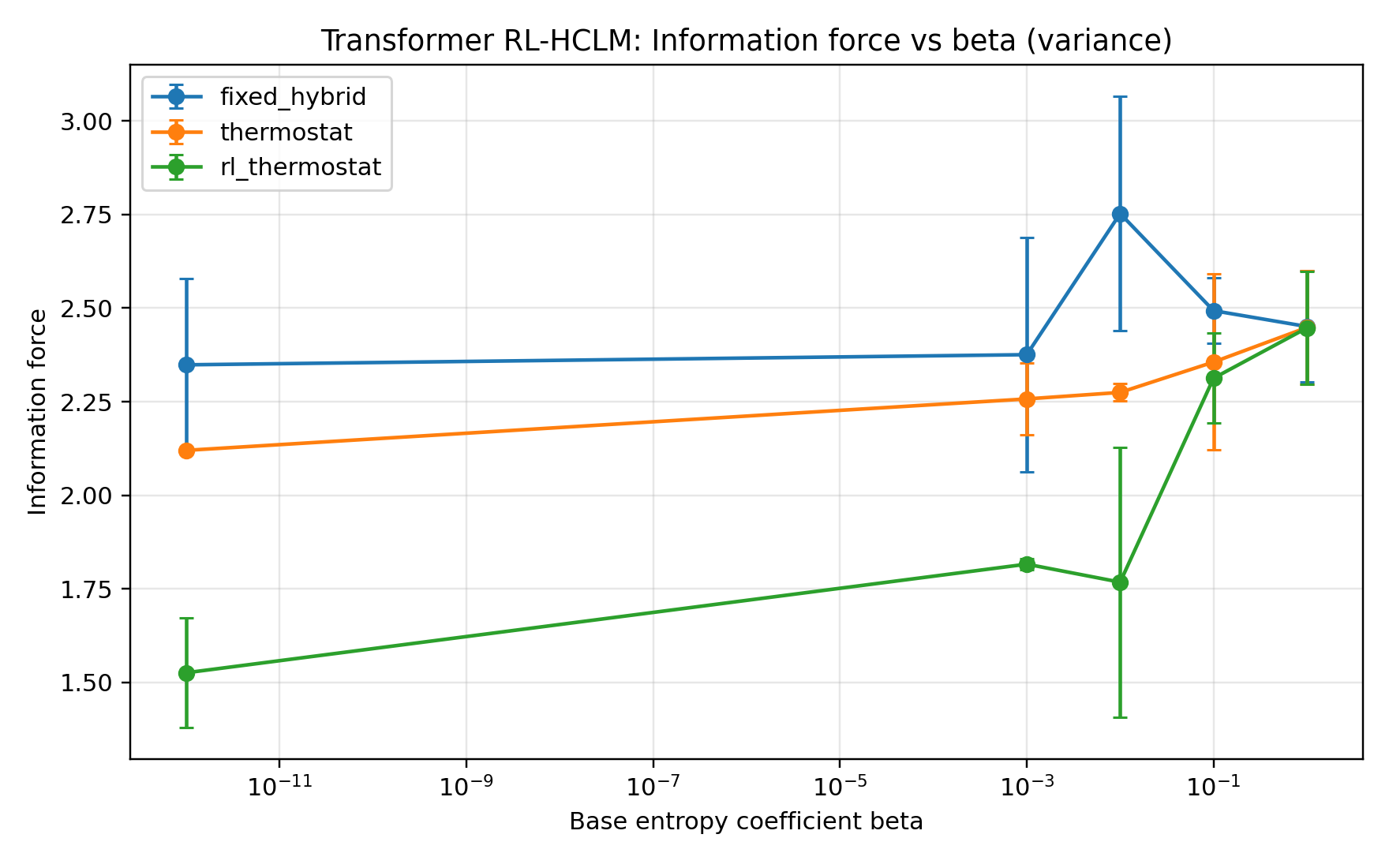}
\caption{Transformer RL-HCLM: information force versus \(\beta\) using variance entropy. RL-thermostat maintains a lower information force than fixed hybrid control for small and moderate \(\beta\), supporting the thermostat interpretation.}
\label{fig:rl_variance_info_force_vs_beta}
\end{figure}

\begin{figure}[H]
\centering
\safeincludegraphics[width=0.82\linewidth]{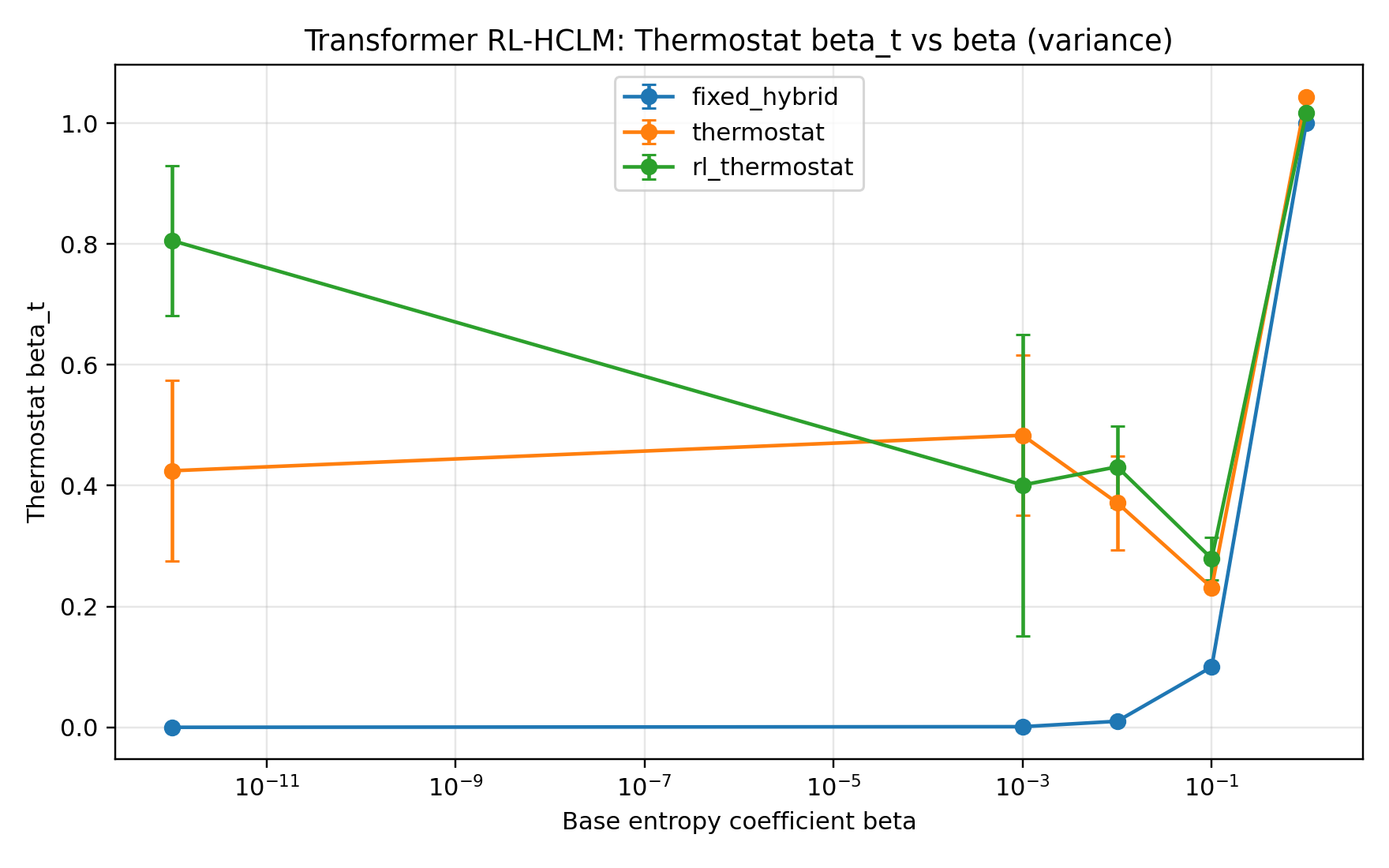}
\caption{Transformer RL-HCLM: thermostat coefficient \(\beta_t\) versus base \(\beta\) using variance entropy. The controller produces a nonzero adaptive dissipation coefficient even when the base \(\beta\) is small.}
\label{fig:rl_variance_thermostat_beta_vs_beta}
\end{figure}

\begin{figure}[H]
\centering
\safeincludegraphics[width=0.82\linewidth]{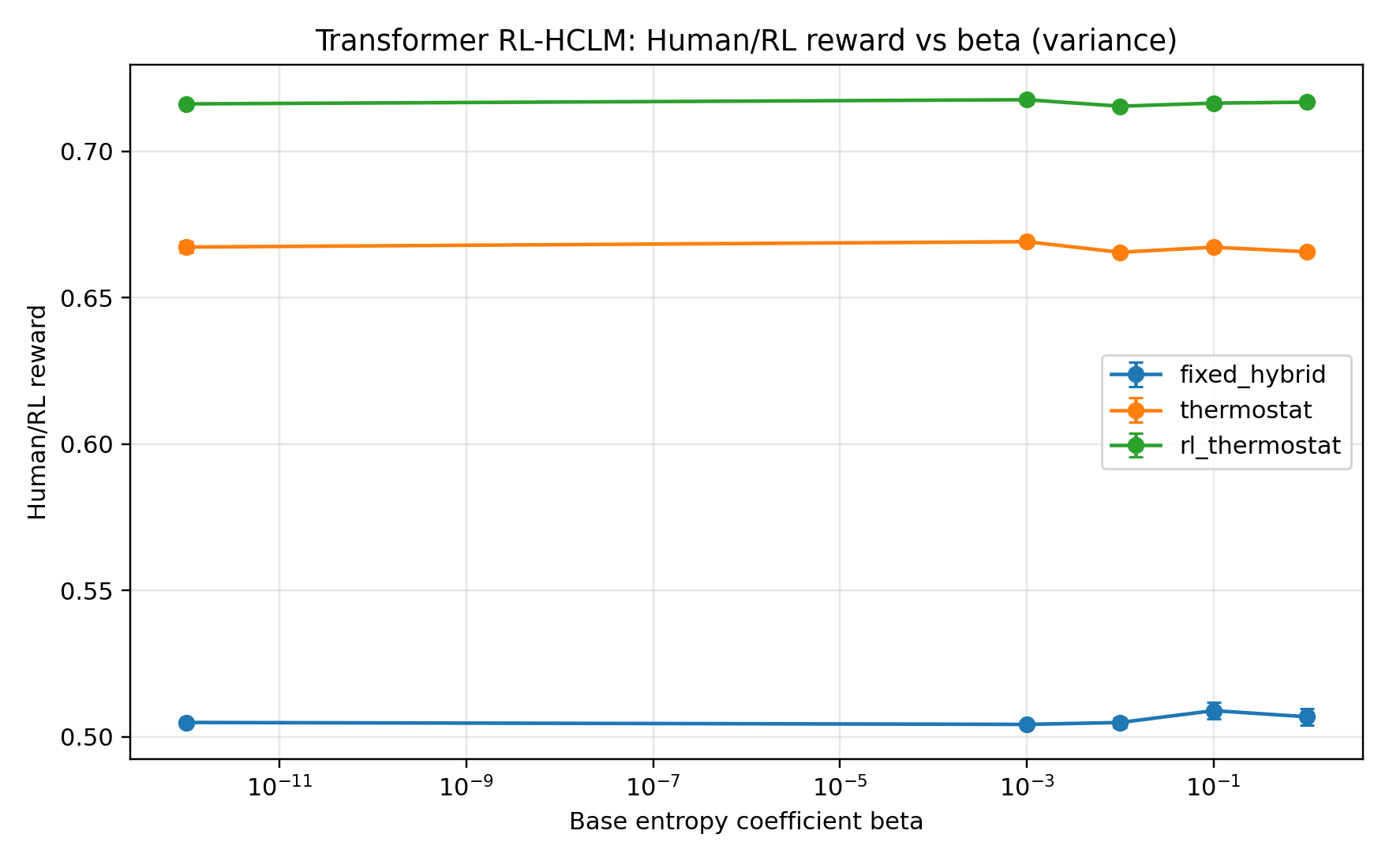}
\caption{Transformer RL-HCLM: human/RL reward versus \(\beta\) using variance entropy. RL-thermostat achieves the highest reward, showing that human/RL feedback can be interpreted as adaptive entropy control.}
\label{fig:rl_variance_reward_vs_beta}
\end{figure}

\subsection{Log-Determinant Entropy and the Three Regimes of Geometry}

Log-determinant covariance entropy provides the clearest empirical support for the HCLM theory because it captures multivariate representation volume. Figure~\ref{fig:rl_logdet_test_loss_vs_beta} shows a contrast between control regimes: fixed hybrid control suffers at extremes, whereas adaptive thermostats stabilize the test loss.

\begin{figure}[H]
\centering
\safeincludegraphics[width=0.82\linewidth]{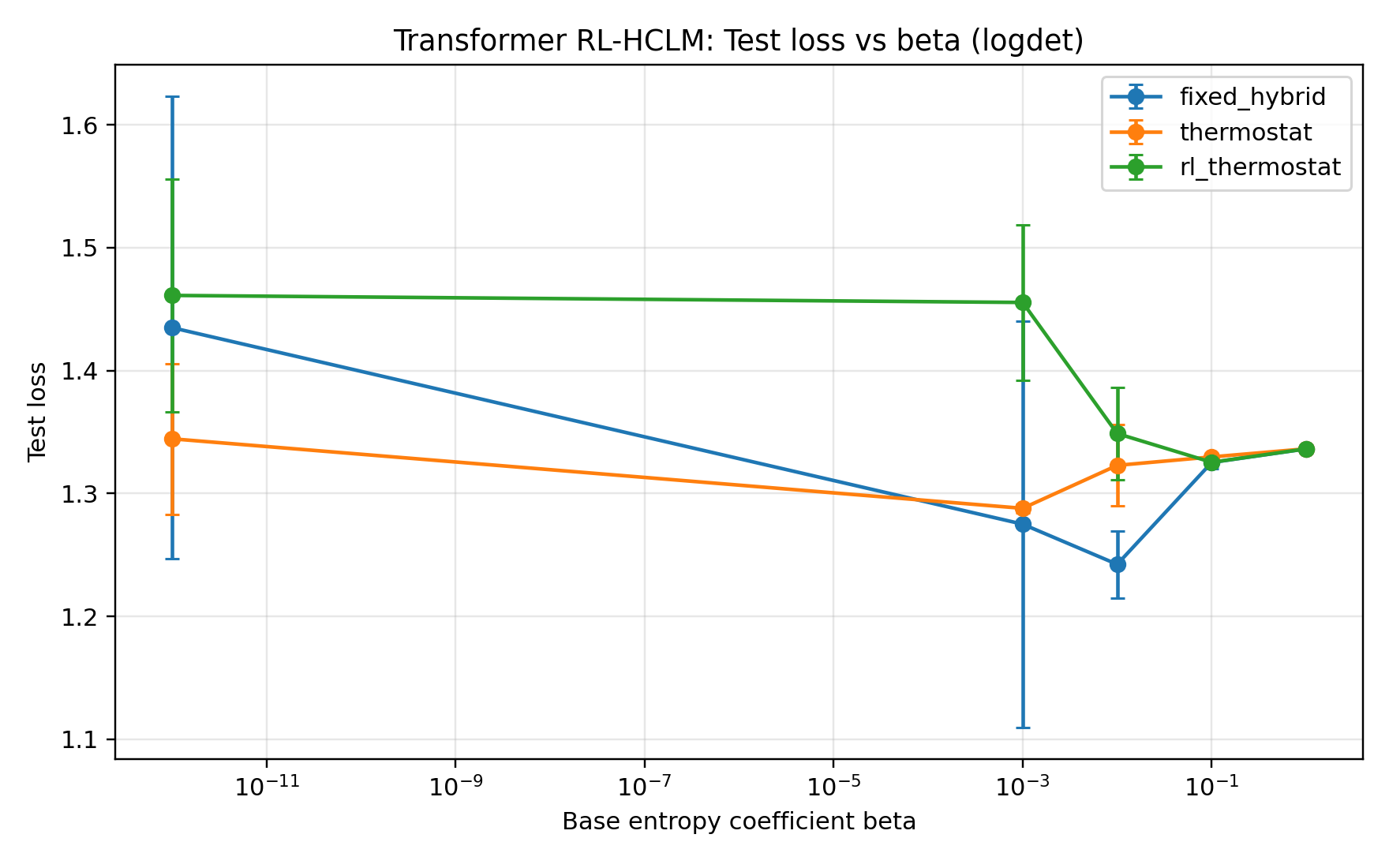}
\caption{Transformer RL-HCLM: test loss versus \(\beta\) using log-determinant entropy. Thermostat and RL-thermostat regimes stabilize the test loss compared with fixed hybrid control.}
\label{fig:rl_logdet_test_loss_vs_beta}
\end{figure}

The superiority of this geometric surrogate is highlighted in Figure~\ref{fig:rl_logdet_gen_gap_vs_beta}, where the generalization gap is sharply reduced. Furthermore, Figure~\ref{fig:rl_logdet_info_force_vs_beta} reveals three regimes of representation geometry: unregularized models exhibit high information force, while adaptive controls suppress excessive force.

\begin{figure}[H]
\centering
\safeincludegraphics[width=0.82\linewidth]{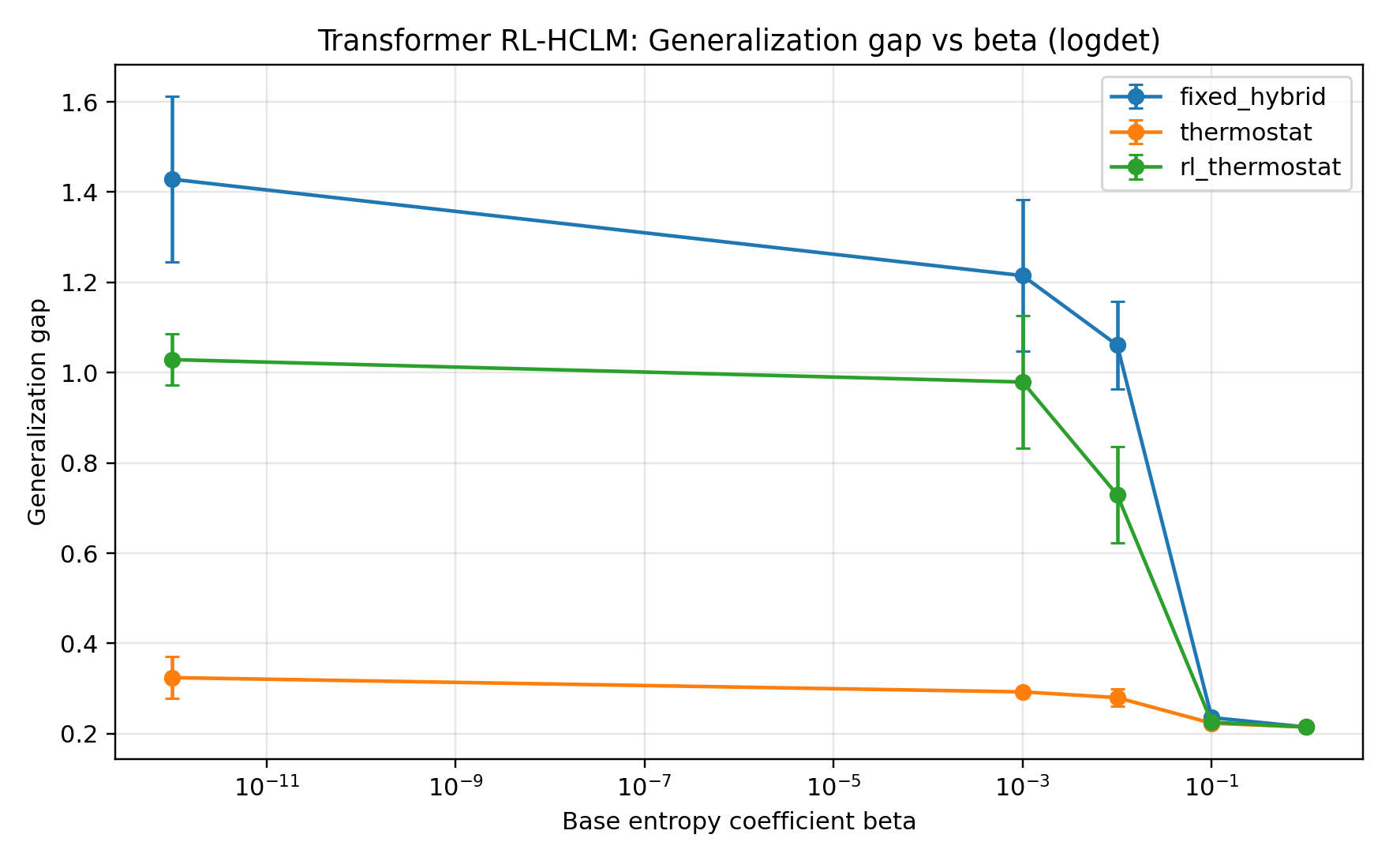}
\caption{Transformer RL-HCLM: generalization gap versus \(\beta\) using log-determinant entropy. Thermostat control sharply reduces the gap, especially for small and moderate \(\beta\).}
\label{fig:rl_logdet_gen_gap_vs_beta}
\end{figure}

\begin{figure}[H]
\centering
\safeincludegraphics[width=0.82\linewidth]{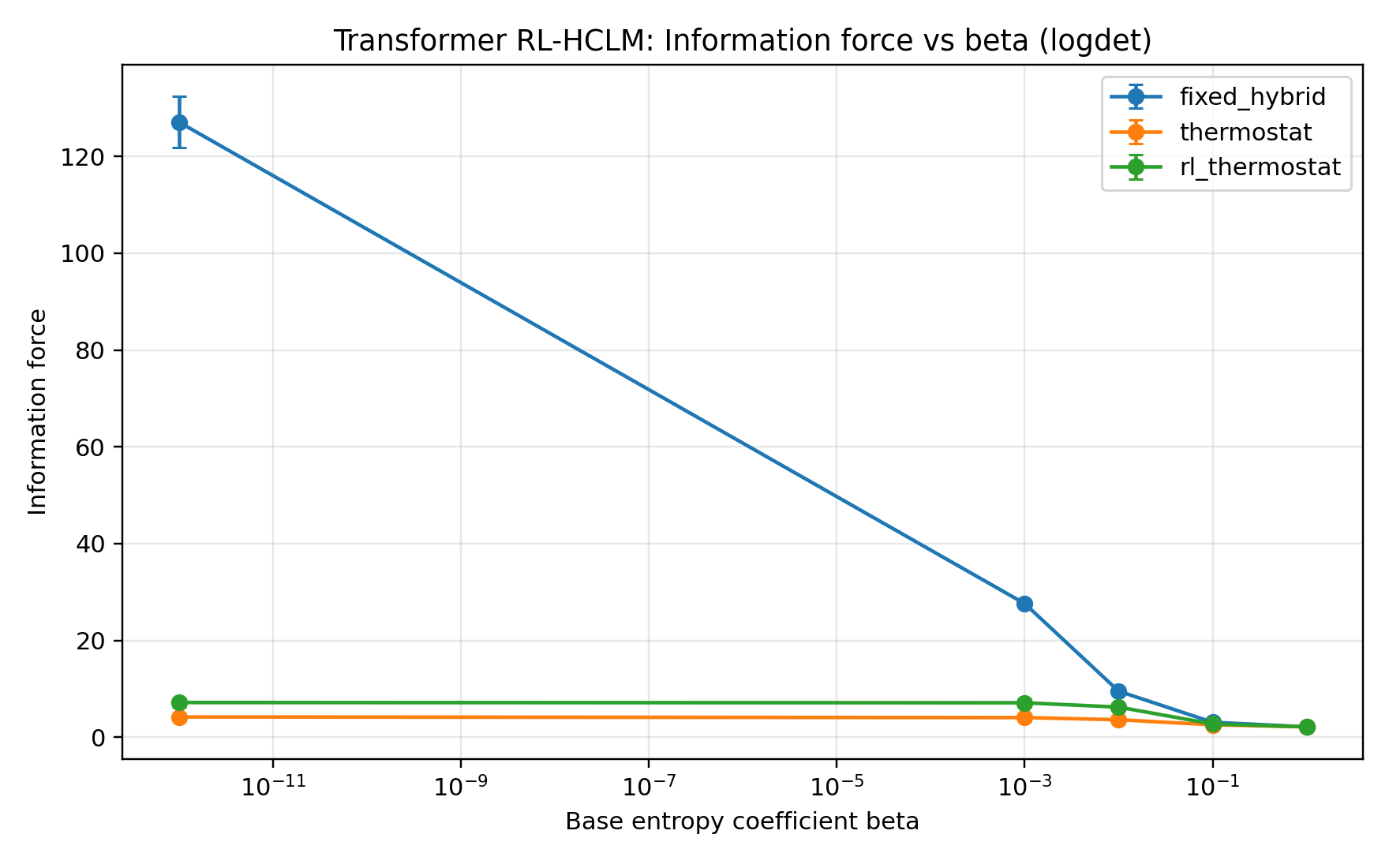}
\caption{Transformer RL-HCLM: information force versus \(\beta\) using log-determinant entropy. Fixed hybrid control exhibits very high information force for small \(\beta\), while thermostat and RL-thermostat suppress excessive force.}
\label{fig:rl_logdet_info_force_vs_beta}
\end{figure}

By stabilizing the representation entropy (Figure~\ref{fig:rl_logdet_entropy_vs_beta}) and dynamically regulating information dissipation through \(\beta_t\) (Figure~\ref{fig:rl_logdet_thermostat_beta_vs_beta}), the RL-thermostat achieves the maximum reward observed in the experiment (Figure~\ref{fig:rl_logdet_reward_vs_beta}), supporting the interpretation of alignment as thermodynamic regulation.

\begin{figure}[H]
\centering
\safeincludegraphics[width=0.82\linewidth]{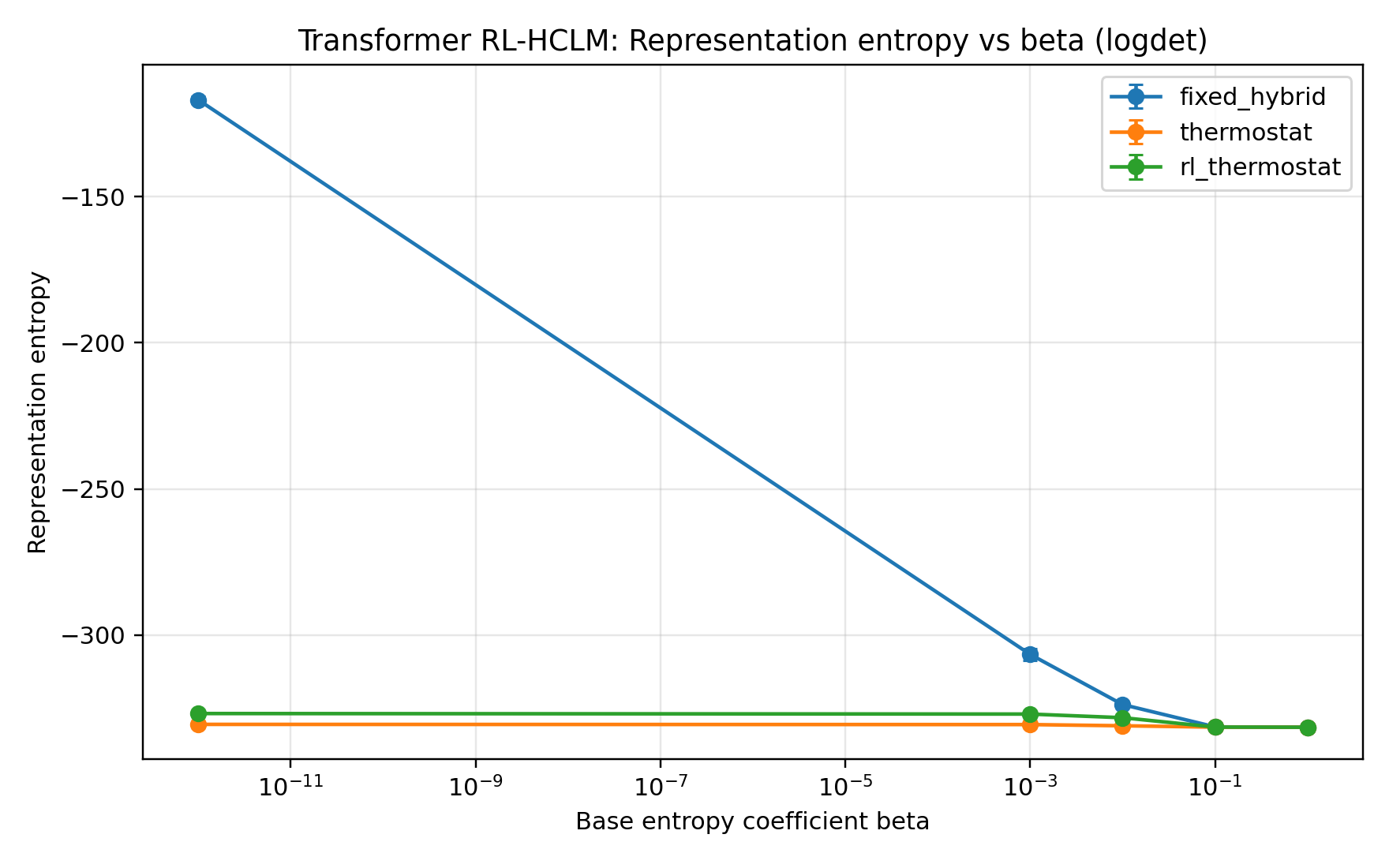}
\caption{Transformer RL-HCLM: representation entropy versus \(\beta\) using log-determinant entropy. Adaptive control stabilizes entropy and avoids the extreme representation-volume dynamics observed in the fixed hybrid regime.}
\label{fig:rl_logdet_entropy_vs_beta}
\end{figure}

\begin{figure}[H]
\centering
\safeincludegraphics[width=0.82\linewidth]{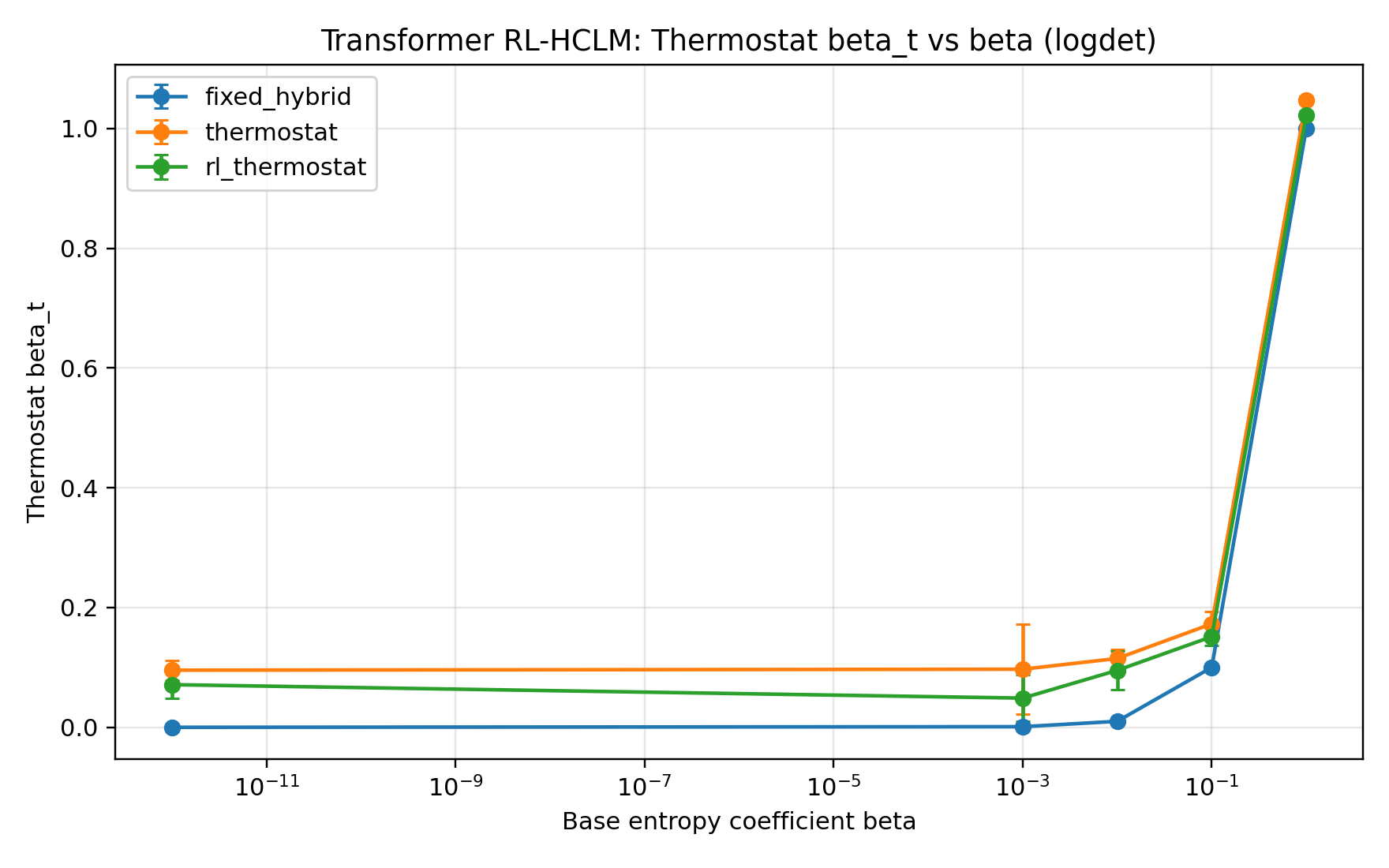}
\caption{Transformer RL-HCLM: thermostat coefficient \(\beta_t\) versus base \(\beta\) using log-determinant entropy. Adaptive \(\beta_t\) remains active even for small base \(\beta\), showing that the thermostat regulates information dissipation dynamically.}
\label{fig:rl_logdet_thermostat_beta_vs_beta}
\end{figure}

\begin{figure}[H]
\centering
\safeincludegraphics[width=0.82\linewidth]{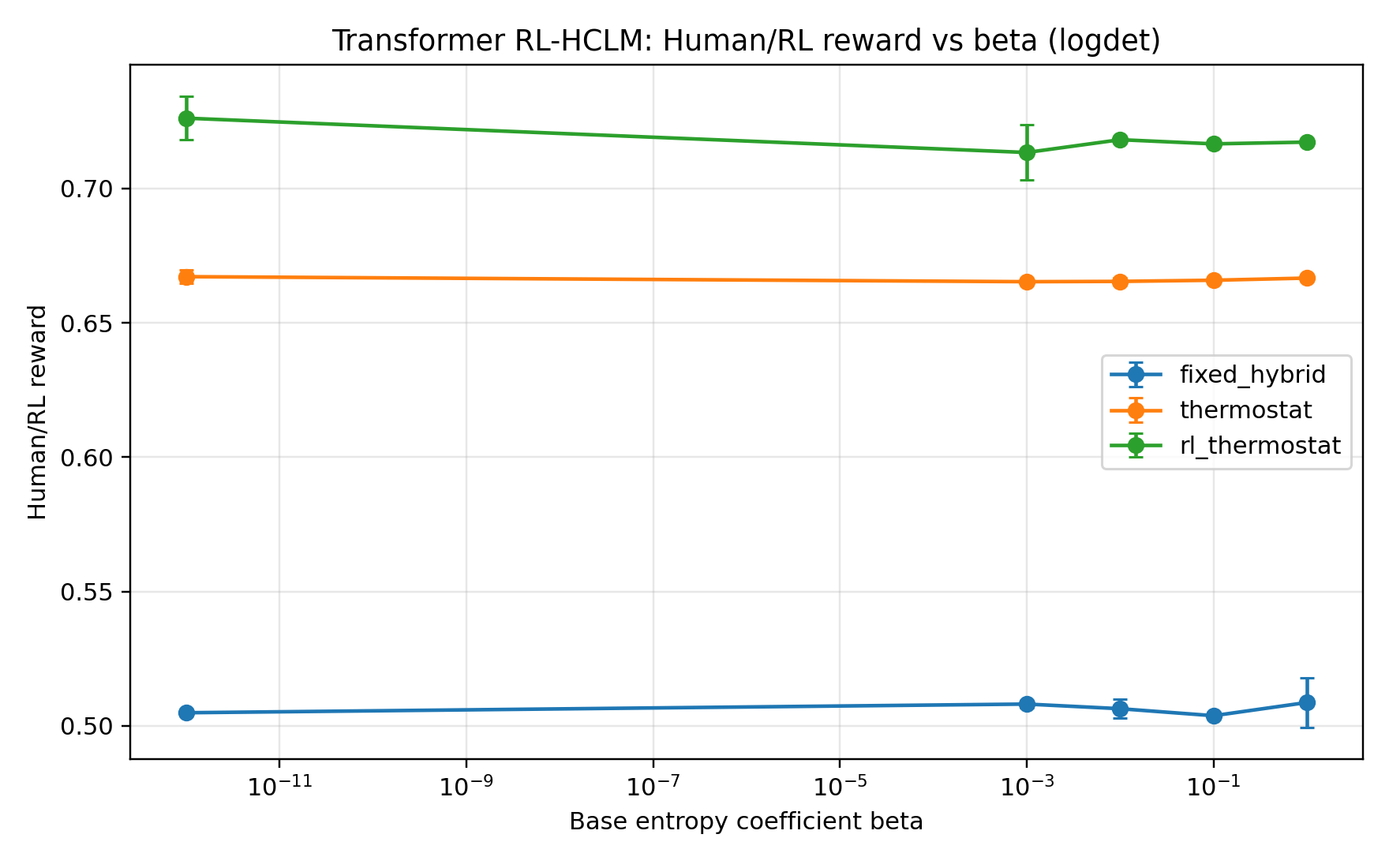}
\caption{Transformer RL-HCLM: human/RL reward versus \(\beta\) using log-determinant entropy. RL-thermostat achieves the highest reward, supporting the interpretation of human/RL feedback as thermodynamic control.}
\label{fig:rl_logdet_reward_vs_beta}
\end{figure}

\subsection{Time-Resolved Dynamics: The Anatomy of Force Collapse}

The time-resolved dynamics over epochs provide empirical support for Proposition~\ref{prop:force_stab}. Examining fixed hybrid control, Figure~\ref{fig:rl_logdet_fixed_test_loss_dynamics} demonstrates that while intermediate \(\beta\) stabilizes learning, \(\beta=0\) fails. The mechanism behind this is the information force collapse, visible in Figure~\ref{fig:rl_logdet_fixed_info_force_dynamics}: positive \(\beta\) values induce a rapid contraction of force in early epochs, whereas \(\beta=0\) remains trapped in a high-force expansion regime. Correspondingly, Figure~\ref{fig:rl_logdet_fixed_entropy_dynamics} tracks how the representation shifts from expansion to compression.

\begin{figure}[H]
\centering
\safeincludegraphics[width=0.82\linewidth]{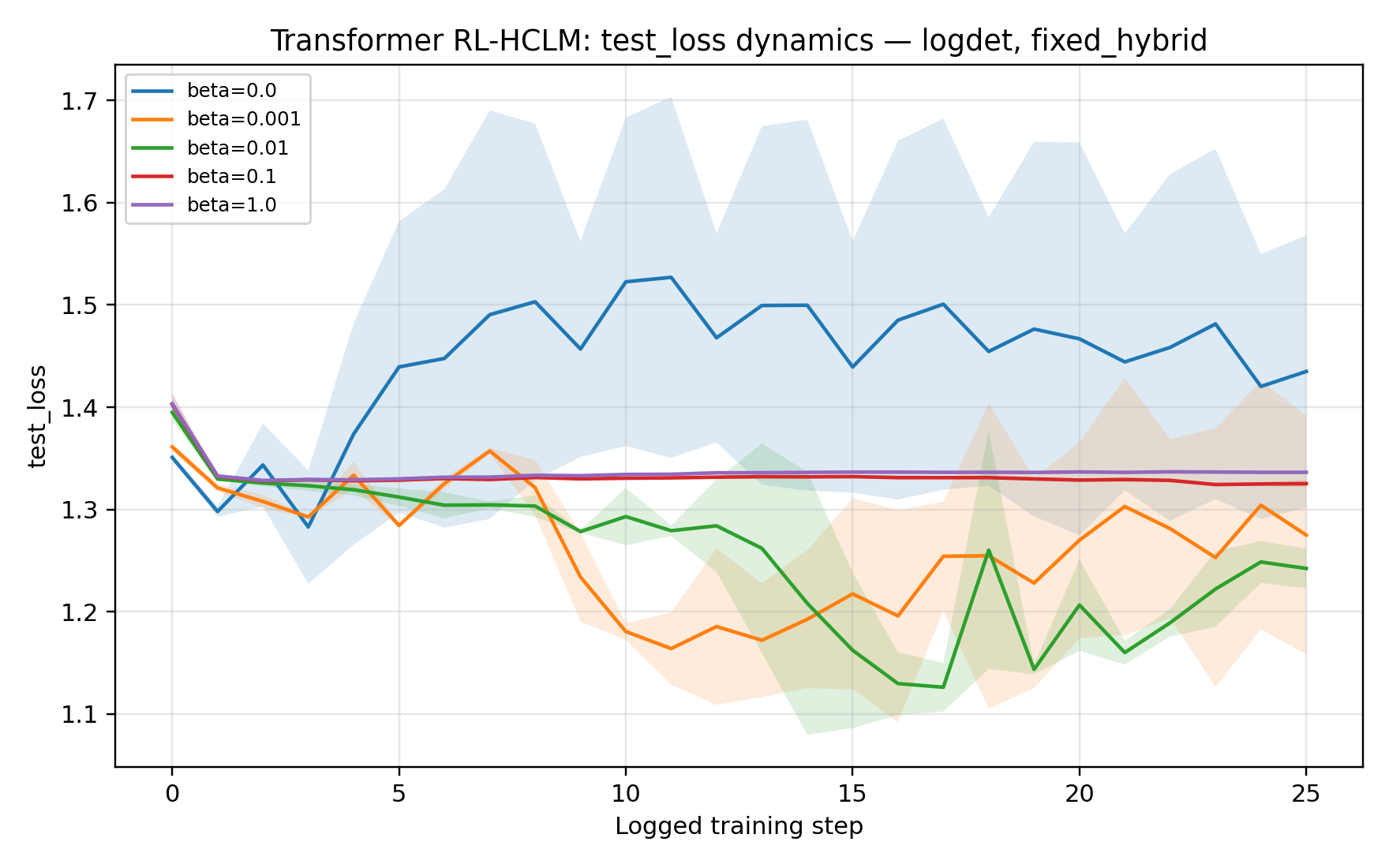}
\caption{Transformer RL-HCLM: test-loss dynamics for log-determinant entropy under fixed hybrid control. Small and intermediate \(\beta\) values stabilize learning, while \(\beta=0\) remains unstable.}
\label{fig:rl_logdet_fixed_test_loss_dynamics}
\end{figure}

\begin{figure}[H]
\centering
\safeincludegraphics[width=0.82\linewidth]{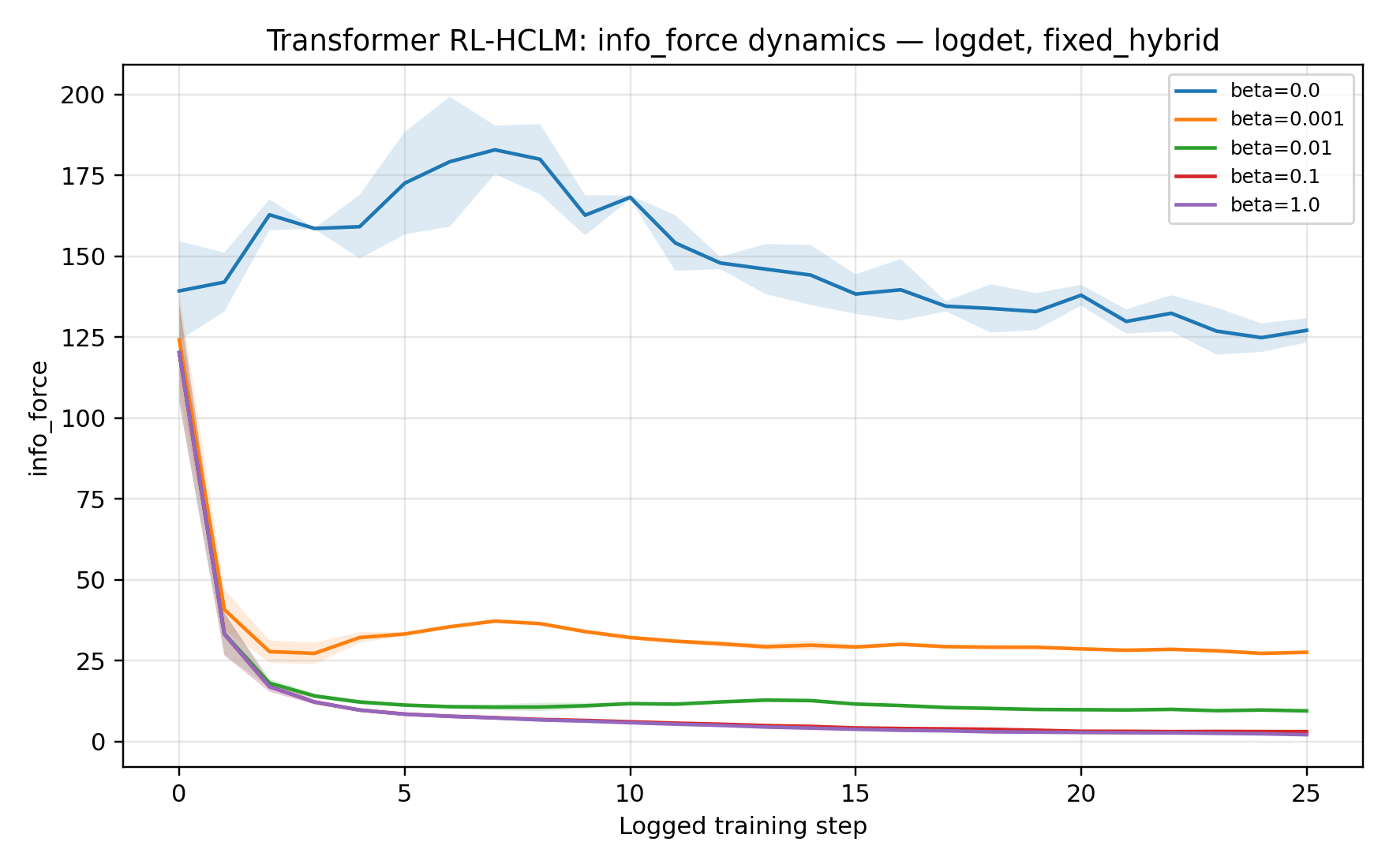}
\caption{Transformer RL-HCLM: information-force dynamics for log-determinant entropy under fixed hybrid control. Positive \(\beta\) values induce rapid force collapse, while \(\beta=0\) remains in a high-force regime.}
\label{fig:rl_logdet_fixed_info_force_dynamics}
\end{figure}

\begin{figure}[H]
\centering
\safeincludegraphics[width=0.82\linewidth]{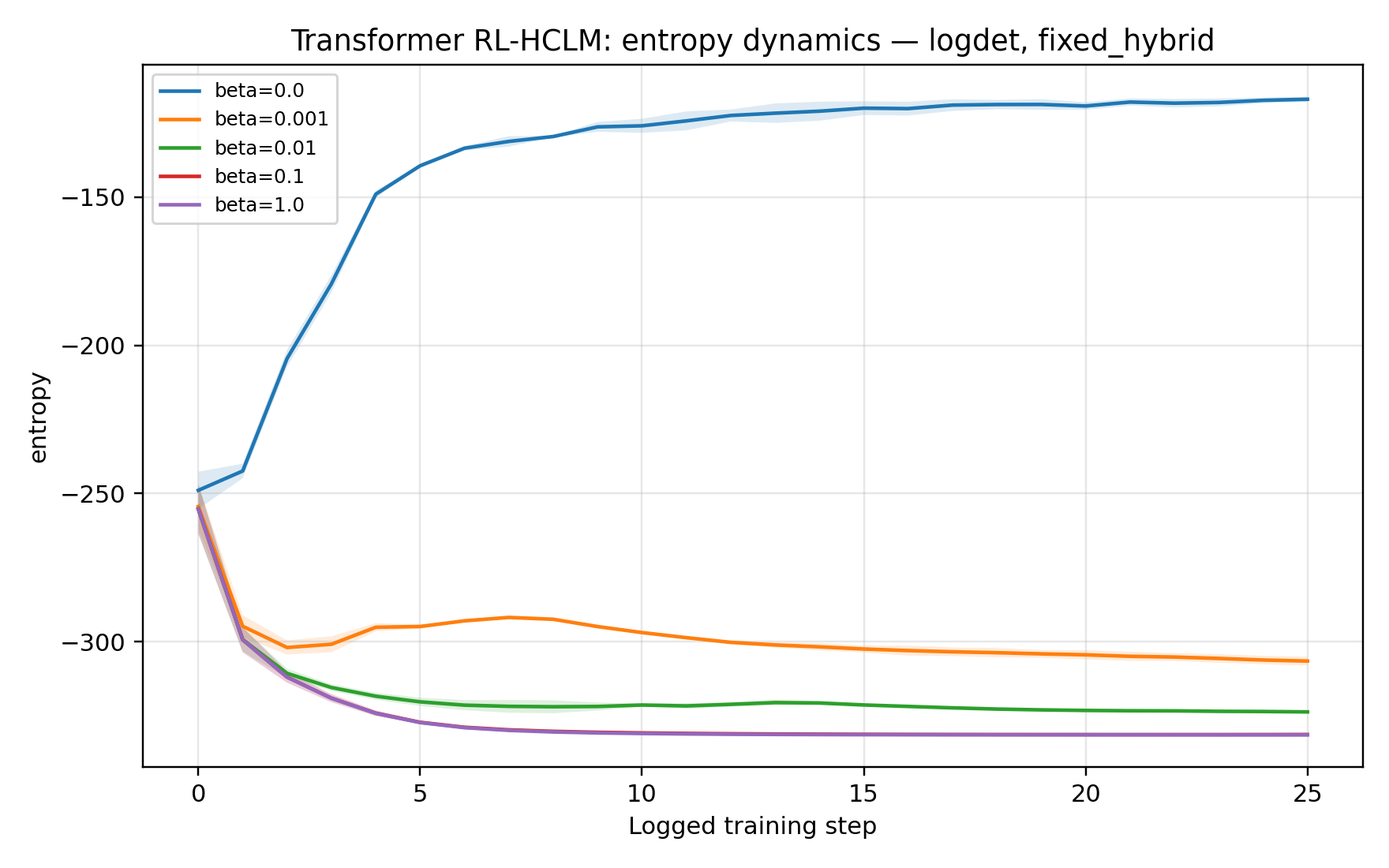}
\caption{Transformer RL-HCLM: entropy dynamics for log-determinant entropy under fixed hybrid control. Positive \(\beta\) shifts the system toward compression, whereas \(\beta=0\) follows an expansion trajectory.}
\label{fig:rl_logdet_fixed_entropy_dynamics}
\end{figure}

When adaptive Thermostat control is introduced, the system behavior becomes more resilient. Figure~\ref{fig:rl_logdet_thermostat_test_loss_dynamics} shows test loss stabilizing consistently across base \(\beta\) configurations. The information force (Figure~\ref{fig:rl_logdet_thermostat_info_force_dynamics}) collapses rapidly and remains bounded in a low-force regime. This is facilitated by the controller continuously adapting \(\beta_t\) after the initial transient phase (Figure~\ref{fig:rl_logdet_thermostat_beta_dynamics}), ensuring that the reward trajectory (Figure~\ref{fig:rl_logdet_thermostat_reward_dynamics}) remains high.

\begin{figure}[H]
\centering
\safeincludegraphics[width=0.82\linewidth]{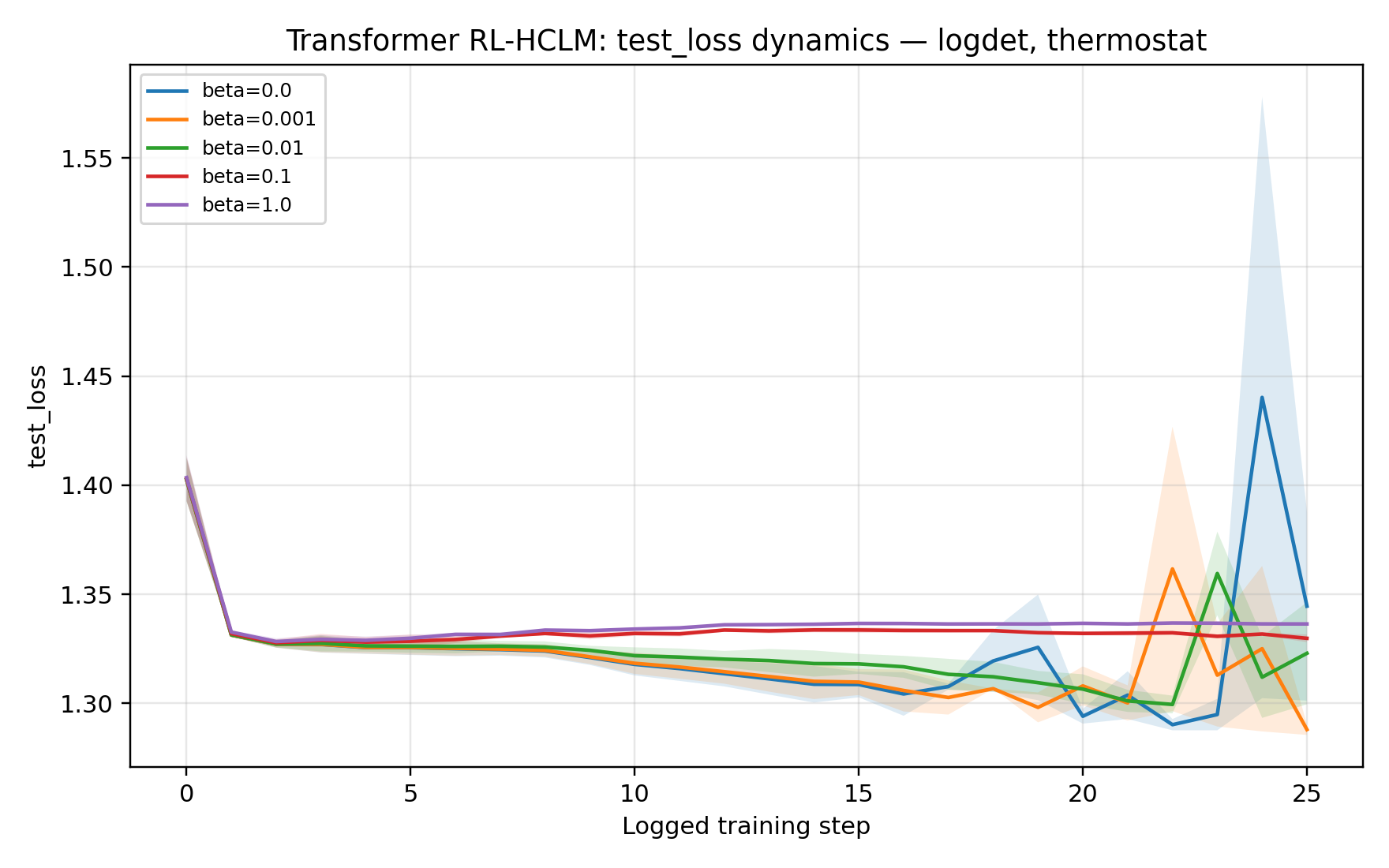}
\caption{Transformer RL-HCLM: test-loss dynamics for log-determinant entropy under thermostat control. The thermostat stabilizes learning across \(\beta\), with only mild late-stage fluctuations.}
\label{fig:rl_logdet_thermostat_test_loss_dynamics}
\end{figure}

\begin{figure}[H]
\centering
\safeincludegraphics[width=0.82\linewidth]{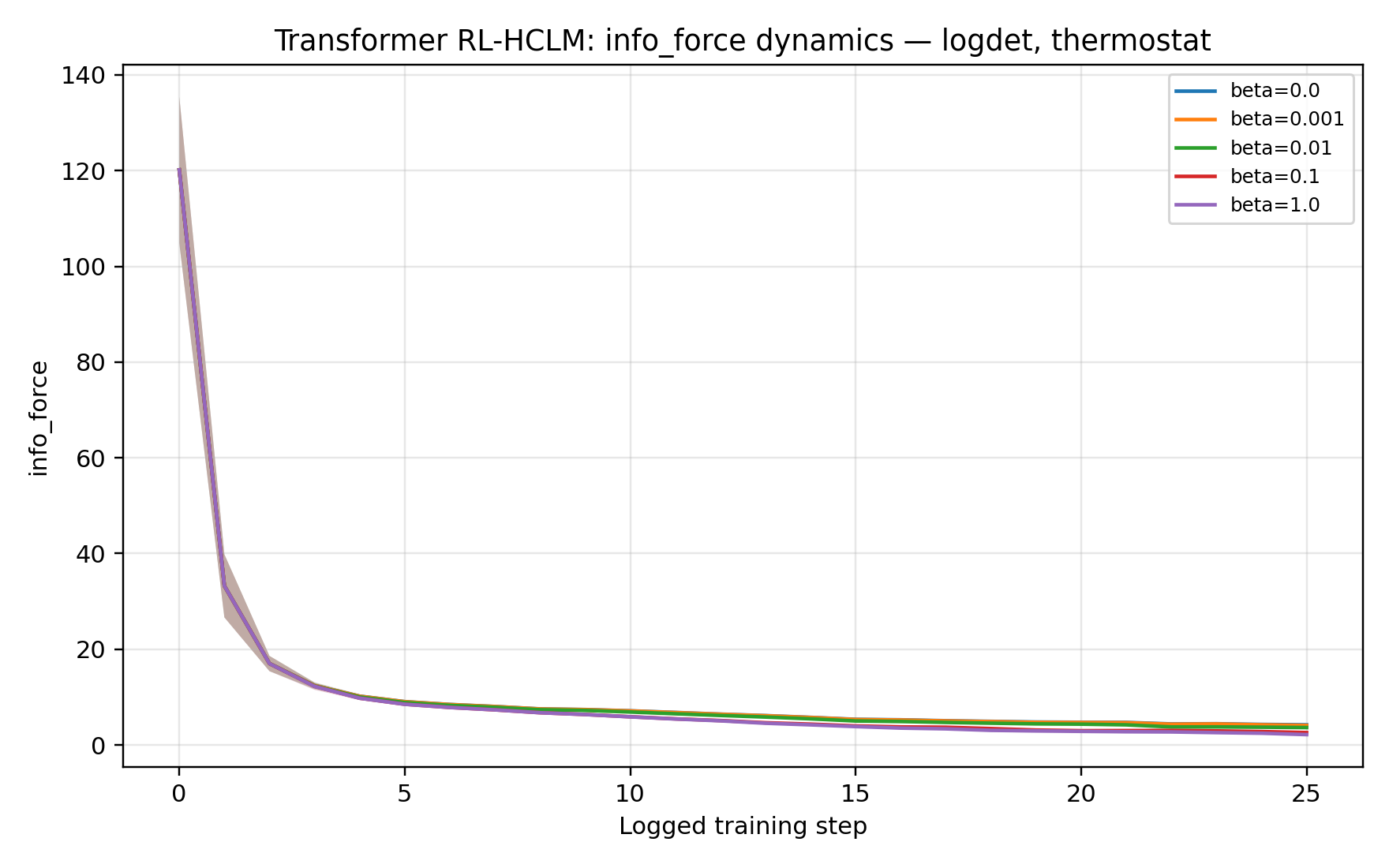}
\caption{Transformer RL-HCLM: information-force dynamics for log-determinant entropy under thermostat control. The information force collapses rapidly and then remains in a stable low-force regime.}
\label{fig:rl_logdet_thermostat_info_force_dynamics}
\end{figure}

\begin{figure}[H]
\centering
\safeincludegraphics[width=0.82\linewidth]{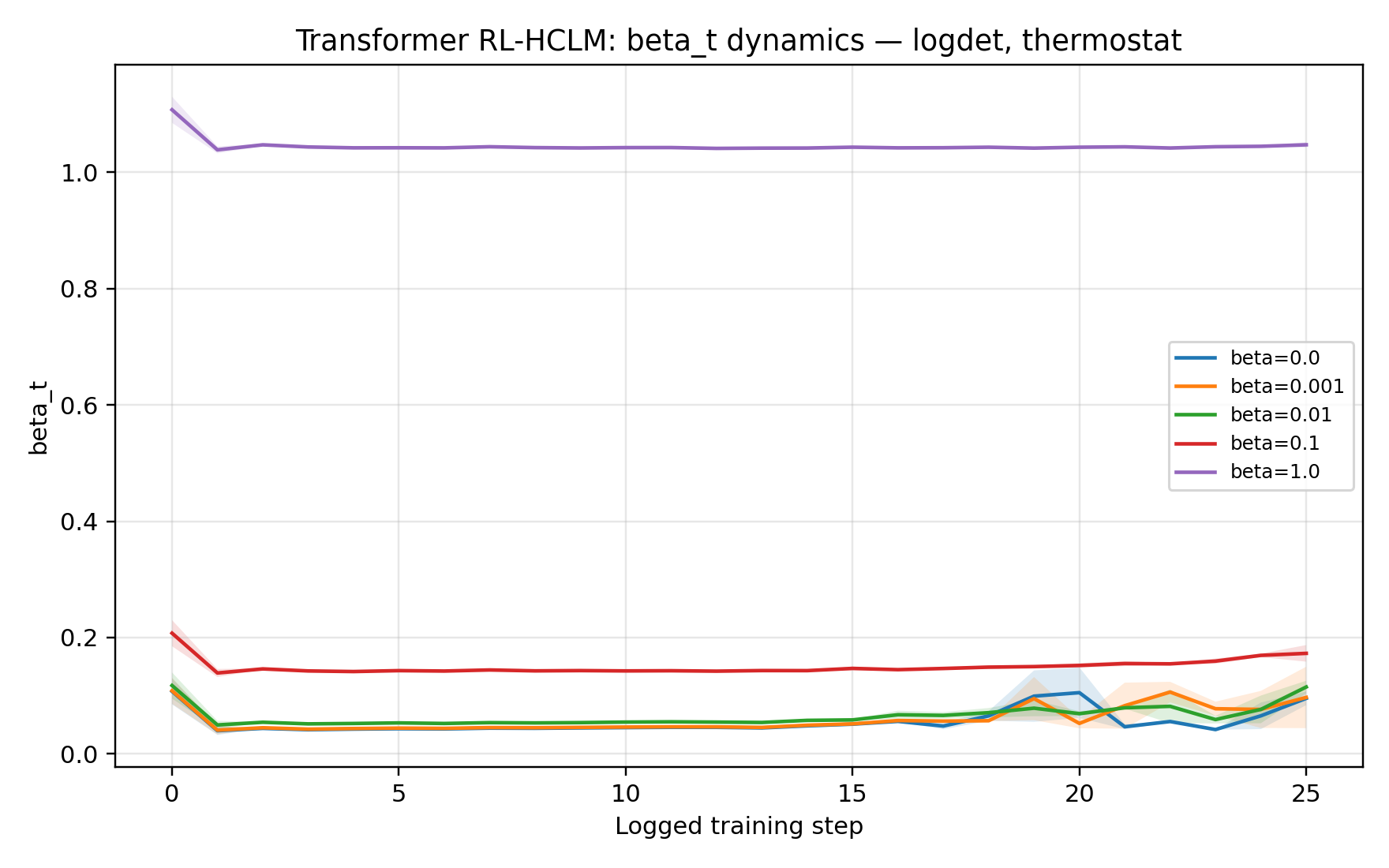}
\caption{Transformer RL-HCLM: adaptive \(\beta_t\) dynamics for log-determinant entropy under thermostat control. The controller maintains a stable dissipation coefficient after an initial transient.}
\label{fig:rl_logdet_thermostat_beta_dynamics}
\end{figure}

\begin{figure}[H]
\centering
\safeincludegraphics[width=0.82\linewidth]{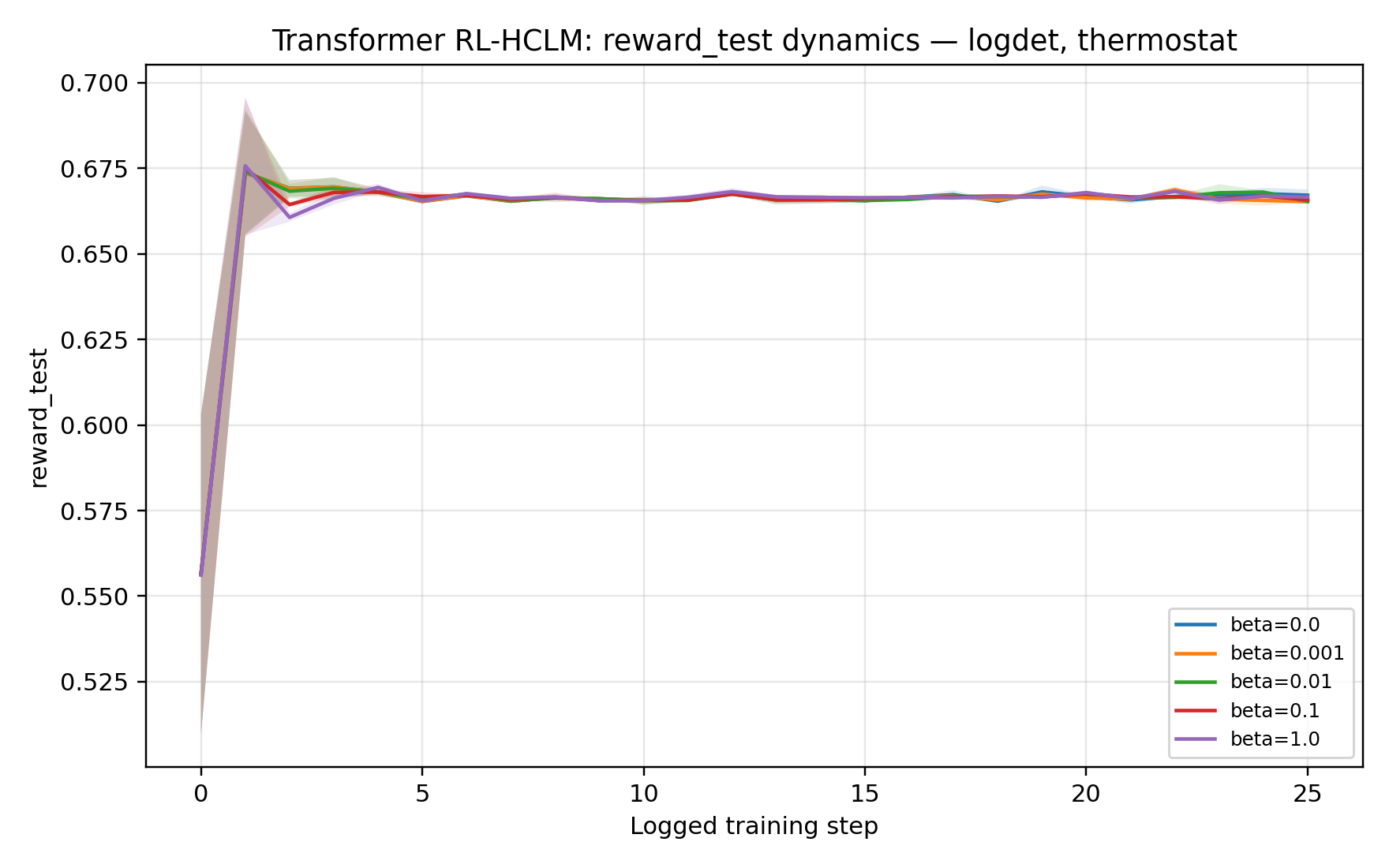}
\caption{Transformer RL-HCLM: reward dynamics for log-determinant entropy under thermostat control. Reward rapidly improves and remains stable, supporting the thermodynamic interpretation of alignment.}
\label{fig:rl_logdet_thermostat_reward_dynamics}
\end{figure}

Finally, under full RL-Thermostat control, the human/RL feedback loop fine-tunes the geometric constraints. Figure~\ref{fig:rl_logdet_rl_test_loss_dynamics} exhibits a stable low-loss trajectory. The force collapse is smooth and controlled (Figure~\ref{fig:rl_logdet_rl_info_force_dynamics}), while the RL feedback modulates the entropy coefficient within stable operational bounds (Figure~\ref{fig:rl_logdet_rl_beta_dynamics}). Ultimately, this orchestration yields the most stable and highest reward dynamics observed in the study (Figure~\ref{fig:rl_logdet_rl_reward_dynamics}), supporting the hypothesis that human/RL feedback can be interpreted as a form of adaptive geometric entropy control. This does not imply that alignment is reducible to entropy control alone, but it shows that entropy-regulated representation geometry can provide a useful dynamical lens for studying aligned adaptation.

\begin{figure}[H]
\centering
\safeincludegraphics[width=0.82\linewidth]{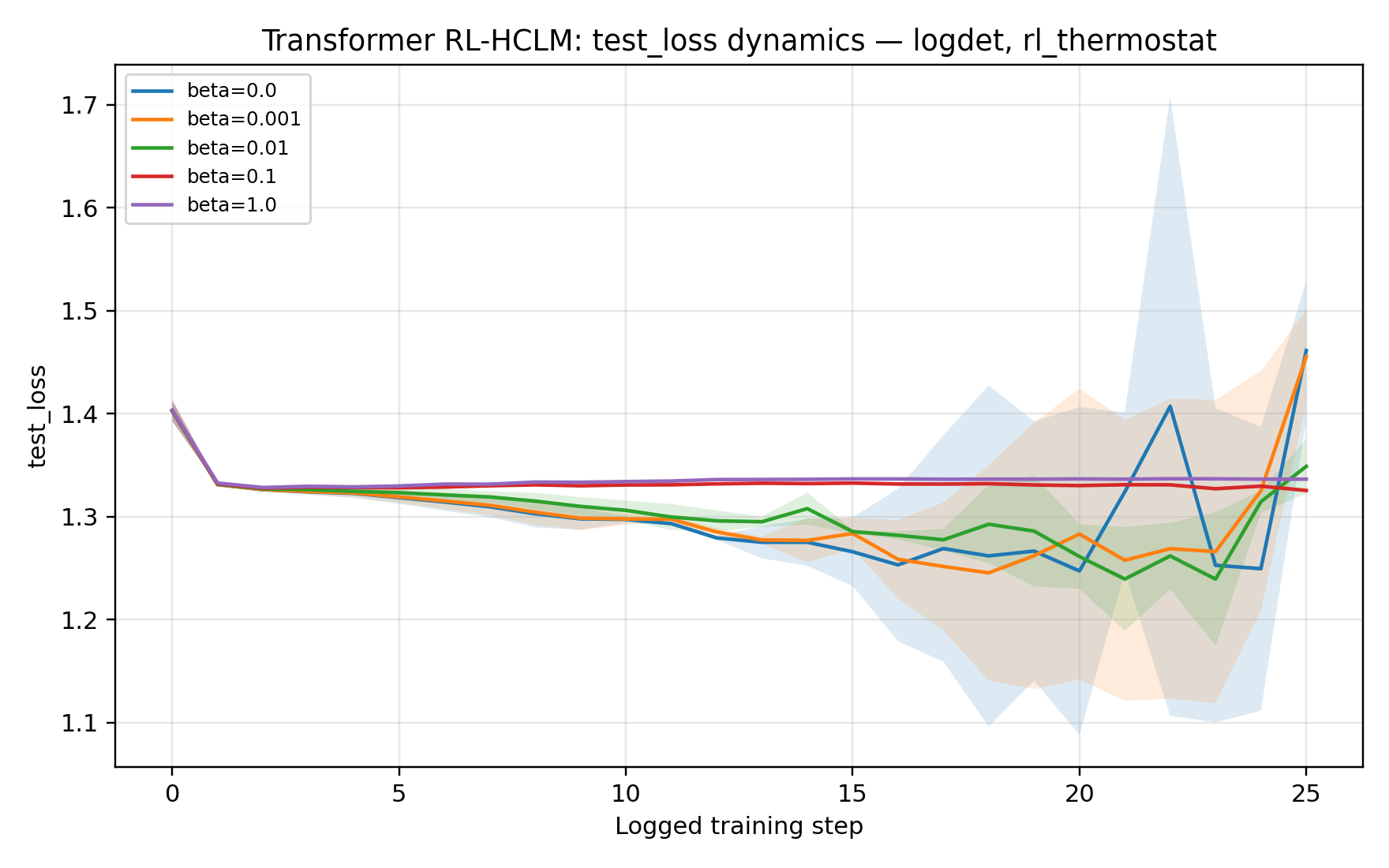}
\caption{Transformer RL-HCLM: test-loss dynamics for log-determinant entropy under RL-thermostat control. RL feedback preserves a stable low-loss trajectory while maintaining adaptive entropy control.}
\label{fig:rl_logdet_rl_test_loss_dynamics}
\end{figure}

\begin{figure}[H]
\centering
\safeincludegraphics[width=0.82\linewidth]{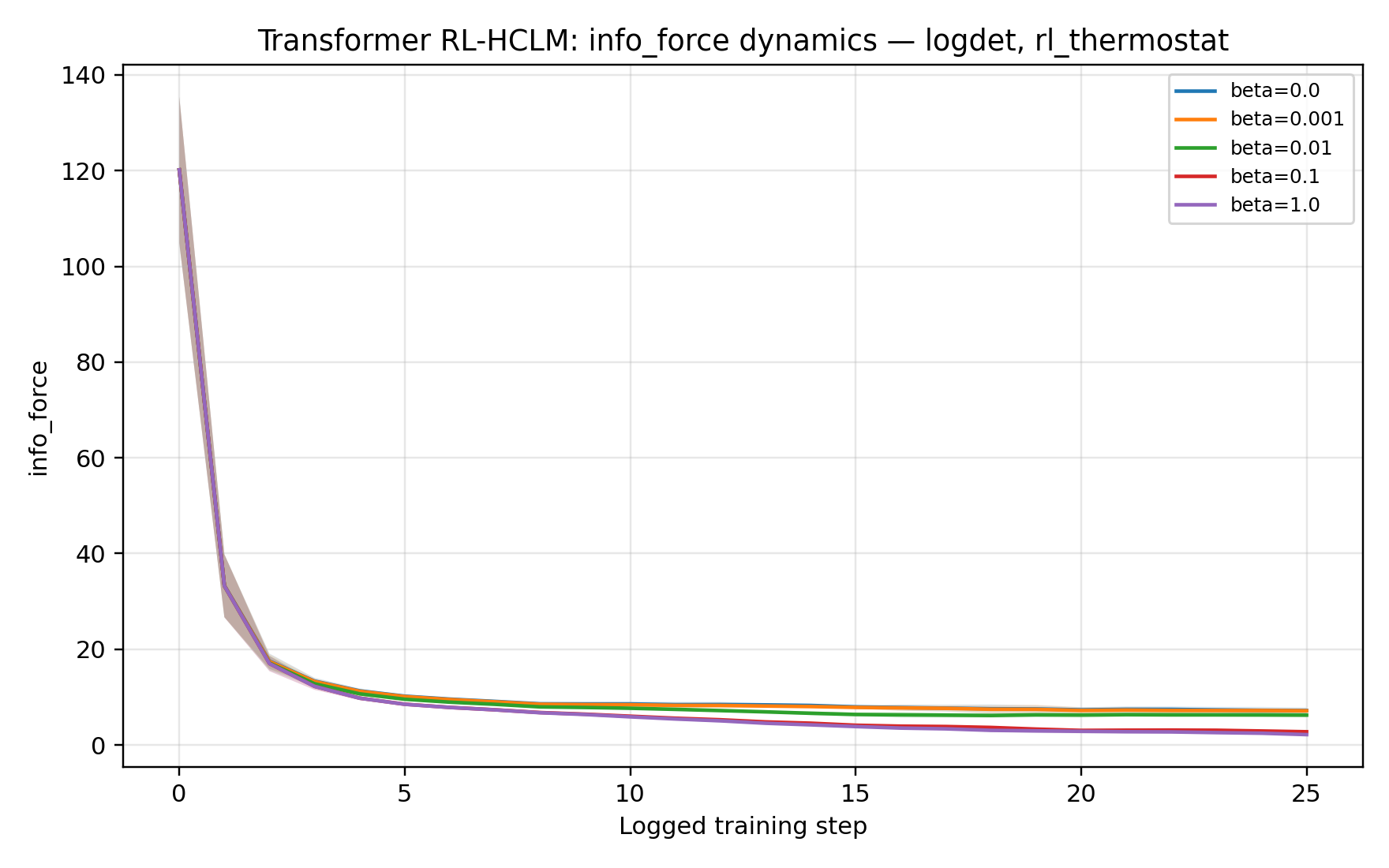}
\caption{Transformer RL-HCLM: information-force dynamics for log-determinant entropy under RL-thermostat control. The force collapses rapidly and remains controlled, similar to thermostat control.}
\label{fig:rl_logdet_rl_info_force_dynamics}
\end{figure}

\begin{figure}[H]
\centering
\safeincludegraphics[width=0.82\linewidth]{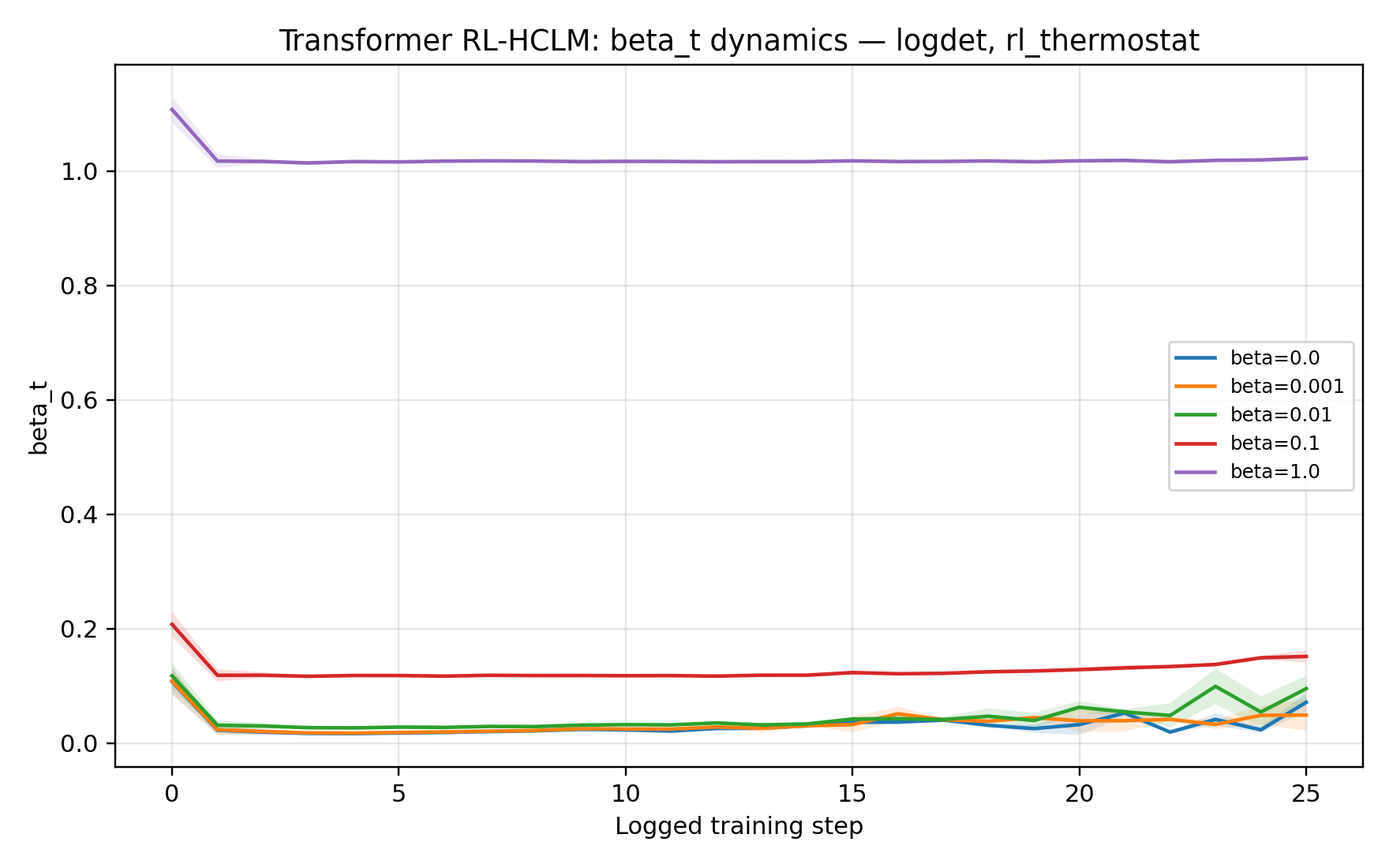}
\caption{Transformer RL-HCLM: adaptive \(\beta_t\) dynamics for log-determinant entropy under RL-thermostat control. RL feedback modulates the entropy coefficient while keeping it in a stable range.}
\label{fig:rl_logdet_rl_beta_dynamics}
\end{figure}

\begin{figure}[H]
\centering
\safeincludegraphics[width=0.82\linewidth]{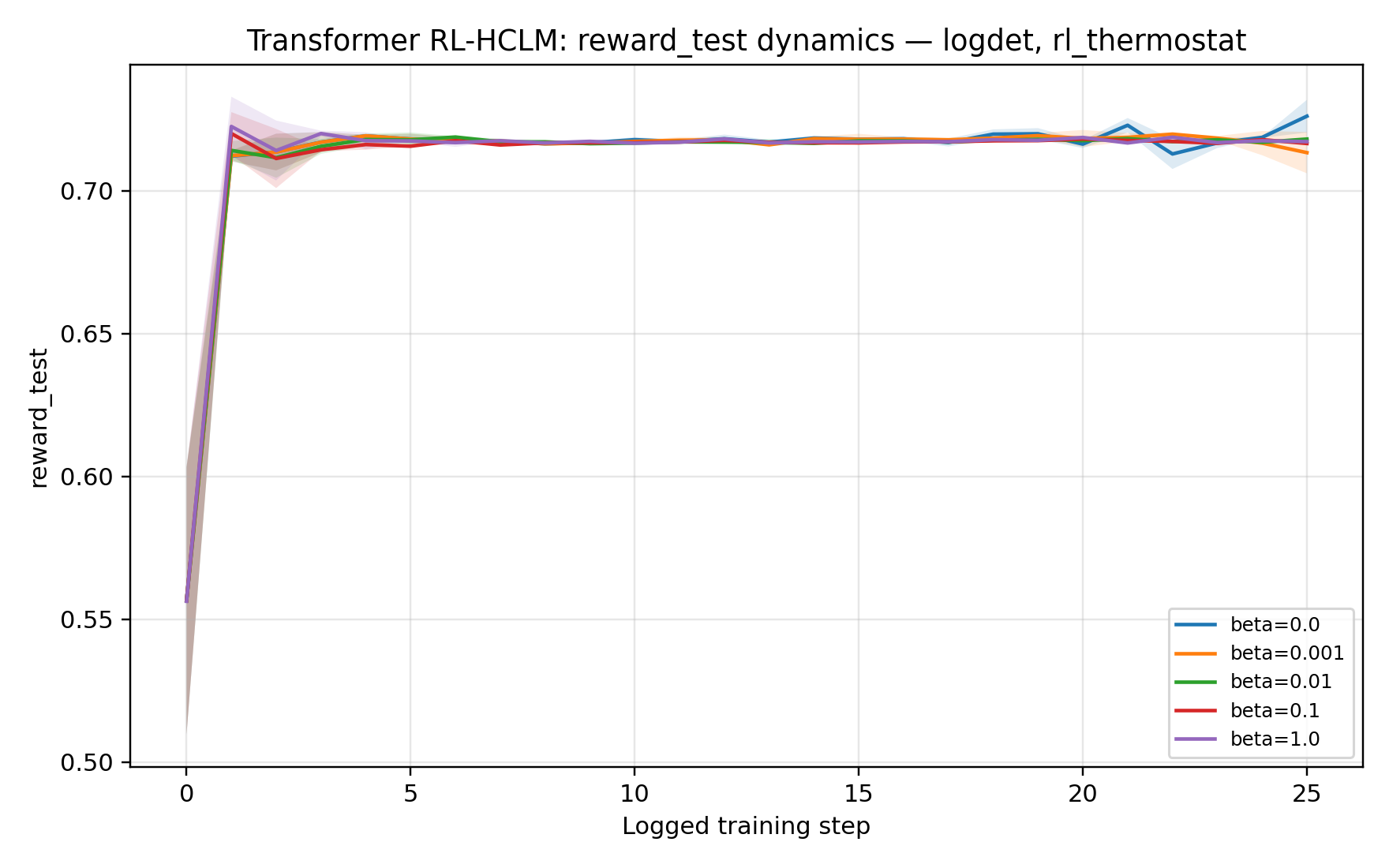}
\caption{Transformer RL-HCLM: reward dynamics for log-determinant entropy under RL-thermostat control. Reward remains high and stable after the initial adaptation phase.}
\label{fig:rl_logdet_rl_reward_dynamics}
\end{figure}

\section{Open Problems and Research Directions}

While the HCLM framework provides a coherent theoretical and empirical foundation for entropy-regulated learning dynamics, it also opens a broad set of fundamental questions at the intersection of information theory, dynamical systems, and practical AI deployment.

\subsection{Foundations of Effective Entropy}

A central concept introduced in this work is \emph{effective entropy}, defined through its induced information force along the optimization trajectory. However, this notion remains intrinsically trajectory-dependent. A major open problem is to characterize, in a predictive and architecture-independent manner, when an entropy surrogate is guaranteed to be dynamically effective.

Future research should aim to establish necessary and sufficient conditions for entropy effectiveness, potentially in terms of:
\begin{itemize}
    \item spectral properties of the representation Jacobian;
    \item curvature of the latent manifold;
    \item stability of covariance operators in high-dimensional regimes.
\end{itemize}

Such results would elevate effective entropy from an empirical diagnostic to a formal design principle.

\subsection{Scaling Laws from First Principles}

HCLM provides a mechanistic interpretation of scaling laws based on the balance between information injection and entropy dissipation. However, the current formulation relies on phenomenological power-law assumptions:
\[
I(S) \sim S^{\alpha}, \quad D(S) \sim S^{\gamma}.
\]

A key theoretical challenge is to derive these scaling behaviors from first principles. Promising directions include:
\begin{itemize}
    \item mean-field limits of wide neural networks;
    \item neural tangent kernel (NTK) regimes;
    \item stochastic differential equation formulations of training dynamics;
    \item Wasserstein gradient flows with scale-dependent potentials.
\end{itemize}

In particular, connecting the scaling exponent
\[
\kappa = \frac{\alpha - \gamma}{2}
\]
to architectural choices, data distributions, and optimization algorithms would provide a predictive theory of scaling.

\subsection{Thermostat Control and Learning Stability}

The thermostat formulation introduces a control-theoretic perspective on learning, where entropy dissipation is dynamically regulated by feedback signals. While conceptually appealing, several challenges remain:
\begin{itemize}
    \item stability analysis of closed-loop dynamics under stochastic gradients;
    \item robustness to noisy, delayed, or inconsistent human feedback;
    \item convergence guarantees under adaptive \(\beta_t\);
    \item interaction between entropy control and modern optimizers (Adam, momentum).
\end{itemize}

Bridging HCLM with control theory and adaptive systems is a promising direction toward stable and reliable AI.

\subsection{Reward Modeling and Alignment}

The RL-thermostat interpretation suggests that reward signals regulate information dissipation rather than directly optimizing policies. However, in real-world systems:
\begin{itemize}
    \item reward functions are imperfect proxies for human intent;
    \item feedback is sparse and delayed;
    \item alignment objectives may conflict across stakeholders.
\end{itemize}

Future work should investigate how to construct robust reward estimators \(R_H\), and how entropy-based control interacts with preference learning, inverse reinforcement learning, and RLHF paradigms.

\subsection{Large-Scale Empirical Validation}

The current experimental validation is intentionally controlled, focusing on synthetic environments to isolate dynamical mechanisms. A critical next step is to validate HCLM at scale:
\begin{itemize}
    \item convolutional architectures on CIFAR/ImageNet;
    \item transformer models on sequence and language tasks;
    \item comparison with regularization methods such as weight decay, dropout, SAM;
    \item evaluation under large-batch and distributed training regimes.
\end{itemize}

Such studies are essential to determine whether information-force collapse and entropy-balanced regimes persist in modern large-scale systems.

\subsection{Transient Memory, Blackout Catastrophe, and Trajectory-Level Information}
\label{subsec:transient_memory_hclm}

A further direction concerns the interpretation of memory in recurrent and associative
learning systems. Classical Hopfield networks and dense associative memory models are
typically analyzed through an equilibrium perspective: stored patterns are useful when they
correspond to stable attractors or basins of attraction in an energy landscape. Under this
view, exceeding the storage capacity leads to the so-called blackout catastrophe, where stable
memory states disappear and retrieval is considered to fail. Recent work on the transient
dynamics of associative memory models suggests a more nuanced picture. In particular,
dynamical mean-field analyses of Hopfield and dense associative memory models indicate
that stored patterns may still be transiently retrieved with high accuracy above the classical
capacity threshold, even when stable attractors no longer exist. This behavior arises because
slow regions can persist in the energy landscape as shallow and unstable remnants of the
stable basins that existed below capacity \citep{clark2025transient}.

This observation is closely aligned with the HCLM perspective. In HCLM, the usefulness of
an information structure is not determined solely by its asymptotic equilibrium state, but by
whether it induces an effective force along the learning or inference trajectory. Similarly, in
an overloaded associative memory, the disappearance of a stable attractor does not necessarily
imply that the corresponding memory has become dynamically irrelevant. A stored pattern may
remain useful if the system trajectory passes near it, aligns with it over a finite time window,
or slows down sufficiently for a readout mechanism to extract information.

Let \(\xi^\mu \in \{-1,+1\}^N\) denote a stored pattern and let \(z(t)\in\mathbb{R}^N\) denote the
state of the recurrent system at time \(t\). The standard overlap with pattern \(\mu\) is
\[
m_\mu(t)
=
\frac{1}{N}\sum_{i=1}^{N} z_i(t)\xi_i^\mu .
\]
A purely equilibrium-based analysis emphasizes the asymptotic quantity
\[
\lim_{t\to\infty} m_\mu(t),
\]
which may vanish above capacity. By contrast, a trajectory-level analysis considers the
finite-horizon transient recovery
\[
m_\mu^{\max}(T)
=
\max_{0\leq t\leq T} m_\mu(t).
\]
A memory can then be regarded as transiently recoverable over a horizon \(T\) if
\[
m_\mu^{\max}(T) \geq \tau,
\]
for some retrieval threshold \(\tau>0\), even when
\[
\lim_{t\to\infty} m_\mu(t)=0.
\]
This distinction separates the loss of asymptotic attractor stability from the disappearance
of usable information.

From the HCLM viewpoint, such transient retrieval can be interpreted as an instance of
trajectory-level information effectiveness. The relevant object is not only the existence of
a stable fixed point, but the presence of a non-negligible retrieval force along the trajectory.
For example, one may define a memory-effectiveness diagnostic over a finite horizon as
\[
\mathcal{E}_{\mathrm{mem}}^\mu(T)
=
\frac{1}{T}\int_0^T
\mathbf{1}\!\left\{ m_\mu(t)\geq \tau \right\}\,dt,
\]
or, more generally, a force-based diagnostic
\[
\mathcal{F}_{\mathrm{mem}}^\mu(T)
=
\frac{1}{T}\int_0^T
\left\|
\nabla_z m_\mu(z(t))
\right\|\,dt .
\]
Although the simple overlap gradient is model-dependent, the conceptual point is general:
memory may remain dynamically useful when it generates a finite-time alignment or slowing
effect, even if it does not define a stable attractor.

This suggests a broader lesson for controlled learning systems. Blackout should not always be
interpreted as complete information erasure. It may instead mark a transition from stable
attractor memory to transient information traces. In this regime, the timing of observation,
readout, or control becomes part of the computational mechanism. This is consistent with the
central HCLM principle that learning and memory should be analyzed as controlled information
flows along trajectories, rather than only through static equilibria or final convergence states.

This perspective also connects naturally to modern memory-augmented neural architectures.
Dense associative memories and modern Hopfield layers are closely related to attention-like
retrieval mechanisms, where useful computation may occur through transient alignment between
queries and stored representations rather than through convergence to a persistent state.
Future work should therefore investigate whether HCLM-style diagnostics---such as transient
overlap, finite-time information force, entropy-flow balance, and readout-time sensitivity---can
provide practical tools for analyzing memory layers, recurrent architectures, and transformer
attention under high-load or distribution-shift regimes.

\subsection{Applications in Real-World AI Systems}

Beyond theoretical analysis, HCLM naturally extends to domains where learning must operate under uncertainty, constraints, and human oversight.
\begin{itemize}

\item Human-centered decision systems: In healthcare, finance, and public policy, decisions require reliability, interpretability, and trust. HCLM provides a mechanism for regulating representation complexity through entropy dissipation, enabling models that are not only accurate but also stable and explainable.

\item Industrial AI and cyber-physical systems: Industrial environments involve noisy, heterogeneous, and distributed data streams. HCLM enables robust anomaly detection and predictive maintenance by preventing uncontrolled information accumulation and stabilizing learning dynamics in real time.

\item Edge AI and resource-constrained learning: On-device learning requires strict control of computation and energy. Entropy dissipation can be interpreted as a form of complexity control, suggesting new approaches to adaptive model compression, pruning, and efficient inference.

\item Federated and distributed learning: HCLM provides a natural interpretation of federated learning as distributed information flow. Entropy-based regulation may guide aggregation strategies that are robust to non-i.i.d. data and heterogeneous clients.

\item Reinforcement learning and alignment: The thermostat view reframes alignment as regulating information dynamics rather than directly optimizing rewards. This perspective may lead to more stable alternatives to RLHF, mitigating abrupt policy shifts and improving robustness.

\item Scaling strategies for large models: The HCLM interpretation of scaling laws suggests that successful scaling requires maintaining a balance between information injection and entropy dissipation. This opens the possibility of entropy-aware scaling strategies for foundation models and large language models.
\end{itemize}

\subsection{Limitations of the Current Formulation}

Several limitations remain. First, the empirical validation is controlled and mechanism-oriented; it does not establish superiority over strong large-scale baselines. Second, information-force magnitudes are representation- and parameterization-dependent, and the Euclidean norm used in experiments should eventually be replaced or complemented by metric-aware quantities such as Fisher- or Gauss--Newton-adjusted norms. Third, log-determinant covariance entropy can be computationally expensive for high-dimensional representations and may require low-rank, blockwise, or stochastic approximations. Fourth, thermostat control depends on the quality and calibration of reward or human-feedback signals. Finally, the scaling-law interpretation is conditional and phenomenological; it should not be read as a first-principles derivation of empirical neural scaling laws.

\subsection{Towards a Science of Controlled Learning Systems}

Taken together, these directions suggest a broader paradigm shift. HCLM positions learning systems as \emph{open, controlled, non-equilibrium processes} rather than static optimization problems. This perspective invites the development of a unified theory integrating:
\begin{itemize}
    \item statistical learning theory (generalization and risk),
    \item dynamical systems (stability and convergence),
    \item information theory (entropy and representation),
    \item control theory (feedback and regulation).
\end{itemize}

In this view, future AI systems will not merely learn from data, but will actively regulate their own information dynamics in interaction with humans, environments, and constraints. Establishing such a science of controlled learning systems remains an open and ambitious challenge.

\section{Conclusion}

We established that entropy in deep learning is not universally beneficial by mere inclusion; it must generate a non-degenerate information force to shape learning dynamics effectively. Through \emph{Human-Centered Learning Mechanics} (HCLM), we formulated learning as an open, controlled dynamical process in which prediction, entropy dissipation, structural constraints, and human or reward-based feedback jointly shape representation geometry.

The revised formulation leads to three key clarifications. First, entropy regularization should be analyzed through its induced force along the optimization trajectory, rather than through its scalar value alone. Second, generalization should not rely on an unsupported equivalence between parameter-space PAC-Bayes complexity and representation entropy; instead, noisy representation compression provides a more explicit mechanism linking geometric entropy control to generalization behavior. Third, scaling-law behavior should be interpreted conditionally: HCLM explains how power-law-like performance can emerge when information injection, entropy dissipation, and risk response satisfy compatible scale-dependent relationships.

Empirically, our results show that geometric entropy surrogates, particularly log-determinant covariance entropy, induce stronger and more stable information forces than softmax-based alternatives, leading to clearer regimes of controlled information-force collapse. The adaptive thermostat and RL-thermostat mechanisms further suggest that human or reward-based feedback can be interpreted as a control signal regulating entropy dissipation, rather than as a direct perturbation of model parameters.

HCLM should therefore be viewed as a foundation for studying controlled learning dynamics, not as a complete theory of deep learning. Its main contribution is to make entropy regularization testable through the information force it induces. Future work should extend this mechanism-level analysis to realistic vision, language, industrial, and human-in-the-loop systems, with the longer-term goal of contributing toward a more unified understanding of controlled learning dynamics.

\acks{The author thanks the reviewers and editors for their time and constructive feedback. The author declares no competing interests. Funding information, if applicable, should be added here before final submission.}

\newpage
\appendix
\section{Proofs of Theoretical Results}

\subsection{Proof of Proposition~\ref{prop:degenerate_entropy} (Degenerate Collapse)}
\label{app:proof_degenerate}

\begin{proof}
The full update is
\[
\theta_{t+1}
=
\theta_t-\eta
\left(
\nabla\mathcal{L}_{\mathrm{pred}}(\theta_t)
+
\beta\nabla H(\theta_t)
+
\gamma\nabla\Omega(\theta_t)
+
\lambda\nabla\mathcal{R}_{\mathrm{dec}}(\theta_t)
\right).
\]
If \(\|\nabla H(\theta_t)\|=o(\|\nabla\mathcal{L}_{\mathrm{pred}}(\theta_t)\|)\) and \(\beta<\infty\), then
\[
\|\beta\nabla H(\theta_t)\|
=
o(\|\nabla\mathcal{L}_{\mathrm{pred}}(\theta_t)\|).
\]
Thus the entropy-induced update is a vanishing perturbation relative to the predictive-loss gradient. When \(\gamma=\lambda=0\), we obtain
\[
\theta_{t+1}
=
\theta_t-\eta\nabla\mathcal{L}_{\mathrm{pred}}(\theta_t)+o(\eta\|\nabla\mathcal{L}_{\mathrm{pred}}(\theta_t)\|),
\]
which is gradient descent on the predictive loss up to a vanishing perturbation.
\end{proof}

\subsection{Proof of Theorem~\ref{thm:convergence} (Convergence to Stationarity)}
\label{app:proof_convergence}

\begin{proof}
By Assumption~\ref{ass:local_smooth}, using the update rule \(\theta_{t+1}=\theta_t-\eta\nabla\mathcal{F}(\theta_t)\), we substitute into the inequality:
\[
\mathcal{F}(\theta_{t+1})
\leq
\mathcal{F}(\theta_t)
-
\eta\|\nabla\mathcal{F}(\theta_t)\|^2
+
\frac{L\eta^2}{2}
\|\nabla\mathcal{F}(\theta_t)\|^2.
\]
Since \(0<\eta<1/L\), we have \(1-L\eta/2>1/2\). Therefore,
\[
\mathcal{F}(\theta_{t+1})
\leq
\mathcal{F}(\theta_t)
-
\frac{\eta}{2}\|\nabla\mathcal{F}(\theta_t)\|^2.
\]
Summing from \(t=0\) to \(T-1\) and using \(\mathcal{F}(\theta_T)\geq\mathcal{F}^\star\), we obtain
\[
\sum_{t=0}^{T-1}
\|\nabla\mathcal{F}(\theta_t)\|^2
\leq
\frac{2(\mathcal{F}(\theta_0)-\mathcal{F}^\star)}{\eta}.
\]
Consequently,
\[
\min_{0\leq t\leq T-1}
\|\nabla\mathcal{F}(\theta_t)\|^2
\leq
\frac{2(\mathcal{F}(\theta_0)-\mathcal{F}^\star)}{\eta T}.
\]
Letting \(T\to\infty\) gives \(\liminf_{t\to\infty}\|\nabla\mathcal{F}(\theta_t)\|=0\).
\end{proof}

\subsection{Proof of Theorem~\ref{thm:entropy_flow} (Entropy-Flow Identity)}
\label{app:proof_entropy_flow}

\begin{proof}
By the chain rule,
\[
\frac{d}{dt}H(\theta_t)
=
\nabla H(\theta_t)^\top\frac{d\theta_t}{dt}.
\]
Substituting the continuous-time HCLM flow
\[
\frac{d\theta}{dt}
=
-\nabla\mathcal{L}(\theta)-\beta\nabla H(\theta)
\]
gives
\[
\frac{d}{dt}H(\theta_t)
=
\nabla H(\theta_t)^\top
\left(
-\nabla\mathcal{L}(\theta_t)-\beta\nabla H(\theta_t)
\right).
\]
Expanding the inner product yields
\[
\frac{d}{dt}H(\theta_t)
=
-\nabla H(\theta_t)^\top\nabla\mathcal{L}(\theta_t)
-
\beta\|\nabla H(\theta_t)\|^2.
\]
\end{proof}

\subsection{Proof of Proposition~\ref{prop:critical} (Critical Coefficient)}
\label{app:proof_critical}

\begin{proof}
Set the right-hand side of Theorem~\ref{thm:entropy_flow} to zero:
\[
-\nabla H^\top\nabla\mathcal{L}
-
\beta\|\nabla H\|^2
=
0.
\]
If \(\nabla H(\theta_t)\neq 0\), solving for \(\beta\) gives
\[
\beta_c(t)
=
\frac{-\nabla H(\theta_t)^\top\nabla\mathcal{L}(\theta_t)}
{\|\nabla H(\theta_t)\|^2}
=
\frac{I_\theta(t)}{\|\nabla H(\theta_t)\|^2}.
\]
Substitution into the entropy-flow identity gives \(\frac{d}{dt}H(\theta_t)=0\).
\end{proof}

\subsection{Proof of Theorem~\ref{thm:rep_generalization} and Corollary~\ref{cor:entropy_scaling}}
\label{app:proof_rep_generalization}

\begin{proof}[Proof of Theorem~\ref{thm:rep_generalization}]
For \(\sigma\)-sub-Gaussian losses, standard information-theoretic generalization inequalities imply that, for a predictor depending on the training sample through a representation variable \(\widetilde{Z}_\theta\),
\[
\left|
\mathbb{E}
\left[
\mathcal{L}(\theta)-\widehat{\mathcal{L}}_S(\theta)
\right]
\right|
\leq
\sqrt{
\frac{2\sigma^2 I(X;\widetilde{Z}_\theta)}{n}
}.
\]
By Assumption~\ref{ass:rep_compression},
\[
I(X;\widetilde{Z}_\theta)
\leq
A\widetilde{\mathcal{H}}_{\mathrm{logdet}}(\widetilde{Z}_\theta)+B.
\]
Substituting this inequality gives
\[
\left|
\mathbb{E}
\left[
\mathcal{L}(\theta)-\widehat{\mathcal{L}}_S(\theta)
\right]
\right|
\leq
\sqrt{
\frac{2\sigma^2}{n}
\left(
A\widetilde{\mathcal{H}}_{\mathrm{logdet}}(\widetilde{Z}_\theta)+B
\right)
}.
\]
\end{proof}

\begin{proof}[Proof of Corollary~\ref{cor:entropy_scaling}]
If
\[
\widetilde{\mathcal{H}}_{\mathrm{logdet}}(\widetilde{Z}_\theta)
=
O(n^\alpha),
\]
then Theorem~\ref{thm:rep_generalization} gives
\[
\mathcal{L}_{\mathrm{gen}}
=
O\left(
\sqrt{
\frac{n^\alpha}{n}
}
\right)
=
O\left(n^{(\alpha-1)/2}\right).
\]
The bound vanishes when \(\alpha<1\), showing that sublinear entropy growth is required for this diagnostic to predict generalization improvement.
\end{proof}

\subsection{Proofs for the Wasserstein Flow}
\label{app:proof_wasserstein}

\begin{proof}[Proof of Theorem~\ref{thm:free_energy}]
The first variation of Eq.~\eqref{eq:free_energy_functional} is
\[
\frac{\delta\mathcal{E}}{\delta\rho}
=
\mathcal{U}(\theta)+\beta(1+\log\rho).
\]
Using the Wasserstein gradient flow
\[
\partial_t\rho_t
=
\nabla_\theta\cdot
\left(
\rho_t
\nabla_\theta
\frac{\delta\mathcal{E}}{\delta\rho}
\right),
\]
we obtain
\[
\frac{d}{dt}\mathcal{E}(\rho_t)
=
\int
\frac{\delta\mathcal{E}}{\delta\rho}
\partial_t\rho_t
d\theta.
\]
Substituting the flow equation and integrating by parts under the assumed decay conditions yields
\[
\frac{d}{dt}\mathcal{E}(\rho_t)
=
-
\int
\rho_t
\left\|
\nabla_\theta
\frac{\delta\mathcal{E}}{\delta\rho}
\right\|^2
d\theta
\leq 0.
\]
\end{proof}

\subsection{Proof of Theorem~\ref{thm:entropy_production}}
\label{app:proof_entropy_production}

\begin{proof}
Let
\[
\mathcal{S}(\rho_t)
=
-\int\rho_t\log\rho_t\,d\theta.
\]
Then
\[
\frac{d}{dt}\mathcal{S}(\rho_t)
=
-\int (1+\log\rho_t)\partial_t\rho_t\,d\theta.
\]
Using
\[
\partial_t\rho_t
=
\nabla\cdot(\rho_t\nabla\mathcal{U})
+
\beta\Delta\rho_t,
\]
the drift term becomes
\[
-\int(1+\log\rho_t)\nabla\cdot(\rho_t\nabla\mathcal{U})d\theta
=
\int
\nabla\log\rho_t\cdot\rho_t\nabla\mathcal{U}\,d\theta.
\]
Since \(\nabla\rho_t=\rho_t\nabla\log\rho_t\), integration by parts gives
\[
\int
\nabla\rho_t\cdot\nabla\mathcal{U}\,d\theta
=
-\int
\rho_t\Delta\mathcal{U}\,d\theta.
\]
The diffusion term is
\[
-\beta\int(1+\log\rho_t)\Delta\rho_t\,d\theta
=
\beta
\int
\frac{\|\nabla\rho_t\|^2}{\rho_t}
d\theta
=
\beta
\int
\rho_t\|\nabla\log\rho_t\|^2d\theta.
\]
Combining the drift and diffusion terms yields
\[
\frac{d}{dt}\mathcal{S}(\rho_t)
=
-\int\rho_t\Delta\mathcal{U}\,d\theta
+
\beta
\int
\rho_t\|\nabla\log\rho_t\|^2d\theta.
\]
\end{proof}

\subsection{Proof of Proposition~\ref{prop:force_stab} (Information-Force Stabilization)}
\label{app:proof_force_stab}

\begin{proof}
Let
\[
G(t)=\|\nabla H(\theta_t)\|^2.
\]
Differentiating with respect to time gives
\[
\frac{dG}{dt}
=
2\nabla H^\top\nabla^2H\frac{d\theta_t}{dt}.
\]
Using the continuous dynamics
\[
\frac{d\theta_t}{dt}
=
-\nabla\mathcal{L}(\theta_t)-\beta\nabla H(\theta_t),
\]
we obtain
\[
\frac{dG}{dt}
=
-2\nabla H^\top\nabla^2H\nabla\mathcal{L}
-
2\beta\nabla H^\top\nabla^2H\nabla H.
\]
By the stated assumptions, outside a ball of radius \(R\),
\[
\frac{dG}{dt}
\leq
2C_L\|\nabla H\|
-
2\beta m_H\|\nabla H\|^2.
\]
Since \(G=\|\nabla H\|^2\), this becomes
\[
\frac{dG}{dt}
\leq
2C_L\sqrt{G}
-
2\beta m_H G.
\]
For sufficiently large \(G\), the negative quadratic-in-\(\sqrt{G}\) term dominates the positive linear term. Thus there exists a finite threshold \(G^\star\) such that \(dG/dt<0\) whenever \(G>G^\star\). Standard comparison arguments then imply that \(G(t)\) enters and remains in a bounded attracting region.
\end{proof}

\subsection{Proof of Proposition~\ref{prop:scaling_balance}}
\label{app:proof_scaling}

\begin{proof}
By the scale assumptions,
\[
I(S)=aS^\alpha,
\qquad
D(S)=bS^\gamma,
\]
with \(a,b>0\). Therefore, the effective information ratio is
\[
R(S)
=
\frac{I(S)}{D(S)}
=
\frac{aS^\alpha}{bS^\gamma}
=
\frac{a}{b}S^{\alpha-\gamma}.
\]
Under Assumption~\ref{ass:risk_response},
\[
\mathcal{L}(S)-\mathcal{L}_\infty
=
\Psi(R(S)).
\]
In the locally balanced regime, \(\Psi(r)\asymp r^{-q}\) with \(q>0\). Hence
\[
\mathcal{L}(S)-\mathcal{L}_\infty
\asymp
\left(
\frac{a}{b}S^{\alpha-\gamma}
\right)^{-q}
=
\left(\frac{a}{b}\right)^{-q}
S^{-q(\alpha-\gamma)}.
\]
If \(\alpha>\gamma\), this yields a decreasing power-law excess loss:
\[
\mathcal{L}(S)-\mathcal{L}_\infty
\asymp
S^{-q(\alpha-\gamma)}.
\]
For \(q=1/2\), we obtain
\[
\mathcal{L}(S)-\mathcal{L}_\infty
\asymp
S^{-(\alpha-\gamma)/2}.
\]
\end{proof}

\begin{remark}[Stability and breakdown regimes]
Proposition~\ref{prop:scaling_balance} should be interpreted as a conditional scaling model. The exponent
\[
\kappa=q(\alpha-\gamma)
\]
is positive only when information injection grows faster than entropy dissipation, i.e. \(\alpha>\gamma\). If \(\alpha=\gamma\), then \(R(S)\) remains asymptotically constant and the model predicts no scale-driven improvement beyond constant factors. If \(\alpha<\gamma\), dissipation dominates injection and the excess loss may plateau or increase with scale. Conversely, if \(\alpha\gg\gamma\), the ratio \(R(S)\) may grow too rapidly, producing uncontrolled representation expansion, instability, or overfitting. Finally, if the entropy surrogate is dynamically ineffective, meaning
\[
\|\nabla H(\theta_t)\|\approx 0,
\]
then \(D(S)\) no longer represents an operational entropy-dissipation mechanism, and the balance interpretation loses its learning-dynamical meaning.
\end{remark}

\bibliography{references_hclm_jmlr}

\end{document}